\def\ps@pprintTitle{%
 \let\@oddhead\@empty
 \let\@evenhead\@empty
 \def\@oddfoot{}%
 \let\@evenfoot\@oddfoot}
\theoremstyle{plain}
    \newtheorem{theorem}{Theorem}
    \newtheorem{corollary}[theorem]{Corollary}
    \newtheorem{lemma}[theorem]{Lemma}
    \newtheorem{proposition}[theorem]{Proposition}
    \newtheorem{observation}[theorem]{Observation}
\newcounter{claim}[theorem]
\newtheorem{claim}[claim]{Claim}
    \newtheorem{fact}{Fact}
    \newtheorem{definition}[theorem]{Definition}
    \newtheorem{example}[theorem]{Example}
    \newtheorem*{theorem*}{Theorem}
    \newtheorem*{corollary*}{Corollary}
    \newtheorem*{lemma*}{Lemma}
    \newtheorem*{proposition*}{Proposition}
    \newtheorem*{claim*}{Claim}
    \newtheorem*{definition*}{Definition}
    \newtheorem*{example*}{Example}
\DeclarePairedDelimiter\abs{\lvert}{\rvert}
\mathchardef\mhyphen="2D
\newcommand*{\range}[2]{\{#1,\ldots,#2\}}
\newcommand*{\rationals}{\mathbb{Q}}
\newcommand*{\naturals}{\mathbb{N}}
\newcommand{\spii}{\ \ }
\newcommand{\spiv}{\ \ \ \ }
\newcommand{\spv}{\ \ \ \ \ }
\newcommand{\stexttt}[1]{{\tiny\tt #1}}
\newcommand{\cc}[1]{{\mbox{\textnormal{\textsf{#1}}}}\xspace}  
\newcommand{\NP}{\cc{NP}}
\newcommand{\FPT}{\cc{FPT}}
\newcommand{\XP}{\cc{XP}}
\def\inst{{\cal I}}
\def\cert{{\cal C}}
\def\A{{\bf A}}
\def\B{{\bf B}}
\def\C{{\bf C}}
\def\D{{\bf D}}
\def\I{{\bf I}}
\def\G{{\bf \Gamma}}
\def\T{{\bf \Theta}}
\def\CDClang{\ensuremath{\B_{\rm CDC}}}
\def\IAlang{\ensuremath{\B_{\rm IA}}}
\def\BAlang{\ensuremath{\B_{{\rm BA}_d}}}
\def\CDCdisjlang{\ensuremath{\B^{\vee =}_{\rm CDC}}}
\def\IAdisjlang{\ensuremath{\B^{\vee =}_{\rm IA}}}
\def\BAdisjlang{\ensuremath{\B^{\vee =}_{{\rm BA}_d}}}
\newcommand{\boole}[1]{\langle {#1} \rangle_{\rm b}}
\newcommand{\disj}[1]{{#1}^{\vee =}}
\newcommand{\age}[1]{\ensuremath{\textnormal{Age}(#1)}}
\newcommand{\aut}[1]{\ensuremath{\textnormal{Aut}(#1)}}
\newcommand{\Orb}[1]{\ensuremath{\textnormal{Orb}(#1)}}
\newcommand{\Orbi}{\ensuremath{\cal O}}
\begin{document}
\begin{frontmatter}

\title{Solving Infinite-Domain CSPs Using the Patchwork Property\footnote{Parts of this article appeared in the 
    proceedings of the 35th AAAI Conference on Artificial Intelligence (AAAI~2021)~\cite{Dabrowski:etal:aaai2021-pp}.}}

\author[a1]{Konrad K. Dabrowski\corref{cor1}}
\cortext[cor1]{Corresponding author.}
\ead{k.k.dabrowski@leeds.ac.uk}

\author[a2]{Peter Jonsson}
\ead{peter.jonsson@liu.se}

\author[a1]{Sebastian Ordyniak}
\ead{sordyniak@gmail.com}

\author[a2]{George~Osipov}
\ead{george.osipov@liu.se}

\address[a1]{School of Computing, University of Leeds, UK}
  
\address[a2]{Department of Computer and Information Science, Link\"opings
  Universitet, Sweden.}

\begin{abstract}
  The constraint satisfaction problem (CSP) has important
  applications in computer science and AI. In particular,
  infinite-domain CSPs have been intensively used in subareas of AI such as
  spatio-temporal reasoning.
  Since constraint satisfaction is a computationally hard problem, much work has been devoted to 
  identifying restricted problems that are efficiently solvable. 
  One way of doing this is to restrict the interactions of variables and constraints, and a
  highly successful approach is to bound the treewidth of
  the underlying primal graph.
  Bodirsky \& Dalmau [{\em J. Comput. System. Sci.} 79(1), 2013] and Huang et al. [{\em Artif. Intell.} 195, 2013] proved that
  CSP$(\Gamma)$ can be solved in $n^{f(w)}$ time (where~$n$ is the size of
  the instance, $w$ is the treewidth of the primal graph and $f$ is a computable function) 
  for certain classes of constraint languages~$\Gamma$. We improve this bound to $f(w) \cdot n^{O(1)}$,
  where the function $f$ only depends on the language $\Gamma$,
  for CSPs whose basic relations have the patchwork property. 
  Hence, such problems are fixed-parameter tractable and our algorithm is
  asymptotically faster than the previous ones. Additionally, our approach
  is not restricted to binary constraints, so it is applicable to a strictly 
  larger class of problems than that of Huang et al.
  However, there exist natural problems
  that are covered by Bodirsky \& Dalmau's algorithm but not by ours, and we begin investigating
  ways of generalising our results to larger families of languages. We also analyse our algorithm
  with respect to its running time and show that it is optimal (under the Exponential Time Hypothesis) for certain languages such as Allen's Interval Algebra.
\end{abstract}

\begin{keyword}
  constraint satisfaction problem \sep  
  parameterized complexity \sep 
  treewidth \sep 
  infinite domain \sep 
  lower bound
\end{keyword}

\end{frontmatter}

\section{Introduction}

The constraint satisfaction problem over a constraint language $\Gamma$ (CSP$(\Gamma)$) is the problem of finding
a variable assignment which satisfies a set of constraints, where each constraint is constructed from a
relation in $\Gamma$. This problem can be used to model
many problems encountered in computer science and AI, see e.g. Rossi et al.~\cite{Rossi:etal:CP} or Dechter~\cite{Dechter:CP}. 
The CSP is computationally hard in
the general case; if the variable domains are finite, then the problem is \NP-complete, and otherwise
it may be of arbitrarily high complexity~\cite{Bodirsky:Grohe:icalp2008}.
Hence, identifying tractable problems is of great practical interest. 

Tractable fragments have historically been identified using two different methods:
either (1) restrict the relations that are allowed in the constraint language or (2) restrict how variables and constraints interact in 
problem instances. We focus on the second kind of restrictions in this article; these are often referred to
as {\em structural restrictions}.
One common way of studying structural restrictions is via the {\em primal graph}: this graph
has the variables as its vertices with two of them joined by an edge if they occur together in the scope
of a constraint. The graph parameter treewidth~\cite{Bertele:Brioschi:NDP72,Robertson:Seymour:jctb84} 
has proven to be very useful in this context, since many \NP-hard graph problems 
are tractable on instances with bounded treewidth.
The treewidth of the primal graph has been extensively used in the study of finite-domain CSPs. It is known
that the problem is {\em fixed-parameter tractable} (fpt), i.e.
it can be solved in $f(w+d) \cdot n^{O(1)}$ time, where $n$ is the size of the instance, $w$ is the
treewidth of the primal graph, $d$~is the domain size and $f$ is some computable
function. This was proven by Gottlob et al.~\cite{Gottlob:etal:ai2002};
see also Samer \& Szeider~\cite{Samer:Szeider:jcss2010}
for a more general treatment.

Let us now consider infinite-domain CSPs. For certain classes of constraint languages,
Bodirsky \& Dalmau~\cite[Corollary~1]{Bodirsky:Dalmau:jcss2013} proved that
CSP$(\Gamma)$ can be solved in $n^{O(w)}$ time (where the exact expression in the $O(w)$
term may depend on the constraint language), while
Huang et al.~\cite[Theorem~6]{Huang:etal:ai2013} obtained the bound
$O(w^3n \cdot {\rm e}^{w^2 \log n}) = n^{O(w^2)}$.
These results prove the weaker property of membership in the complexity class \XP.
Algorithms with a running time bounded by $n^{f(w)}$ are obviously polynomial-time when $w$ is fixed. 
However, since $w$ appears in the exponent, such algorithms become impractical (even for small $w$) when large instances are considered. 
It is significantly better if a problem is fpt and can be solved in time $f(w) \cdot n^{O(1)}$,
since the order of the polynomial in $n$ does not depend at all on $w$. 

Our main result is an fpt algorithm for CSPs where the underlying basic relations have the 
{\em patchwork property}~\cite{Lutz:Milicic:jar2007}. Several important CSPs (such as Allen's Interval Algebra
and RCC8) are known to have this property. The patchwork property ensures that the union of two
satisfiable instances of the CSP, whose constraints agree on their common
variables, is also satisfiable. With the above discussion in mind, it is clear
that our algorithm has better computational properties than the two previous ones.
We will now briefly compare the applicability of the algorithms; more information on this can be found
in Section~\ref{sec:discussion}.
Bodirsky \& Dalmau's algorithm (BD) works for constraint languages
that are $\omega$-{\em categorical} (in fact, it even works for languages where only the
core is $\omega$-categorical), while Huang, Li \& Renz's algorithm (HLR) works for languages with binary relations that have
the {\em atomic network amalgamation property} (aNAP).
Our algorithm has a wider applicability than HLR,
since aNAP implies the patchwork property and our algorithm is
not restricted to binary relations.
The relation to BD is more complex, since there are problems 
that are covered by BD but not by our algorithm.
In some cases our algorithm is not applicable to a CSP directly,
but one can find an equivalent problem that has the patchwork property via {\em homogenisation} --- 
one example is the {\em Branching Time Algebra}~\cite{Anger:etal:spie91} --- the details are discussed later on.
However, there are cases where homogenisation is not applicable, 
so the exact dividing line is unfortunately unclear. 

In Section~\ref{sec:prelim}, we introduce some necessary preliminaries.
The remainder of this article is divided into three distinct parts. In the first part 
(Sections~\ref{sec:results} and~\ref{sec:lowerbounds}), we present our main algorithm
and prove that it achieves the required time bound.
Our algorithm is based on dynamic programming and it is quite different from the BD and HLR algorithms: BD is based on a transformation to Datalog, while HLR is a recursive algorithm based on ideas by Darwiche~\cite{Darwiche:ai2001}.
We complement our algorithmic results with tight lower bounds based on the Exponential Time Hypothesis:
these show that the existence of significantly faster algorithms is not possible in certain special cases such as Allen's Interval Algebra.

In the second part (Section~\ref{sec:applicability}), we analyse the applicability of our algorithm. Even though the
patchwork property is well known within the CSP community, there are not many
formalisms that have been proven to have this property.
By using certain model-theoretical concepts,
we obtain an alternative way of identifying
constraint languages with the patchwork property. Based on this, 
we demonstrate how to apply our results
on constraint languages that are definable in $({\rationals};<)$ (with
applications in, for instance, temporal reasoning and scheduling) and phylogeny languages
(which are useful in bioinformatics).
These classes of languages give rise to CSPs with non-binary relations. Such relations
have, unfortunately, not been well studied in AI,
since the focus has almost exclusively been on binary relations. 
The phylogeny languages
are particularly interesting in this respect, since the basic relations themselves
are non-binary.

In the third part (Section~\ref{sec:beyondPP}), we study how our fpt result can be generalised to languages that do not have the
patchwork property. One concrete example of an interesting language without the patchwork property is 
the previously mentioned Branching Time Algebra. We use model-theoretic tools to achieve
one possible generalisation:
we show that {\em homogenisation} can be used to extend our fpt result to certain classes of languages
that do not have the patchwork property, and this extended result covers the BTA.

\newpage
\section{Preliminaries}
\label{sec:prelim}

In this section we introduce the necessary prerequisites.

\subsection{Relational Structures}

A {\em (relational) signature} $\tau$ is a set of symbols, 
each with an associated natural number called their {\em arity}. 
A {\em (relational) $\tau$-structure} $\A$ consists of
a set $D$ (the domain), together with relations $R^\A \subseteq D^k$ 
for each $k$-ary symbol $R \in \tau$. 
A structure is {\em countable} if its domain is a countable set.

Let $\A$ be a $\tau$-structure over a domain $D$.
We say that $\A$ is $k$-{\em ary} if every relation in $\A$ has arity $k$.
Define $\A^{=k} = \bigcup \{ R \in \A \mid R \textnormal{ has arity } k \}$,
i.e. $\A^{=k}$ is the union of all $k$-ary relations in $\A$.

The relations in $\A$ are {\em jointly exhaustive} (JE)
if for all $k \geq 2$, 
$\A^{=k}$ is either empty or equal to $D^k$.
They are {\em pairwise disjoint} (PD)
if $R \cap R' = \varnothing$ for all distinct $R,R' \in \A$.
In other words, the relations are JEPD
if the nonempty subsets $\A^{=k}$ partition $D^k$.

The following concept is also used in the literature:
the relations of $\A$ are JE$^{+}$ if 
there is a natural number $d \geq 2$ such that $\A^{=k} = D^k$
for all $k\in \{2,\ldots,d\}$, and $\A^{=k}$ is empty otherwise.
Note that any set of JE relations can be augmented 
with the total relations $D^k$ for all $k\in \{2,\ldots,d\}$
where $\A^{=k}$ is empty, thus making the (trivially) extended structure JE$^+$.
Also note that every JE$^+$ structure is also JE, and that a $k$-ary structure cannot be JE$^+$ unless $k=2$.
The difference between JE and JE$^+$ reflects various traditions within the CSP
community. Some researchers have concentrated on $k$-ary structures and for them
the JE property is natural. Others have concentrated on structures with mixed
arities and then the JE$^+$ property becomes natural.

Denote the equality relation over the domain $D$ by 
$Eq_2 = \{(d,d) \mid d \in D\}$.
The relations of $\A$ are {\em jointly diagonalizable} (JD)
if $\bigcup \{ R \in \A \mid R \subseteq Eq_2 \} = Eq_2$.
Note that JD holds vacuously if the equality relation is included in $\A$, and that
a $k$-ary structure can only be JD if $k=2$.

\subsection{Logic}

Let $\A$ be a $\tau$-structure.
First-order formulas $\phi$ over $\A$ (or, for short, $\A$-formulas)
are defined using the logical symbols of universal and existential
quantification, disjunction, conjunction, negation,
equality, bracketing, variable symbols, the relation symbols from $\tau$, and
the symbol $\bot$ for the truth-value false. 
First-order formulas over $\A$ can be used to define relations: 
for a formula $\phi(x_1,\ldots,x_k)$ 
with free variables $x_1,\ldots,x_k$, the corresponding relation $R$
is the set of all $k$-tuples $(t_1,\ldots,t_k) \in D^k$
such that $\phi(t_1,\ldots,t_k)$ is true in $\A$. 
In this case we say that $R$ is {\em first-order definable} in $\A$.
Our definitions are always parameter-free, i.e. we do not allow
the use of domain elements within them.
We may assume without loss of generality that 
all formulas defining relations 
are in disjunctive normal form (DNF). 
A formula is in DNF if it is a disjunction of one or more
conjunctions of one or more {\em atomic formulas} 
of the type $R(\bar{x})$ or $\neg R(\bar{x})$, 
where $R \in \A \cup \{=\}$ and $\bar{x}$ is a sequence of variables.
The conjunctions of atomic formulas are referred to as {\em terms}.

The most common way of using JEPD relations in AI-relevant CSPs 
is via the constraint language $\disj{\A}$, where $\A$ is a $k$-ary structure.
The set $\disj{\A}$ contains the unions of all subsets of $\A$. 
Equivalently, $R \in \disj{\A}$ if $R$ can be written as
a disjunction $R_1(\bar{x}) \vee \cdots \vee R_p(\bar{x})$ where
$R_1,\ldots,R_p \in \A$ and $\bar{x}=(x_1,\ldots,x_k)$.
This definition demands that all relations in $\A$ have the same arity.
Since we want to study more expressive sets of relations, we
let $\boole{\A}$ denote the set of relations that are definable
by quantifier-free formulas that {\em only} contain the relations in~$\A$, i.e.
one is not allowed to use the equality relation $=$ unless it is a member of~$\A$.
When the relations in $\A$ have the same arity,
$\disj{\A} \subsetneq \boole{\A}$ and
$\boole{\A}$ strictly generalises $\disj{\A}$
since all relations in $\disj{\A}$ can be defined by disjunctive formulas.
Note that if the set of relations of $\A$ is finite and JEPD, 
then we may assume that all formulas are negation-free: 
any negated relation can be replaced by
the disjunction of all other relations.

\subsection{Constraint Satisfaction}

Let $\A$ be a $\tau$-structure with domain $D$.
The \textsc{Constraint Satisfaction Problem} 
over $\A$ (CSP$(\A)$) is defined as follows:

\begin{sloppypar}
\smallskip
\noindent
{\sc Instance:} A set $V$ of variables and a set $C$ 
of {\em constraints} of the form
$R(v_1,\ldots,v_k)$, where $R \in \A$ is a relation of arity $k$ and
$v_1,\ldots,v_k \in V$. \\
{\sc Question:} Is there an assignment $f:V \rightarrow D$ such that
$(f(v_1),\ldots,f(v_k)) \in R$ for every $R(v_1,\ldots,v_k) \in C$?
\smallskip
\end{sloppypar}

The structure $\A$ is often referred to as the {\em constraint language}.
Let $\A$ be a finite constraint language with JEPD relations.
Consider a finite $\G \subseteq \boole{\A}$ and let
$\inst = (V, C)$ be an instance of CSP$(\G)$.
Recall that every relation used in a constraint in $C$
can be defined by a DNF $\A$-formula that involves only 
positive atomic formulas of the form $R(\bar{x})$, where $R \in \A$.
A {\em certificate} for $\inst$ is a satisfiable instance 
$\cert = (V, C')$ of CSP$(\A)$ that {\em implies}
every constraint in $C$, i.e.
for every $R(v_1,\ldots,v_k)$ in $C$,
there is a term in the definition of this constraint
(as a DNF $\A$-formula) such that all constraints 
in this term are in~$C'$.

\begin{proposition}
\label{prop:certif-sat}
An instance of CSP$(\G)$ admits a certificate 
if and only if it is satisfiable.
\end{proposition}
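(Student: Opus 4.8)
The plan is to prove the two implications separately; neither needs deep machinery, since the statement is essentially a soundness-and-completeness argument for the negation-free DNF representation guaranteed by the preliminaries.

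For the direction ``satisfiable $\Rightarrow$ admits a certificate'', I would start from an assignment $f : V \to D$ satisfying $\inst$ and build $C'$ constraint by constraint. Fix a constraint $R(v_1,\ldots,v_k) \in C$ together with a DNF $\A$-formula $\phi(x_1,\ldots,x_k) = \bigvee_j T_j$ defining $R$, where each term $T_j$ is a conjunction of positive atoms $S(x_{i_1},\ldots,x_{i_m})$ with $S \in \A$. Since $(f(v_1),\ldots,f(v_k)) \in R$, at least one term $T_j$ is true under the assignment $x_i \mapsto f(v_i)$; I add to $C'$ every constraint $S(v_{i_1},\ldots,v_{i_m})$ obtained from the atoms of that term by the substitution $x_i \mapsto v_i$. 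Performing this for all constraints of $C$ yields $\cert = (V, C')$. By construction $f$ satisfies every constraint of $C'$, so $\cert$ is a satisfiable instance of CSP$(\A)$, and for each $R(v_1,\ldots,v_k) \in C$ the term chosen above witnesses that $\cert$ implies that constraint; hence $\cert$ is a certificate.

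For the converse, let $\cert = (V, C')$ be a certificate and let $g : V \to D$ witness its satisfiability. I claim $g$ also satisfies $\inst$. Take an arbitrary $R(v_1,\ldots,v_k) \in C$. By the definition of certificate, there is a term $T$ in the DNF definition of this constraint such that all of its (appropriately substituted) atoms lie in $C'$; since $g$ satisfies $C'$, it satisfies every atom of $T$, hence makes $T$ true, hence makes the whole DNF formula true, so $(g(v_1),\ldots,g(v_k)) \in R$. As the constraint was arbitrary, $g$ satisfies $\inst$, which is therefore satisfiable.

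The one point that needs a little care --- and the only place where the hypotheses on $\A$ are used --- is the appeal to a \emph{negation-free} DNF representation of each relation in $\G \subseteq \boole{\A}$: this is exactly the reduction recorded just before the proposition, which relies on $\A$ being finite and JEPD so that every negated atom can be rewritten as the disjunction of the remaining relations of the same arity. Once that representation is fixed, the only remaining thing to track is the variable-substitution bookkeeping between the defining formula $\phi(x_1,\ldots,x_k)$ and the constraint $R(v_1,\ldots,v_k)$, and I do not expect any genuine obstacle here.
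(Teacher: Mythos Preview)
Your argument is correct and is exactly the intended one; the paper in fact states this proposition without proof, treating it as an immediate consequence of the definition of a certificate together with the negation-free DNF representation recorded just before it. What you have written simply makes explicit the two routine directions the paper leaves to the reader, including the one subtlety you flag (that finiteness and JEPD of $\A$ are what guarantee the negation-free DNF form).
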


\begin{sloppypar}
Now assume CSP$(\A)$ is decidable.
Then, there is an algorithm deciding whether an instance 
$(V,C')$ of CSP$(\A)$ is a certificate for 
an instance $(V,C)$ of CSP$(\G)$:
first, check that $(V,C')$ is satisfiable,
and then, for all $R(v_1,\ldots,v_k) \in\nobreak C$, verify that 
$(V,C')$ implies $R(v_1,\ldots,v_k)$ by considering every term 
in the definition of $R$ and checking if it is included in $C'$.
Note that the length of the DNF formula defining $R$
depends only on the arity of $R$ and $\abs{\A}$,
which are both bounded by constants since $\G$ and $\A$ are finite languages.
Thus, if CSP$(\A)$ is solvable in polynomial time, then
the certificate test can also be performed in polynomial time.
\end{sloppypar}

An instance of CSP$(\A)$ is {\em complete}
if it contains a constraint over every
$k$-tuple of (not necessarily distinct) variables
for every $k$ such that there is a relation in $\A$ of arity $k$.
A certificate is complete if it is 
a complete instance of CSP$(\A)$.
Since the relations in $\A$ are JE,
any certificate can be extended to a complete one.
Thus, we can assume that all certificates are complete.

\begin{example}
Consider the structure ${\bf Q}=({\rationals};<,>,=)$, 
i.e. the rationals under the natural ordering.
The relation $B=\{(x,y,z) \in {\rationals}^3 \; | \; x<y<z \vee z<y<x\}$ is known
as the {\em betweenness} relation and it is a member of $\boole{{\bf Q}}$. Let
$I=(\{w,x,y,z\} \; | \; \{B(w,x,y),B(x,y,z)\})$ be an instance of
CSP$(\{B\})$. The instance $I$ is satisfiable and this is witnessed
by the solution $f(w)=0,f(x)=1,f(y)=2,f(z)=3$.
A certificate for this instance is
$\{w<x,x<y,y<z\}$. A complete certificate is

\[\begin{array}{llll}
w=w, & w<x, & w<y, & w<z, \\
x>w, & x=x, & x<y, & x<z, \\
y>w, & y>x, & y=y, & y<z, \\
z>w, & z>x, & z>y, & z=z. \\
\end{array}
\]

\end{example}

For any instance $\inst = (V, C)$ of CSP
and any set of variables $U \subseteq V$, 
define $C[U] \subseteq C$ 
to include all constraints whose scope is in $U$.
We say that $\inst[U] = (U, C[U])$ 
is the {\em subinstance} of $\inst$ induced by $U$.
We will also say that $\inst[U]$ is obtained by
{\em projecting $\inst$ onto $U$}.
Properties of certificates (including completeness) 
are preserved under projections.
We formalise this observation below.
\begin{proposition}
\label{prop:certif-projection}
If $\cert$ is a certificate for $\inst = (V, C)$,
then $\cert[U]$ is a certificate for $\inst[U]$ 
for all $U \subseteq V$. If $\cert$ is complete, then
$\cert[U]$ is also complete.
\end{proposition}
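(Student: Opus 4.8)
The plan is to check, directly from the definitions, the three conditions that make $\cert[U] = (U, C'[U])$ a certificate for $\inst[U] = (U, C[U])$ --- namely that it is an instance of CSP$(\A)$ on the variable set $U$, that it is satisfiable, and that it implies every constraint of $C[U]$ --- and then to verify separately that the completeness condition transfers. The common theme is that projecting onto $U$ only discards constraints whose scope leaves $U$, so none of the structure relevant to the variables in $U$ is lost.

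First, $\cert[U]$ is an instance of CSP$(\A)$: $C'[U] \subseteq C'$ and every constraint of $C'$ uses a relation of $\A$. Second, it is satisfiable: if $f \colon V \to D$ is a solution of $\cert$, then its restriction $f|_U$ satisfies every constraint of $C'[U]$, since each such constraint belongs to $C'$ and has all of its variables in $U$. Third --- the main step --- fix a constraint $R(v_1,\dots,v_k) \in C[U]$; by definition of $C[U]$ it also lies in $C$ and satisfies $\{v_1,\dots,v_k\} \subseteq U$. Since $\cert$ is a certificate for $\inst$, some term of the DNF $\A$-formula defining $R$ has the property that, after substituting $v_i$ for the $i$-th free variable, every atomic formula of that term becomes a constraint of $C'$. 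Each such atomic formula has the form $R'(\bar x)$ with $\bar x$ a sequence of variables among $x_1,\dots,x_k$, so after substitution its scope is contained in $\{v_1,\dots,v_k\} \subseteq U$; hence all these constraints already lie in $C'[U]$, and so $\cert[U]$ implies $R(v_1,\dots,v_k)$. As this holds for every constraint of $C[U]$, $\cert[U]$ is a certificate for $\inst[U]$.

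For the last claim, assume $\cert$ is complete, so that for every arity $k$ occurring in $\A$ it contains a constraint over every $k$-tuple of variables from $V$; in particular it contains one over every $k$-tuple of variables from $U \subseteq V$, and each such constraint has scope in $U$, hence survives in $C'[U]$. So $\cert[U]$ is complete. I expect no real obstacle here; the only point needing a little care is the bookkeeping in the third step --- one must observe that passing from the defining formula $\phi(x_1,\dots,x_k)$ to the constraint $R(v_1,\dots,v_k)$ sends the atomic subformulas of a term to CSP$(\A)$-constraints whose scopes stay inside $\{v_1,\dots,v_k\}$, which is exactly what makes ``scope in $U$'' be preserved. Everything else is immediate from the definitions of projection and of certificate.
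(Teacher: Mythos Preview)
Your proof is correct. The paper does not actually prove this proposition---it is stated as a formalisation of an observation and left without proof---so there is nothing to compare against; your argument supplies precisely the routine verification of the definitions that the paper omits.
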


\subsection{Parameterized Complexity}

In parameterized 
algorithmics~\cite{DowneyFellows13,book/FlumG06,book/Niedermeier06}
the runtime of an algorithm is studied with respect to
the input size~$n$ and a parameter $p \in \naturals$.
The basic idea is to find a parameter that describes the structure of
the instance such that the combinatorial explosion can be confined to
this parameter.
In this respect, the most favourable complexity class is \FPT
(\emph{fixed-parameter tractable}),
which contains all problems that can be decided by an algorithm
running in $f(p)\cdot n^{O(1)}$ time, where $f$ is a computable
function.
Problems that can be solved in this time are said to be 
\emph{fixed-parameter tractable} (fpt).
The more general class \XP contains all problems decidable
in $n^{f(p)}$ time, i.e. the problems solvable in polynomial time
when the parameter $p$ is bounded.
Clearly, $\FPT \subseteq \XP$.
Moreover, the inclusion is strict 
(see e.g.~\cite{book/FlumG06}).

\begin{sloppypar}
We will concentrate on one well-known parameter in this article: the {\em treewidth} of the
{\em primal graph}. Thus, if we state that some problem is fpt, then
we always mean with respect to this parameter. The primal graph of an instance of a CSP
is the undirected graph whose vertices 
coincide with the variables of the instance, and where
two vertices are joined by an edge 
if they occur in the scope of the same constraint. Treewidth
is based on {\em tree decompositions}:
a tree decomposition $(T, X)$ of an undirected graph $G = (V, E)$ consists 
of a rooted tree $T$ and a mapping $X$ from the nodes of $T$ to the subsets of $V$.
The subsets $X(t)$ are called {\em bags}.
$T_t$ stands for the subtree rooted at $t$,
while $V_t$ is the set of all variables occurring in the bags of $T_t$, i.e.
$V_t = \bigcup_{s \in V(T_t)} X(s)$.
A tree decomposition fulfils the following properties:
\end{sloppypar}

\newpage

\begin{enumerate}
    \item \label{def:treedecomp:t1}
    $\bigcup_{t \in V(T)} X(t) = V$.
    \item \label{def:treedecomp:t2}
    If $(u,v) \in E$, then $u,v \in X(t)$ for some $t \in V(T)$.
    \item \label{def:treedecomp:t3}
    For any $t_1, t_2, t_3 \in T$, if $t_2$ lies on the path
    between $t_1$ and $t_3$, then $X(t_1) \cap X(t_3) \subseteq X(t_2)$.
\end{enumerate}

The width of a tree decomposition $T$ is defined as $\max \{ \abs{X(t)} : t \in V(T) \} - 1$.
The treewidth of a graph $G$ is the minimum width
of a tree decomposition of~$G$.
It is \NP-complete to determine whether a graph has treewidth at most $k$ \cite{Arnborg:etal:sijmaa87},
but when $k$ is fixed the graphs with treewidth~$k$ can be recognised 
and corresponding tree decompositions can be constructed in linear time~\cite{Bodlaender:sicomp96}.

\subsection{Qualitative Spatial and Temporal Reasoning}
\label{sec:qstr}

We will consider several well-known formalisms 
for qualitative spatial and temporal reasoning.
All of them can be defined as $\disj{\B}$ 
via a binary constraint language $\B$ 
with JEPD relations. It is important
to note that the exact choice of relations for representing
a reasoning problem as a CSP may be crucial. This is most
easily illustrated with the RCC5 formalism that is introduced
in Item~\ref{item:4-rcc} below. RCC5 can be represented with structures~${\bf A}$ and ${\bf B}$ such that CSP$({\bf A})$ is the
same computational problem as CSP$({\bf B})$,
while~${\bf A}$ and ${\bf B}$ are very different from a
model-theoretical point of view.
Bodirsky and Jonsson discuss this in some detail for RCC5 in~\cite[Section~2.5.2]{Bodirsky:Jonsson:jair2017}.
They also discuss that there are ${\bf A}$ and ${\bf B}$
that look like suitable representations of RCC5 
(for instance, by having the ``right'' composition tables),
but have different CSPs. For RCC5 and RCC8, we will thus exclusively use the
representations suggested by Bodirsky \& Wölfl~\cite{Bodirsky:Wolfl:ijcai2011},
whose CSP coincides with the standard interpretation of RCC relations. 
We will come back to the importance of choosing the right representation
in Sections~\ref{sec:applicability} and~\ref{sec:beyondPP}.

The choice of representation for the formalisms in Items~\ref{item:1-aia}--\ref{item:3-cdc} is, fortunately, much easier:
the natural representations via concrete objects in ${\rationals}^d$
have proven to capture the intended computational problems and at the same time
have advantageous model-theoretical properties.
We will consequently use these representations throughout the article.

\begin{enumerate}

\item\label{item:1-aia}
{\em Allen's Interval Algebra}~(IA)~\cite{Allen:cacm83} is
a temporal reasoning formalism where one considers relations between
intervals of the form $I = [I^{-},I^{+}]$, 
where $I^{-},I^{+} \in {\rationals}$, $I^{-} < I^{+}$ are 
the start and end points, respectively.
The language $\IAlang$ consists of thirteen 
basic relations illustrated in Table~\ref{tb:allen}.

\begin{table}[htb]
    \begin{center}
      {\scriptsize
    \begin{tabular}{|ll|c|l|}\hline
              Basic relation        &               & Example                   & Endpoints\\
        \hline\hline
              $I$ precedes      $J$ & ${\sf p}$     & \stexttt{iii\spv}         & $I^{+}<J^{-}$
         \\ \cline{1-2}
              $J$ preceded-by   $I$ & ${\sf pi}$    & \stexttt{\spv jjj}        & \\
        \hline
              $I$ meets         $J$ & ${\sf m}$     & \stexttt{iiii\spiv}       & $I^{+}=J^{-}$
        \\ \cline{1-2}
              $J$ met-by        $I$ & ${\sf mi}$    & \stexttt{\spiv jjjj}      & \\
        \hline
              $I$ overlaps      $J$ & ${\sf o}$     & \stexttt{iiii\spii}       & $I^{-}<J^{-}<I^{+}<J^{+}$ \\ \cline{1-2}
              $J$ overlapped-by $I$     & ${\sf oi}$    & \stexttt{\spii jjjj}      & 
        \\ \hline
              $I$ during        $J$ & ${\sf d}$     & \stexttt{\spii iii\spii}  & $I^{-}>J^{-}$, \\ \cline{1-2}
              $J$ includes      $I$ & ${\sf di}$    & \stexttt{jjjjjjj}         & $I^{+}<J^{+}$ \\
        \hline
              $I$ starts        $J$ & ${\sf s}$     & \stexttt{iii\spiv}        & $I^{-}=J^{-}$,
        \\ \cline{1-2}
              $J$ started-by    $I$ & ${\sf si}$    & \stexttt{jjjjjjj}         & $I^{+}<J^{+}$ \\
        \hline
              $I$ finishes      $J$ & ${\sf f}$     & \stexttt{\spiv iii}       & $I^{+}=J^{+}$,
        \\ \cline{1-2}
              $J$ finished-by   $I$ & ${\sf fi}$    & \stexttt{jjjjjjj}         & $I^{-}>J^{-}$ \\
        \hline
              $I$ equals        $J$ & ${\sf e}$     & \stexttt{iiii}            & $I^{-}=J^{-}$, \\
                                    &               & \stexttt{jjjj}            & $I^{+}=J^{+}$ \\
\hline
    \end{tabular}}
    \caption{The thirteen basic relations in Allen's Interval Algebra. The
    endpoint relations $I^- < I^+$ and $J^- < J^+$ that are valid for
    all intervals $I$ and $J$ are omitted.}
    \label{tb:allen}
    \end{center}
\end{table}

\item\label{item:2-ba}
The $d$-dimensional {\em Block Algebra}~(BA$_d$)~\cite{Balbiani:et:al:kr98}
is a generalisation of IA to $d$-dimensional
boxes with sides parallel to the coordinate axes.
The relations in $\B_{\rm BA}$ are $d$-tuples
of IA relations, each one applied in the
corresponding dimension.

\newpage
\item\label{item:3-cdc}
The {\em Cardinal Direction Calculus}~(CDC)~\cite{Ligozat:vlc98}
is a formalism for spatial reasoning
with points on the plane as the basic objects.
The relations in $\CDClang$ correspond to eight
cardinal directions (North, East, South, West and 
four intermediate ones)
plus the equality relation.
They can be viewed as pairs $(R_1, R_2)$
for all choices of $R_1, R_2 \in \{<,=,>\}$,
where each relation applies 
to the corresponding coordinate.
See Table~\ref{tb:cdc} for the correspondence
between cardinal directions and pairs $(R_1, R_2)$.

\begin{table}[b]
    \centering
\scalebox{0.94}{ 
    \begin{tabular}{|c|c|c|c|c|c|c|c|c|}
        \hline
        {\rm =} & {\rm N} & {\rm E} & {\rm S} & {\rm W} & {\rm NE} & {\rm SE} & {\rm SW} & {\rm NW} \\
        \hline
        $(=,=)$ & $(=,>)$ & $(>,=)$ & $(=,<)$ & $(<,=)$ & $(>,>)$  & $(>,<)$  & $(<,<)$  & $(<,>)$  \\
        \hline
    \end{tabular}
}
    \caption{The basic relations of Cardinal Direction Calculus}
    \label{tb:cdc}
\end{table}

\item\label{item:4-rcc}
The {\em Region Connection Calculus}~(RCC8)~\cite{Randell:etal:kr92} is a formalism for qualitative spatial reasoning,
where the basic objects (referred to as {\em regions}) are non-empty regular closed subsets of a topological space.
The regions do not have
to be internally connected, that is, they may consist of
different disconnected pieces. 
$\B_{\rm RCC8}$ contains eight relations:
{\sf EQ} (equal),
{\sf PO} (partial overlap), 
{\sf DC} (disconnected),
{\sf EC} (externally connected),
{\sf NTPP} (non-tangential proper part),
its converse ${\sf NTPP}^{-1}$,
{\sf TPP} (tangential proper part)
and its converse ${\sf TPP}^{-1}$.
See Figure~\ref{fig:rcc8} for examples.
RCC5 is a variant of RCC8 where 
one is not able to distinguish regions from their topological closure, i.e.
the distinction between boundary points and interior points is ignored.
The disconnectedness relations {\sf DC} and {\sf EC}
are replaced by ${\sf DR}={\sf DC} \cup {\sf EC}$ (distinct from),
the tangential and non-tangential proper part relations {\sf TPP} and {\sf NTPP}
are replaced by ${\sf PP}={\sf TPP} \cup {\sf NTPP}$ (proper part), and {\sf PP}$^{-1}$ is defined analogously.
\end{enumerate}

\def\rccfigwidth{0.2\textwidth}
\begin{figure}[h]
    \centering
    \captionsetup[subfigure]{labelformat=empty}
    \begin{subfigure}[b]{\rccfigwidth}
        \centering
        \begin{tikzpicture}[scale=0.75]
            \draw (0,0) circle (1);
            \draw[dashed] (0,0) circle (1);
            \draw (-0.33,0) node {$X$};
            \draw (0.33,0) node {$Y$};
        \end{tikzpicture}
        \caption{${\sf EQ}(X,Y)$}
    \end{subfigure}
    \hfill
    \begin{subfigure}[b]{\rccfigwidth}
        \centering
        \begin{tikzpicture}[scale=0.5]
            \draw (0,1.1) circle (1);
            \draw (0,1.1) node {$X$};
            \draw[dashed] (0,-1.1) circle (1);
            \draw (0,-1.1) node {$Y$};
        \end{tikzpicture}
        \caption{${\sf DC}(X,Y)$}
    \end{subfigure}
    \hfill
    \begin{subfigure}[b]{\rccfigwidth}
        \centering
        \begin{tikzpicture}
            \draw (-0.3,0) circle (0.5);
            \draw (-0.3,0) node {$X$};
            \draw[dashed] (0,0) circle (1);
            \draw (0.5,0) node {$Y$};
        \end{tikzpicture}
        \caption{${\sf NTPP}(X,Y)$}
    \end{subfigure}
    \hfill
    \begin{subfigure}[b]{\rccfigwidth}
        \centering
        \begin{tikzpicture}
            \draw[dashed] (-0.3,0) circle (0.5);
            \draw (-0.3,0) node {$Y$};
            \draw (0,0) circle (1);
            \draw (0.5,0) node {$X$};
        \end{tikzpicture}
        \caption{${\sf NTPP}^{-1}(X,Y)$}
    \end{subfigure}

    \begin{subfigure}[b]{\rccfigwidth}
        \centering
        \begin{tikzpicture}[scale=0.5]
            \draw (0,0.75) circle (1);
            \draw[dashed] (0,-0.75) circle (1);
            \draw (0,0.75) node {$X$};
            \draw (0,-0.75) node {$Y$};
        \end{tikzpicture}
        \caption{${\sf PO}(X,Y)$}
    \end{subfigure}
    \hfill
    \begin{subfigure}[b]{\rccfigwidth}
        \centering
        \begin{tikzpicture}[scale=0.5]
            \draw (0,1) circle (1);
            \draw (0,1) node {$X$};
            \draw[dashed] (0,-1) circle (1);
            \draw (0,-1) node {$Y$};
        \end{tikzpicture}
        \caption{${\sf EC}(X,Y)$}
    \end{subfigure}
    \hfill
    \begin{subfigure}[b]{\rccfigwidth}
        \centering
        \begin{tikzpicture}
            \draw (-0.5,0) circle (0.5);
            \draw (-0.5,0) node {$X$};
            \draw[dashed] (0,0) circle (1);
            \draw (0.5,0) node {$Y$};
        \end{tikzpicture}
        \caption{${\sf TPP}(X,Y)$}
    \end{subfigure}
    \hfill
    \begin{subfigure}[b]{\rccfigwidth}
        \centering
        \begin{tikzpicture}
            \draw[dashed] (-0.5,0) circle (0.5);
            \draw (-0.5,0) node {$Y$};
            \draw (0,0) circle (1);
            \draw (0.5,0) node {$X$};
        \end{tikzpicture}
        \caption{${\sf TPP}^{-1}(X,Y)$}
    \end{subfigure}
    
    \caption{Illustration of the relations of RCC8 with 
    two-dimensional disks.}
    \label{fig:rcc8}
\end{figure}
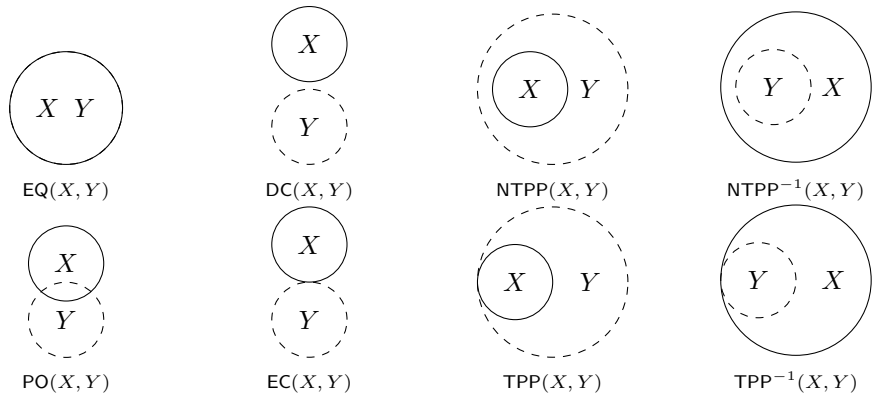

\section{The Main Algorithm}
\label{sec:results}

The goal of this section is to present an fpt algorithm that is
applicable to a wide range of interesting CSPs.
The basic CSP property underlying our algorithm is the following.

\begin{definition}[Lutz and Mili\v{c}i\'{c}~\cite{Lutz:Milicic:jar2007}] \label{def:patchwork}
A JEPD constraint language $\A$ 
has the {\em patchwork property (PP)}
if, for every pair of complete satisfiable instances
$\inst_1 = (V_1,C_1)$ and $\inst_2 = (V_2,C_2)$ of CSP$(\A)$
such that $\inst_1[V_1 \cap V_2] = \inst_2[V_1 \cap V_2]$,
the instance $(V_1 \cup V_2, C_1 \cup C_2)$ is also satisfiable.
\end{definition}

\begin{sloppypar}
We want to underline the importance of the completeness condition
in the previous definition:
for example, consider the JEPD constraint language
$(<\nobreak ,=\nobreak ,>\nobreak )$ with domain $\rationals$
and the two satisfiable incomplete instances
$(\{a,x,b\}, \{a<x, x<b\})$ and $(\{a,y,b\}, \{a>y, y>b\})$.
The intersection of these instances contains no constraints, so it is trivially satisfiable.
However, their union is not satisfiable since the constraints
imply that $a < b$ and $a > b$ hold simultaneously.
\end{sloppypar}

Several prominent formalisms for qualitative spatial and temporal
reasoning in AI are known to have the patchwork property.
For example, the JEPD basic relations of Allen's Interval Algebra,
the Block Algebra, and the Cardinal Direction Calculus
have the patchwork property~\cite{Huang:kr2012,Lutz:Milicic:jar2007},
assuming that the standard representations from Section~\ref{sec:qstr} are used.
The picture is slightly more complex for RCC8 and RCC5.
Lutz and Mili\v{c}i\'{c}~\cite{Lutz:Milicic:jar2007} show that RCC8 restricted to the
real plane has the patchwork property, and Huang~\cite{Huang:kr2012} points out that this result
can be lifted to the multi-dimensional case via Bodirsky \& Wölfl's~\cite{Bodirsky:Wolfl:ijcai2011} representation.
Baader \& Rydval~\cite{Baader:Rydval:ijcar2020} also point this out
in a more general setting; we will come back to their results in Section~\ref{sec:applicability}.

Before we present the algorithm, we state a
lemma that is a direct consequence of the patchwork property.

\begin{lemma} \label{lem:patchwork-jepd}
Let $\A$ be a finite set of 
JEPD relations with the patchwork property
and assume that $\G \subseteq \boole{\A}$ is finite.
For any two satisfiable instances
$\inst_1 = (V_1,C_1)$ and $\inst_2 = (V_2,C_2)$ 
of CSP$(\G)$
admitting complete
certificates $\cert_1$ and $\cert_2$
such that $\cert_1[V_1 \cap V_2] = \cert_2[V_1 \cap V_2]$,
the instance $(V_1 \cup V_2, C_1 \cup C_2)$ 
is also satisfiable.
\end{lemma}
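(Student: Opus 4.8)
The plan is to reduce Lemma~\ref{lem:patchwork-jepd} directly to the patchwork property of $\A$ (Definition~\ref{def:patchwork}) by working with the certificates rather than the instances themselves. First I would invoke Proposition~\ref{prop:certif-sat} in the contrapositive direction: since $\inst_1$ and $\inst_2$ are satisfiable, they admit certificates, and the hypothesis already hands us \emph{complete} certificates $\cert_1$ and $\cert_2$ (which, as noted after Proposition~\ref{prop:certif-projection}, we may always assume exist by extending any certificate using the JE property). So we have complete satisfiable instances $\cert_1 = (V_1, C_1')$ and $\cert_2 = (V_2, C_2')$ of CSP$(\A)$.

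Next I would apply the patchwork property of $\A$ to $\cert_1$ and $\cert_2$. The only thing to check is the agreement condition $\cert_1[V_1 \cap V_2] = \cert_2[V_1 \cap V_2]$, but this is exactly the hypothesis of the lemma. Hence the patchwork property yields that $\cert := (V_1 \cup V_2, C_1' \cup C_2')$ is a satisfiable instance of CSP$(\A)$.

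The final step is to argue that this $\cert$ is a certificate for $(V_1 \cup V_2, C_1 \cup C_2)$, which by Proposition~\ref{prop:certif-sat} gives satisfiability of the latter. For this, take any constraint $R(v_1,\ldots,v_k)$ in $C_1 \cup C_2$; without loss of generality it lies in $C_1$. Since $\cert_1$ is a certificate for $\inst_1$, there is a term in the DNF $\A$-definition of $R$ all of whose atomic constraints belong to $C_1'$, hence to $C_1' \cup C_2'$. Thus $\cert$ implies every constraint of $C_1 \cup C_2$, so it is a certificate, completing the proof.

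The argument is essentially a bookkeeping exercise, so there is no serious obstacle; the one point that deserves care is making sure the completeness and agreement assumptions transfer verbatim from the certificates in the hypothesis to the input of the patchwork property. Everything else follows from Propositions~\ref{prop:certif-sat} and~\ref{prop:certif-projection} and the definitions.
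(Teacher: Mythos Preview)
Your proposal is correct and follows essentially the same approach as the paper: apply the patchwork property of $\A$ to the complete certificates $\cert_1,\cert_2$ to obtain satisfiability of $(V_1\cup V_2, C_1'\cup C_2')$, then verify this union is a certificate for $(V_1\cup V_2, C_1\cup C_2)$. The only cosmetic difference is that the paper splits the verification into three cases ($C_1\setminus C_2$, $C_2\setminus C_1$, $C_1\cap C_2$) whereas your ``WLOG in $C_1$'' argument handles all cases at once.
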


\begin{proof}
Let $\cert_1 = (V_1,C'_1)$, $\cert_2 = (V_2,C'_2)$, and
define the instance $\cert_{\cup}=(V_1 \cup V_2, C'_1 \cup C'_2)$ of CSP$(\A)$.
We claim that $\cert_{\cup}$ is a certificate for 
$(V_1 \cup V_2, C_1 \cup C_2)$.
First, we note that $\cert_{\cup}$ is satisfiable by Definition~\ref{def:patchwork}.
Now consider a constraint $c \in C_1 \cup C_2$.
Then $c$ is in one of the following sets:
$C_1 \setminus C_2$, $C_1 \setminus C_2$ or $C_1 \cap C_2$.
In the first case, $\cert_{\cup}[V_1] = \cert_1$ implies $c$.
Similarly, in the second case $\cert_{\cup}[V_2] = \cert_2$ implies $c$.
Finally, if $c$ is in the intersection of $C_1$ and $C_2$,
then $\cert_{\cup}[V_1 \cap V_2] = \cert_1[V_1 \cap V_2] = \cert_2[V_1 \cap V_2]$ implies $c$.
Thus, $\cert_{\cup}$ implies $(V_1 \cup V_2, C_1 \cup C_2)$.
\end{proof}

To simplify the presentation, we will use
a particular kind of tree decomposition.
A tree decomposition is {\em nice} if it fulfils the following properties:
\begin{itemize}
    \item 
    $X(r) = \varnothing$ and $X(\ell) = \varnothing$
    for the root $r$ and all leaf nodes $\ell$ in $T$.
    \item
    Every non-leaf node in $T$ is one of the following types:
    \begin{itemize}
        \item An {\bf introduce node}: a node $t$ with exactly one child $t'$
        such that $X(t) = X(t') \cup \{v\}$ for some
 $v \in V\setminus X(t')$.
        \item A {\bf forget node}: a node $t$ with exactly one child $t'$
        such that $X(t) = X(t') \setminus \{w\}$ for some
$w \in V \cap X(t')$.
        \item A {\bf join node}: a node $t$ with exactly two children $t_1$ and $t_2$
        such that $X(t) = X(t_1) = X(t_2)$.
    \end{itemize}
\end{itemize}
Given a tree decomposition $T$ of an $n$-vertex graph,
one can construct a nice tree decomposition of the same width 
and with $O(n)$ nodes in linear time~\cite{bodlaender1996efficient}. 

We are now ready to present the fpt algorithm.

\begin{theorem}
\label{thm:fpt}
Let $\A$ be a finite
constraint language with 
JEPD relations and the patchwork property.
Assume CSP$(\A)$ is decidable.
For any finite constraint language 
$\G \subseteq \boole{\A}$, CSP$(\G)$ is fpt
parameterized by the treewidth of the primal graph.
\end{theorem}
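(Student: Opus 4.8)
The plan is to design a standard dynamic-programming algorithm over a nice tree decomposition $(T,X)$ of the primal graph of the input instance $\inst = (V,C)$ of $\mathrm{CSP}(\G)$, where the table at each node records the set of ``viable'' complete $\A$-certificates of the subinstance induced by the current bag. Concretely, first I would compute an optimal-width nice tree decomposition with $O(n)$ nodes in fpt time using Bodlaender's algorithm together with the conversion to nice form cited in the excerpt. For a bag $X(t)$ of size at most $w+1$, a \emph{bag-certificate} is a complete satisfiable instance of $\mathrm{CSP}(\A)$ on the variable set $X(t)$ that implies every constraint of $C[X(t)]$; since $\A$ is finite and $|X(t)| \le w+1$, the number of candidate complete $\A$-instances on $X(t)$ is bounded by a function of $\A$ and $w$ only (each of the constantly many tuples of variables gets assigned one of the $|\A|$ basic relations of the appropriate arity), and each can be checked for being a bag-certificate in polynomial time using decidability of $\mathrm{CSP}(\A)$ and the certificate test described after Proposition~\ref{prop:certif-sat}. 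The table $\mathcal{T}(t)$ stores exactly those bag-certificates $\cert_t$ on $X(t)$ that extend to a complete certificate of the projection $\inst[V_t]$; the answer to the instance is ``yes'' iff $\mathcal{T}(r) \neq \varnothing$ at the root (where $X(r)=\varnothing$), which by Proposition~\ref{prop:certif-sat} is equivalent to satisfiability of $\inst$.

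The core of the argument is the correctness of the bottom-up recurrences for the four node types. Leaf nodes have empty bags and a trivial table. For an introduce node $t$ with child $t'$ introducing variable $v$: a bag-certificate $\cert_t$ on $X(t)$ is kept iff its restriction $\cert_t[X(t')]$ lies in $\mathcal{T}(t')$ and $\cert_t$ itself is a bag-certificate (i.e. satisfiable and implying $C[X(t)]$) --- the new variable $v$ only shares constraints with $V_t \setminus X(t)$ through $X(t')$ by the separator property of tree decompositions (property~\ref{def:treedecomp:t3}), so no constraint among already-forgotten variables involves $v$. For a forget node $t$ forgetting $w$ from child $t'$: $\mathcal{T}(t)$ is the set of restrictions $\cert_{t'}[X(t)]$ over all $\cert_{t'} \in \mathcal{T}(t')$. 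For a join node $t$ with children $t_1,t_2$ sharing the bag $X(t)$: a bag-certificate $\cert_t$ on $X(t)$ is kept iff $\cert_t \in \mathcal{T}(t_1)$ and $\cert_t \in \mathcal{T}(t_2)$. The ``only if'' direction of each recurrence follows from Proposition~\ref{prop:certif-projection} (certificates and completeness are preserved under projection). The ``if'' direction for introduce and forget is a routine gluing argument; the crucial case is the join node, and here is where the patchwork property enters.

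The main obstacle --- and the place where Lemma~\ref{lem:patchwork-jepd} does the real work --- is showing that at a join node two partial solutions agreeing on the shared bag can be merged. Suppose $\cert_t \in \mathcal{T}(t_1) \cap \mathcal{T}(t_2)$; then there are complete certificates $\cert^{(1)}$ of $\inst[V_{t_1}]$ and $\cert^{(2)}$ of $\inst[V_{t_2}]$ with $\cert^{(1)}[X(t)] = \cert_t = \cert^{(2)}[X(t)]$. By property~\ref{def:treedecomp:t3}, $V_{t_1} \cap V_{t_2} = X(t)$, so $\cert^{(1)}[V_{t_1} \cap V_{t_2}] = \cert^{(2)}[V_{t_1} \cap V_{t_2}]$, and since both are complete and $\A$ has the patchwork property, Lemma~\ref{lem:patchwork-jepd} (applied to the instances $\inst[V_{t_1}]$ and $\inst[V_{t_2}]$) yields that $\inst[V_t] = \inst[V_{t_1} \cup V_{t_2}]$ is satisfiable; completing the resulting certificate (possible by JEness, as noted in the excerpt) and checking that its restriction to $X(t)$ equals $\cert_t$ --- which holds because the merged certificate on $X(t)$ is exactly $\cert^{(1)}[X(t)] \cup \cert^{(2)}[X(t)] = \cert_t$ --- places $\cert_t$ in $\mathcal{T}(t)$. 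Finally, for the running-time bound: each table has size bounded by a function $g(\A,w)$, each entry is produced by combining at most two child entries with a polynomial-time certificate test, there are $O(n)$ nodes, and the tree decomposition is computed in $f(w) \cdot n^{O(1)}$ time; multiplying through gives a total running time of the form $h(w) \cdot n^{O(1)}$ with $h$ depending only on $\A$ and $w$, establishing that $\mathrm{CSP}(\G)$ is fpt parameterized by treewidth.
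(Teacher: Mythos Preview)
Your proposal is correct and follows essentially the same approach as the paper: dynamic programming over a nice tree decomposition, storing at each node the set of complete $\A$-certificates on the bag that extend to certificates of $\inst[V_t]$, with the recurrences you describe and Lemma~\ref{lem:patchwork-jepd} supplying the gluing. One small correction: you write that the patchwork property ``enters'' only at the join node and that the introduce case is a ``routine gluing argument,'' but in fact the introduce case needs exactly the same application of Lemma~\ref{lem:patchwork-jepd} --- you are merging a certificate on $V_{t'}$ with a bag-certificate on $X(t)$, and their overlap is $X(t')$, so patchwork is what makes that union satisfiable as well.
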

\begin{proof}
Let $\inst = (V, C)$ be an instance of CSP$(\G)$ and assume
$(T,X)$ is a nice tree decomposition of its primal graph.
The algorithm works as follows:
for every node $t \in T$, we compute the set $R(t)$
consisting of all certificates for~$\inst[V_t]$
projected onto $X(t)$.
Clearly, $\inst$ is satisfiable if and only if 
$R(r) \neq \varnothing$, where~$r$ is the root of $T$.
We compute $R(t)$ using dynamic programming from the leaves upwards,
i.e. a node is processed only if all its children have already been processed.

To start, we set $R(\ell) = \{ (\varnothing, \varnothing) \}$ for all leaf nodes $\ell \in T$.
Since the decomposition is nice, we only need to consider three cases.
If $t$ is an introduce node with a child $t'$,
we enumerate certificates $\cert$ for $\inst[X(t)]$
and add $\cert$ to $R(t)$ if $\cert[X(t')]$ is in $R(t')$.
If $t$ forgets a variable $w$ and has a child $t'$,
then $R(t)$ is obtained by enumerating certificates 
in $R(t')$ and removing $w$ together with all constraints 
involving it from the certificate.
Finally, if $t$ joins nodes $t_1$ and $t_2$,
then set $R(t) = R(t_1) \cap R(t_2)$
(recall that we may assume the certificates $\cert$ that we consider are complete).

To show the correctness of the algorithm, 
we prove the following claim for every $t \in T$.

\begin{claim} \label{cl:correctness}
  $\cert$ is a certificate for $\inst[V_t]$ if and only if 
  $\cert[X(t)] \in R(t)$.
\end{claim}
We prove the claim by induction.
In the base case, $R(\ell) = \{ (\varnothing,\varnothing) \}$ 
is indeed the set of all certificates for
$\inst[V_\ell]$ 
for all leaves $\ell$ in $T$, since $V_\ell = X(\ell) = \varnothing$.

If $t$ is an introduce node with child $t'$, 
consider a certificate $\cert$ for $\inst[V_t]$.
Note that $\cert[V_{t'}]$ is a certificate for $\inst[V_{t'}]$,
so $\inst[X(t')] \in R(t')$ by the inductive hypothesis.
Furthermore, $\cert[X(t)]$ is a certificate for $\inst[X(t)]$, 
thus the algorithm adds it to $R(t)$. 
In the opposite direction, consider ${\cal K} \in R(t)$ and observe that,
by construction, there is a certificate $\cert'$ for $\inst[V_{t'}]$ 
such that ${\cal K}[X(t')] = \cert'[X(t')]$.
Since ${\cal K}$ is a certificate for $X(t)$ and 
$X(t) \cap V_{t'} = X(t')$, the union of ${\cal K}$ and $\cert'$
is a certificate for $\inst[V_t]$ by
the patchwork property,
and ${\cal K}$ is precisely its projection onto $X(t)$.

If $t$ is a forget node with a child $t'$,
consider a certificate $\cert$ for $\inst[V_{t'}]$
and note that, since $V_t = V_{t'}$, it is also
a certificate for $\inst[V_t]$.
By the inductive hypothesis, $\cert[X(t')] \in R(t')$,
hence, the algorithm adds $\cert[X(t)]$ to $R(t)$.
In the opposite direction, consider 
${\cal K}[X(t)] \in R(t)$ derived from 
${\cal K} \in R(t')$ and note that the inductive
hypothesis implies that ${\cal K}$ and, subsequently,
${\cal K}[X(t)]$ are projections of a certificate for $\inst[V_t]$. 

If $t$ joins nodes $t_1$ and $t_2$, 
consider a certificate $\cert$ for $\inst[V_t]$.
Note that it is also a certificate for $\inst[V_{t_1}]$ and $\inst[V_{t_2}]$,
since $V_{t_1}, V_{t_2} \subseteq V_t$. 
By the inductive hypothesis, 
$\cert[X(t_1)] = \cert[X(t_2)] = \cert[X(t)] \in R(t_1) \cap R(t_2)$
and the algorithm adds it to $R(t)$.
In the opposite direction, consider ${\cal K} \in R(t) = R(t_1) \cap R(t_2)$.
By the inductive hypothesis, there are certificates 
$\cert_1$ for $\inst[V_{t_1}]$ and $\cert_2$ for $\inst[V_{t_2}]$
such that $\cert_1[X(t)] = \cert_2[X(t)] = {\cal K}$.
By the third property of tree decompositions,
$V_{t_1} \cap V_{t_2} \subseteq X(t)$,
thus the union of $\cert_1$ and $\cert_2$ is 
a certificate for $\inst[V_t]$ by
Lemma~\ref{lem:patchwork-jepd},
and ${\cal K}$ is precisely its projection onto $X(t)$.

\medskip

We continue with the time complexity of the algorithm.
Let $w$ denote the width of the decomposition $(T, X)$,
let~$k$ denote the maximum arity of relations in $\A$,
and assume that $\tau(m)$ is the time required to enumerate
certificates for an instance of CSP$(\G)$ with $m$ variables.
Note that since $\A$ and $\G$ are finite, the function $\tau$
depends only on the number of variables.
Furthermore, $\tau(m)$ is an upper bound on the number of 
complete satisfiable instances of CSP$(\A)$ with $m$ variables.

\begin{claim} \label{cl:runtime}
  For every $t \in T$, the computation of $R(t)$ requires at most 
  $\tau(w+1)^2 \cdot w^{O(k)}$ time.
\end{claim}

First, note that $\tau(\abs{X(t)})$ is an upper bound on $\abs{R(t)}$.
Furthermore, taking a projection of a certificate onto $U \subseteq V$ requires
$\abs{U}^{O(k)}$ time.
If $t$ is an introduce node, the computation of $R(t)$ requires at most
\[\tau(w+1) \abs{R(t')} \cdot \abs{X(t')}^{O(k)} \leq \tau(w+1)^2 \cdot w^{O(k)}\]
time.
If $t$ is a forget node, the computation requires at most 
\[\abs{R(t')} \cdot \abs{X(t)}^{O(k)} \leq \tau(w+1) \cdot w^{O(k)}\]
time.
Finally, if $t$ joins nodes $t_1$ and $t_2$, the computation of $R(t)$
takes at most 
\[\abs{R(t_1)} \abs{R(t_2)} \cdot 
O(\abs{X(t)}^{k+1}) \leq \tau(w+1)^2 \cdot w^{O(k)}\]
time, where 
$w^{O(k)}$ 
accounts for the comparison of a pair of certificates.

There are $O(n)$ nodes in the tree $T$, so the algorithm solves CSP$(\G)$ 
in 
$\tau(w+1)^2 \cdot w^{O(k)} \cdot O(n)$ 
time.
The term 
$\tau(w+1)^2 \cdot w^{O(k)}$
depends only on the parameter $w$, 
hence CSP$(\G)$ is fpt.
\end{proof}

We continue by taking a closer look at some classical 
CSPs for qualitative spatial and temporal reasoning.

\begin{corollary}
  \label{cor:rcc,aia,ba,cdc}
  CSP$(\disj{\B})$ is solvable in 
  \begin{enumerate}
      \item $2^{O(w^2)} \cdot O(n)$ time if 
          $\B$ is $\B_{\rm RCC5}$ or $\B_{\rm RCC8}$, 
      \item $2^{O(w \log w)} \cdot O(n)$ time if 
          $\B$ is $\IAlang$, $\BAlang$ or $\CDClang$. 
  \end{enumerate}
\end{corollary}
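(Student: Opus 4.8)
The plan is to apply Theorem~\ref{thm:fpt} to each formalism in turn, so the only real work is (i) checking the hypotheses of the theorem for the relevant basic structure $\B$, and (ii) replacing the abstract bound $\tau(w+1)^2 \cdot w^{O(k)}$ by an explicit function of $w$. For (i), recall that each $\disj{\B}$ is of the form $\boole{\B}$-definable over a finite JEPD structure $\B$: indeed $\disj{\B} \subseteq \boole{\B}$ as noted in the preliminaries, since each disjunction of basic relations is a quantifier-free formula using only the relations of $\B$. Decidability of CSP$(\B)$ is classical for all five formalisms (the basic relations are first-order definable in $({\rationals};<)$ in the IA, BA$_d$ and CDC cases, and in the Bodirsky--W\"olfl representation for RCC5/RCC8), and the patchwork property holds for $\B$ in every case by the references cited just before Lemma~\ref{lem:patchwork-jepd} (Lutz--Mili\v{c}i\'c and Huang, using the standard representations from Section~\ref{sec:qstr}). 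So Theorem~\ref{thm:fpt} applies and yields a running time of the shape $\tau(w+1)^2 \cdot w^{O(k)} \cdot O(n)$ with $k$ the maximum arity of relations in $\B$.

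For (ii), the key observation is that $\tau(m)$ can be bounded above by the number of \emph{complete} instances of CSP$(\B)$ on $m$ variables, and a complete instance is just a choice of one basic relation of $\B$ for each tuple of variables of the appropriate arity. In all five formalisms $\B$ is binary ($k=2$), so a complete instance is determined by assigning, to each of the $\binom{m}{2}$ ordered-or-unordered pairs, one of the $|\B|$ basic relations; hence $\tau(m) \le |\B|^{O(m^2)} = 2^{O(m^2)}$, the hidden constant depending only on $|\B|$. Plugging $m = w+1$ into $\tau(w+1)^2 \cdot w^{O(k)} \cdot O(n)$ gives $2^{O(w^2)} \cdot O(n)$, which is exactly the bound claimed in part~1 for RCC5 and RCC8. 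For IA, BA$_d$ and CDC (part~2), one needs the sharper count: here the basic relations of $\B$ are coordinatewise products of relations over $({\rationals};<)$ (thirteen IA relations $\leftrightarrow$ pairs of endpoint orderings; CDC relations $\leftrightarrow$ pairs from $\{<,=,>\}$; BA$_d$ $\leftrightarrow$ $d$-tuples of IA relations), so a complete satisfiable instance is, up to a bounded factor, a linear (pre)order on the $O(w)$ relevant coordinates of the $w+1$ variables; the number of linear orders on $O(w)$ points is $w^{O(w)} = 2^{O(w\log w)}$, so $\tau(w+1) = 2^{O(w\log w)}$ and the total becomes $2^{O(w\log w)} \cdot O(n)$.

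The main obstacle is the second bound: one must argue carefully that for IA, BA$_d$ and CDC the number of complete satisfiable CSP$(\B)$-instances on $w+1$ variables is only $2^{O(w\log w)}$ rather than the trivial $2^{O(w^2)}$. The cleanest way is to exhibit, for each such formalism, an injection from complete satisfiable instances to linear preorders on a set of $O(w)$ ``coordinate points'' (for IA: the $2(w+1)$ interval endpoints; for BA$_d$: the $2d(w+1)$ endpoints, with $d$ a constant absorbed into the $O$; for CDC: the $2(w+1)$ coordinate values), using the fact that a complete satisfiable instance pins down the full qualitative ordering of these points. Since there are at most $(2c(w+1))^{2c(w+1)} = 2^{O(w\log w)}$ linear preorders on $N = O(w)$ points, this yields the stated $\tau$ bound, and everything else is a direct substitution into Theorem~\ref{thm:fpt}. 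One should also remark that enumerating these certificates within the claimed time is routine, since each can be generated and tested for satisfiability in time polynomial in $w$.
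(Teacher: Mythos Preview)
Your proposal is correct and follows essentially the same route as the paper: apply Theorem~\ref{thm:fpt}, use the trivial $|\B|^{O(w^2)}$ certificate count for RCC5/RCC8, and for IA, BA$_d$ and CDC inject complete satisfiable instances into ordered partitions (equivalently, linear preorders) of the $O(w)$ endpoint/coordinate values to obtain the $2^{O(w\log w)}$ bound. The paper additionally cites specific references for constant-amortised-time generation of partitions and permutations to justify that enumeration fits within the stated bound, which you gloss as ``routine''; otherwise the arguments coincide.
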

\begin{proof}
Consider Claim~\ref{cl:runtime} in Theorem~\ref{thm:fpt}.
Since the languages under consideration are JEPD and binary,
the total number of instances of CSP$(\B)$ with $w$ variables 
is $\abs{\B}^{w^2} = 2^{O(w^2)}$, 
since $\abs{\B}$ is constant.
Solving instances of these CSPs takes polynomial time, 
so $\tau(w) = 2^{O(w^2)}$.
This yields the result for RCC5 and RCC8.

For the remaining cases, we need a tighter bound on $\tau(w)$.
We show that the number of complete certificates for these problems is 
at most $2^{O(w \log w)}$.
Observe that an ordered partition of a set $S$ of size $n$
is a surjective function $\pi : S \rightarrow \range{1}{r}$
for some $r \in \range{1}{n}$.
Any two elements of $S$ can be compared with the usual
relations $\{<,=,>\}$ according to the values
assigned to them by $\pi$.
Observe that that there are at most $n^n = 2^{O(n \log{n})}$
ordered partitions of $S$.

Every complete satisfiable instance of CSP$(\IAlang)$
corresponds to a unique ordered partition of the endpoints 
of the intervals~(see e.g.~\cite{Stockman:mscthesis}).
For an instance with $w$ variables (i.e. $2w$ endpoints),
there are at most $2^{O(w \log w)}$ such partitions.
Thus, an instance of CSP$(\IAdisjlang)$ with $w$ variables
admits at most $2^{O(w \log{w})}$ complete certificates.
Given an ordered partition on the endpoints of the intervals,
there is a polynomial-time procedure that recovers the corresponding
complete satisfiable instance of CSP$(\IAlang)$, if one exists:
for every variable, check that its left endpoint 
precedes its right endpoint -- 
if not, then there is no corresponding instance;
otherwise, deduce the relation between every pair of variables
according to the ordered partition of their endpoints.
The last step works since $\IAlang$ is JEPD.
Finally, observe that generating all (unordered)
partitions of a set takes $O(1)$ amortised
time per partition~\cite{Ichiro:jip1984}
and generating all permutations
takes $O(1)$ time per permutation~\cite{Sedgewick:acmcs77}.
Thus, $\tau(w) = 2^{O(w \log w)}$ for CSP$(\IAdisjlang)$.

The Block Algebra BA$_d$ can be viewed as an extension 
of Allen's Interval Algebra to $d$ dimensions, and the complete certificates
correspond to $d$ ordered partitions of the endpoints. 
We have $( (2w)^{2w} )^d = 2^{O(w \log w)}$ since $d$ is fixed, so $\tau(w) = 2^{O(w \log w)}$ for CSP$(\BAdisjlang)$.

Every satisfiable instance of the CSP$(\CDClang)$
corresponds to two ordered partitions -- 
one for the $x$ coordinates and
one for the $y$ coordinates. 
There are $( w^{w} )^2 = 2^{O(w \log w)}$ 
such pairs of partitions,
so $\tau(w) = 2^{O(w \log w)}$ for CSP$(\CDCdisjlang)$.
\end{proof}

We remark that the proof of Corollary~\ref{cor:rcc,aia,ba,cdc} 
also shows that CSP$(\G)$ for any finite $\G \in \boole{\B}$ 
is solvable in $2^{O(w^2)} \cdot O(n)$ time if 
$\B$ is $\B_{\rm RCC5}$ or $\B_{\rm RCC8}$, 
and in $2^{O(w \log w)} \cdot O(n)$ time if 
$\B$ is $\IAlang$, $\BAlang$ or $\CDClang$. 

\smallskip

So far we have considered only finite constraint languages $\G \in \boole{\B}$.
If the language $\G$ is infinite, the representation
of relations becomes problematic since the maximal arity is no longer bounded by a constant. 
In this case, the DNF formulas defining relations may be arbitrarily large. 
The time complexity of checking whether an instance of CSP$(\B)$ is a certificate 
for an instance of CSP$(\G)$ depends on the size of this representation.
To circumvent this difficulty, we consider two possibilities --
an oracle model and a restricted version of CSP, 
where the scope of every constraint contains only distinct variables.
We remark that in both cases the fpt algorithm from Theorem~\ref{thm:fpt} 
solves CSP$(\G)$ even if $\G$ is infinite.

In the oracle model we assume that, given a constraint
$R(\bar{v})$ with $R \in \G$, the time complexity of 
checking whether an instance $\cert$ of CSP$(\B)$ implies 
$R(\bar{v})$ is in polynomial time in the size of $\cert$,
i.e. it is independent of the representation of~$R$.
Clearly, Claim~\ref{cl:runtime} still holds in this case
since the time to compute the record~$R(t)$ only depends on $\abs{X(t)}$.
Hence, CSP$(\G)$ is fpt in the oracle model even if~$\G$ is infinite.

Alternatively, we can restrict CSP$(\G)$ by disallowing 
repeated variables in the scopes of its constraints.
Note that if the language $\B$ is JD, e.g. if $\B$ contains equality,
then this restriction does not affect the expressive power of the language:
one can introduce many copies of a variable $x$ 
by adding constraints $x = x', x' = x'', \ldots$
and use those in place of repeated variables.
When the primal treewidth of an instance of CSP$(\G)$ is bounded by $w$
and the scopes of constraints contain distinct variables,
no constraint can have arity larger than $w+1$.
Thus, the size of the DNF formula defining any relation in this instance
is bounded by a function of $w$ and the computation of 
the record $R(t)$ only depends on $\abs{X(t)}$.

\section{Tight Lower Bounds}
\label{sec:lowerbounds}

By Corollary~\ref{cor:rcc,aia,ba,cdc}, CSPs for 
Cardinal Direction Calculus, Allen's Interval Algebra and Block Algebra
admit algorithms running in $2^{O(w\log{w})} \cdot n^{O(1)}$ time
on instances with primal treewidth $w$.
It is natural to ask whether the dependence on $w$ can be improved.
In this section we provide evidence that such an improvement is improbable
by establishing tight lower bounds on the running time 
of the algorithms for these problems
assuming the Exponential Time Hypothesis (ETH)~\cite{impagliazzo98}.
{\sc 3-Satisfiability} is the problem of deciding 
whether a propositional formula in conjunctive normal form
with at most three literals in each term admits a satisfying assignment.
The ETH is a standard complexity assumption that rules out
the existence of an algorithm for {\sc 3-Satisfiability} 
running in subexponential time, i.e. there is no algorithm
that runs in $2^{o(n)}$ time where $n$
is the number of variables.

We start by showing a reduction from
the following problem to CSP$(\CDCdisjlang)$:

\medskip
\noindent
{\sc Vertex Colouring} 
\smallskip \\
{\sc Instance}: $(G,k)$, where $G$ is an undirected graph with 
vertex set $V(G)$ and edge set $E(G)$,
and $k \in \naturals$ \\
{\sc Question}: Does there exist $\chi : V(G) \rightarrow \range{1}{k}$
such that $\chi(u) \neq \chi(v)$ for all $(u,v) \in E(G)$?
\medskip

Such a mapping $\chi$ is said to be a \emph{proper $k$-colouring} of $G$.
If such $\chi$ exists, we say that $G$ is \emph{$k$-colourable}.
Lokshtanov et al.~\cite{Lokshtanov:etal:sicomp2018} introduced
a framework for showing ETH-based lower bounds on parameterized algorithms
and used it to prove the following theorem:

\begin{theorem} \label{thm:colouring-lowerbound}
  {\sc Vertex Colouring} cannot be solved in 
  $2^{o(w \log{w})} \cdot n^{O(1)}$ time
  on graphs with treewidth $w$ unless the ETH fails. 
\end{theorem}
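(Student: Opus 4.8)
The plan is to derive this bound from the Lokshtanov--Marx--Saurabh framework~\cite{Lokshtanov:etal:sicomp2018} via a polynomial-time reduction into \textsc{Vertex Colouring} in which both the number of colours and the treewidth of the constructed graph are $\Theta(k)$, starting from a ``grid'' problem that is already known to be hard for running time $2^{o(k\log k)}$. Concretely, I would start from \textsc{$k\times k$-Clique}: given a graph $H$ on vertex set $[k]\times[k]$, is there a set of $k$ pairwise-adjacent vertices containing exactly one vertex from each row $\{i\}\times[k]$? Equivalently, this is the satisfiability problem for a binary CSP on $k$ variables of domain $[k]$ with a constraint on every pair of variables, and by~\cite{Lokshtanov:etal:sicomp2018} it cannot be solved in $2^{o(k\log k)}\cdot\abs{V(H)}^{O(1)}$ time unless the ETH fails; the latter is obtained by reducing \textsc{3-Satisfiability} on $N$ variables after splitting them into $k$ groups of $N/k$ variables each, with $k$ chosen so that $2^{N/k}\le k$ (hence $k=\Theta(N/\log N)$), so that a hypothetical $2^{o(k\log k)}$ algorithm would run in $2^{o(N)}$ time, contradicting~\cite{impagliazzo98}.

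Given $H$, I would build in polynomial time a graph $G$ together with $q=k+1$ colours so that $G$ is $q$-colourable iff $H$ has the required clique and $\mathrm{tw}(G)=\Theta(k)$. The construction uses three ingredients: (i) a \emph{palette} clique $Z=\{z_1,\dots,z_q\}$ --- in any $q$-colouring its vertices receive all $q$ colours, and after permuting colour names we may assume $\chi(z_c)=c$; (ii) for each row $i$ a \emph{selector} vertex $r_i$ adjacent to $z_{k+1}$, which forces $\chi(r_i)\in[k]$, so that $c_i:=\chi(r_i)$ encodes the choice of the vertex $(i,c_i)$ in row $i$; (iii) for every non-edge $\{(i,a),(j,b)\}$ of $H$ with $i\neq j$, a \emph{forbidden-pair} gadget consisting of two new vertices $w_1,w_2$ with $w_1\sim r_i$, $w_1\sim w_2$, $w_2\sim r_j$, where $w_1$ is joined to every $z_c$ with $c\in[k]\setminus\{a\}$ and $w_2$ to every $z_c$ with $c\in[k]\setminus\{b\}$, so that $\chi(w_1)\in\{a,k+1\}$ and $\chi(w_2)\in\{b,k+1\}$. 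A short case distinction on $\chi(r_i),\chi(r_j)\in[k]$ shows that the gadget admits a proper colouring if and only if \emph{not} ($\chi(r_i)=a$ and $\chi(r_j)=b$), and that it imposes no other constraint: the spare colour $k+1$ absorbs every ``inactive'' gadget and is automatically avoided by every $r_i$ since $k+1\notin[k]$, and the corner case $a=b$ behaves the same way. Since gadget vertices of distinct gadgets are pairwise non-adjacent, $G$ has a tree decomposition consisting of a root bag $Z\cup\{r_1,\dots,r_k\}$ with one child bag $Z\cup\{r_i,r_j,w_1,w_2\}$ per gadget; hence $\mathrm{tw}(G)\le 2k$, while $\mathrm{tw}(G)\ge q-1=k$ because $Z$ is a clique, so $\mathrm{tw}(G)=\Theta(k)$ and $q=\Theta(\mathrm{tw}(G))$.

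Correctness is then routine in both directions (a clique $\{(i,c_i)\}_{i\in[k]}$ in $H$ gives the colouring $\chi(r_i)=c_i$, all gadgets being satisfiable because no chosen pair is a non-edge; conversely, any $q$-colouring of $G$, normalised so that $\chi(z_c)=c$, yields values $c_i:=\chi(r_i)\in[k]$ with $(i,c_i)\sim(j,c_j)$ for all $i\neq j$, since otherwise the corresponding gadget would be uncolourable). Chaining the bounds: a $2^{o(w\log w)}\cdot n^{O(1)}$ algorithm for \textsc{Vertex Colouring}, run on $G$ with $w=\mathrm{tw}(G)=\Theta(k)$ and $n=\abs{V(G)}=k^{O(1)}$, would decide \textsc{$k\times k$-Clique} in $2^{o(k\log k)}\cdot\abs{V(H)}^{O(1)}$ time and contradict the ETH; this matches the trivial $2^{O(w\log w)}\cdot n^{O(1)}$ dynamic-programming algorithm over a tree decomposition (whose state at a bag is the partition of the bag into colour classes), so the bound is tight. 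I expect the two main difficulties to be: first, getting the base $2^{o(k\log k)}$-hardness of the grid problem exactly right --- the delicate point is the group-size trade-off in the reduction from \textsc{3-Satisfiability}, which is precisely where the extra logarithmic factor in the exponent originates; and second, keeping $\mathrm{tw}(G)=\Theta(k)$ while using only $k+O(1)$ colours, which forces the forbidden-pair gadget to forbid a single conjunction \emph{and nothing more} --- in particular the naive gadget in which one vertex must choose between two colours is unusable, since it additionally enforces the symmetric (and unwanted) constraint.
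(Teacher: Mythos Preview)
The paper does not prove this theorem at all: it is stated immediately after the sentence ``Lokshtanov et al.~\cite{Lokshtanov:etal:sicomp2018} introduced a framework for showing ETH-based lower bounds on parameterized algorithms and used it to prove the following theorem,'' and is then used as a black box in the reduction of Theorem~\ref{thm:cdc-lowerbound}. So there is no ``paper's own proof'' to compare against; what you have written is a self-contained reconstruction of (a variant of) the Lokshtanov--Marx--Saurabh argument.

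That said, your reconstruction is sound. The forbidden-pair gadget does exactly what you claim: with $\chi(w_1)\in\{a,k+1\}$ and $\chi(w_2)\in\{b,k+1\}$ forced by the palette adjacencies, the only uncolourable configuration is $\chi(r_i)=a\wedge\chi(r_j)=b$, and in every other case one of $w_1,w_2$ can take its ``named'' colour while the other takes $k+1$ (the $a=b$ corner case included). The tree decomposition with root bag $Z\cup\{r_1,\dots,r_k\}$ and one leaf bag per gadget is valid and gives $\mathrm{tw}(G)\le 2k$, while the palette clique gives $\mathrm{tw}(G)\ge k$, so $w=\Theta(k)$ and $q=k+1=\Theta(w)$ as required. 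The only thing worth tightening in a write-up is that $\abs{V(G)}$ is polynomial in $\abs{V(H)}$ because there are at most $\binom{k}{2}\cdot k^2$ non-edge gadgets, each contributing two vertices --- you assert $n=k^{O(1)}$, which is the right order, but it should be stated as polynomial in the input $H$ (which has $k^2$ vertices). For the purposes of the present paper, though, simply citing~\cite{Lokshtanov:etal:sicomp2018} is what the authors do and is entirely adequate.
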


\noindent
Before we present our reduction, we prove two
well-known useful observations about treewidth.

\begin{lemma} \label{lem:colour-treewidth}
  Graphs with treewidth $w$ are $(w+1)$-colourable.
\end{lemma}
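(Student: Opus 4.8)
The plan is to prove the stronger statement that every graph $G$ of treewidth at most $w$ is $w$-\emph{degenerate}, i.e.\ every subgraph of $G$ has a vertex of degree at most $w$; the colouring bound then follows from the standard greedy argument. First I would recall that a tree decomposition witnessing treewidth $\le w$ has all bags of size at most $w+1$, and that every subgraph $H$ of $G$ inherits a tree decomposition of width $\le w$ by restricting each bag to $V(H)$ (the three defining properties are preserved and bags only shrink). Hence it suffices to show that an arbitrary graph $H$ equipped with a width-$\le w$ tree decomposition $(T,X)$ contains a vertex of degree at most $w$.

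To locate such a vertex, I would first normalise $(T,X)$: as long as some leaf $\ell$ of $T$ satisfies $X(\ell) \subseteq X(p)$ for its parent $p$, delete $\ell$; this preserves all three tree-decomposition properties and does not increase the width. If this process ends with a single-node tree then $\abs{V(H)} \le w+1$ and the claim is trivial; otherwise every remaining leaf bag contains a vertex absent from its parent's bag. Pick such a leaf $\ell$ with parent $p$ and a vertex $v \in X(\ell) \setminus X(p)$. By property~\ref{def:treedecomp:t3}, the nodes whose bags contain $v$ induce a connected subtree of $T$; since $v \notin X(p)$, that subtree consists only of $\ell$, so $v$ appears in no bag other than $X(\ell)$. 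By property~\ref{def:treedecomp:t2}, every neighbour of $v$ in $H$ shares a bag with $v$ and therefore lies in $X(\ell)$, whence $\deg_H(v) \le \abs{X(\ell)} - 1 \le w$.

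With $w$-degeneracy established, I would conclude as follows. Repeatedly removing a vertex of degree at most $w$ from $G$ yields an ordering $u_1,\dots,u_n$ of $V(G)$ in which each $u_i$ has at most $w$ neighbours among $u_{i+1},\dots,u_n$. Colour the vertices in the reverse order $u_n, u_{n-1}, \dots, u_1$ using colours from $\range{1}{w+1}$: when $u_i$ is processed, at most $w$ of its neighbours have already received a colour, so at least one colour remains available. The result is a proper $(w+1)$-colouring of $G$.

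The only delicate point is the leaf-pruning step together with the appeal to the connectivity property to argue that $v$ occurs in exactly one bag; everything else is routine. An essentially equivalent alternative avoids degeneracy altogether and inducts directly on $\abs{V(G)}$: find a vertex $v$ of degree $\le w$ exactly as above, $(w+1)$-colour $G - v$ by the induction hypothesis (its treewidth is still $\le w$), and extend the colouring to $v$, which has at most $w$ already-coloured neighbours.
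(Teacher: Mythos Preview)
Your proof is correct. The leaf-pruning argument is carefully stated, the appeal to property~\ref{def:treedecomp:t3} to conclude that $v$ lies in a single bag is exactly what is needed, and the degeneracy-to-colouring step is standard and correctly executed.

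The paper takes a different route. It invokes the alternative characterisation of treewidth as $\min\{\omega(H)-1 : G \text{ is a subgraph of a chordal graph } H\}$, then uses the fact that chordal graphs are perfect (so $\chi(H)=\omega(H)=w+1$) and that colourability is inherited by subgraphs. This is a two-line argument, but it outsources the work to two nontrivial results quoted from Diestel's textbook: the chordal-completion characterisation of treewidth and the perfection of chordal graphs. Your argument, by contrast, is fully self-contained from the paper's own definition of tree decompositions and establishes the stronger intermediate fact of $w$-degeneracy. For a paper that already has tree decompositions as a primitive, your approach arguably fits more naturally; the paper's proof is terser but leans on heavier imported machinery.
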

\begin{proof}
The treewidth of a graph $G$ is alternatively characterised as
$\min \{ \omega(H) - 1 \mid G \textnormal{ is a subgraph of }
H, H \textnormal{ is chordal} \}$,
where $\omega(H)$ is the size of the largest clique in $H$
(e.g. see~\cite[Chapter~12.4]{diestel:graphtheory}).
Chordal graphs are perfect graphs and this implies that $H$ is colourable with $\omega(H) = w+1$ colours
(e.g. see~\cite[Chapter~5.5]{diestel:graphtheory}).
Since $G$ is a subgraph of $H$, it is also $(w+1)$-colourable.
\end{proof}

An induced subgraph $H$ of $G$
is a graph such that $V(H) \subseteq V(G)$
and $(u,v) \in E(H)$ if and only if both $(u,v) \in E(G)$ 
and $u,v \in V(H)$.

\begin{lemma} \label{lem:treewidth-add-variables}
  Let $H$ be an induced subgraph of $G$ and
  let~$w$ be the treewidth of~$H$.
  Then the treewidth of $G$ is at most $w + \abs{V(G)} - \abs{V(H)}$.
\end{lemma}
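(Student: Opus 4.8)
The plan is to take an optimal tree decomposition of $H$ and extend it by brute force to cover the extra vertices of $G$. First I would let $(T, X)$ be a tree decomposition of $H$ of width $w$, and set $U = V(G) \setminus V(H)$, so $|U| = |V(G)| - |V(H)|$. I would then define a new decomposition $(T, X')$ of $G$ on the \emph{same} tree $T$ by setting $X'(t) = X(t) \cup U$ for every node $t$. The width of this decomposition is at most $(w+1) + |U| - 1 = w + |V(G)| - |V(H)|$, which is the claimed bound, so it only remains to check that $(T, X')$ is a valid tree decomposition of $G$.

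The verification has three parts, matching the three tree-decomposition axioms. For coverage (property~\ref{def:treedecomp:t1}): every vertex of $V(H)$ appears in some bag of $(T,X)$, hence in the corresponding bag of $(T,X')$, and every vertex of $U$ appears in every bag of $(T,X')$; together these give $V(G)$. For the edge condition (property~\ref{def:treedecomp:t2}): take an edge $(u,v) \in E(G)$. If both endpoints lie in $V(H)$, then since $H$ is an \emph{induced} subgraph, $(u,v) \in E(H)$, so some bag $X(t)$ contains both, whence $X'(t)$ does too. If at least one endpoint lies in $U$, then that endpoint is in every bag of $X'$, and the other endpoint (whether in $U$ or in $V(H)$) appears in at least one bag, so some $X'(t)$ contains both. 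For connectivity (property~\ref{def:treedecomp:t3}): for $v \in V(H)$, the set of nodes $t$ with $v \in X'(t)$ equals the set with $v \in X(t)$, which induces a connected subtree by the validity of $(T,X)$; for $v \in U$, that set is all of $V(T)$, which is trivially connected.

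The only subtlety — and the one place the hypothesis is actually used — is the edge condition when both endpoints are in $V(H)$: this is exactly why $H$ must be an \emph{induced} subgraph rather than merely a subgraph, since otherwise $E(G)$ could contain an edge between two vertices of $V(H)$ that is absent from $E(H)$ and hence covered by no bag of $(T,X)$. I do not expect any real obstacle here; the argument is a routine padding construction, and the brief definition of induced subgraph immediately preceding the lemma statement is precisely what makes the key case go through.
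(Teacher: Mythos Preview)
Your proposal is correct and takes exactly the same approach as the paper: define $U = V(G) \setminus V(H)$, add $U$ to every bag of an optimal tree decomposition of $H$, and read off the width bound. The paper's proof simply asserts that the result is a tree decomposition of $G$ without spelling out the three axioms, whereas you verify them explicitly (and correctly identify that the induced-subgraph hypothesis is needed only for the edge condition when both endpoints lie in $V(H)$).
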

\begin{proof}
Denote $V(G) \setminus V(H)$ by $U$.
Given a tree decomposition $(T,X)$ of $H$ of width $w$,
observe that adding all variables in $U$ to 
every bag $X(t)$ yields a tree decomposition of $G$.
Clearly, the maximum number of variables 
in a bag of the resulting tree decomposition is $w+1+\abs{U}$,
so the treewidth of $G$ is at most 
$w+\abs{U} = w + \abs{V(G)} - \abs{V(H)}$.
\end{proof}

\begin{figure}[tb]
  \centering
  \begin{tikzpicture}
    \filldraw[lightgray] (0,0) rectangle (1,1);
    \filldraw[lightgray] (1,1) rectangle (3,2);
    \filldraw[lightgray] (3,2) rectangle (4,3);
    \filldraw[black] (0,0) circle (2pt) node[anchor=east] {$c_1$};
    \filldraw[black] (0,1) circle (2pt) node[anchor=east] {$h_1$};
    \filldraw[black] (1,1) circle (2pt) node[anchor=south east] {$c_2$};
    \filldraw[black] (1,2) circle (2pt) node[anchor=east] {$h_2$};
    \filldraw[black] (3,2) circle (2pt) node[anchor=south east] {$c_3$};
    \filldraw[black] (3,3) circle (2pt) node[anchor=east] {$h_3$};
    \filldraw[black] (4,3) circle (2pt) node[anchor=west] {$c_4$};

  \end{tikzpicture}
\caption{Construction from the proof of 
Theorem~\ref{thm:cdc-lowerbound} for $k=4$.
}
\label{fig:colouring2cdc}
\end{figure}
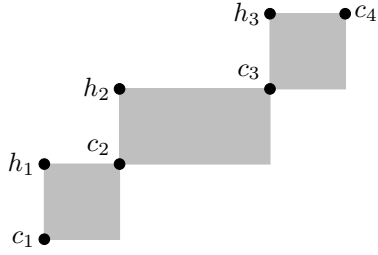

\noindent 
Now we are ready to establish our first lower bound.

\begin{theorem} \label{thm:cdc-lowerbound}
  CSP$(\CDCdisjlang)$ cannot be solved in 
  $2^{o(w\log{w})} \cdot n^{O(1)}$ time
  on instances with primal treewidth $w$ unless the ETH fails.
\end{theorem}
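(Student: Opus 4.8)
The plan is to give a polynomial-time reduction from \textsc{Vertex Colouring} to CSP$(\CDCdisjlang)$ that increases the primal treewidth by only a constant factor, and then to invoke Theorem~\ref{thm:colouring-lowerbound}. Given an instance $(G,k)$ of \textsc{Vertex Colouring} with $G$ of treewidth $w$, I would first observe that by Lemma~\ref{lem:colour-treewidth} every graph of treewidth $w$ is $(w+1)$-colourable, so instances with $k > w+1$ are trivial yes-instances and can be answered directly; hence we may assume $k \le w+1$, i.e.\ $k = O(w)$. This bound on $k$ is what makes the reduction treewidth-preserving up to a constant.

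The instance $\inst$ of CSP$(\CDCdisjlang)$ would consist of three groups of variables and constraints. First, a \emph{palette} gadget on the $2k-1$ points $c_1,\dots,c_k$ and $h_1,\dots,h_{k-1}$ shown in Figure~\ref{fig:colouring2cdc}, with \emph{basic} CDC constraints fixing their mutual positions: $c_i$ strictly south-west of $c_{i+1}$, and $h_i$ due north of $c_i$ and due west of $c_{i+1}$. Thus in any solution these points form the depicted staircase (up to applying an order-automorphism of $({\rationals};<)$ to each coordinate), and in particular the $x$-coordinates of $c_1,\dots,c_k$ are pairwise distinct. Second, for each vertex $v \in V(G)$ a point $p_v$ together with a conjunction of \emph{disjunctive} binary CDC constraints between $p_v$ and the palette points, designed so that, relative to the fixed palette, the admissible positions for $p_v$ are exactly $k$ discrete ``colour states'' --- one per step of the staircase --- while which state is taken is a free choice. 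The role of the auxiliary points $h_i$ is to forbid $p_v$ from settling strictly between two consecutive steps or escaping to the far north-east; such a gadget is impossible with a single coordinate, which is exactly why a two-dimensional formalism such as CDC is used here. Third, for every edge $uv \in E(G)$ the single disjunctive constraint on $(p_u,p_v)$ that forbids them to share an $x$-coordinate, i.e.\ to occupy the same colour state.

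Correctness would then follow in both directions: a proper $k$-colouring $\chi$ of $G$ yields a solution by realising the palette as the staircase and placing $p_v$ in the state of colour $\chi(v)$ --- the edge constraints hold because $\chi$ is proper --- and conversely every solution forces the palette into the staircase and each $p_v$ into a unique colour state, so sending $v$ to that state gives a proper $k$-colouring since monochromatic edges are ruled out. The instance $\inst$ has $O(\abs{V(G)})$ variables and $O(\abs{V(G)} + \abs{E(G)})$ constraints, each of constant size, and is computable in polynomial time. For the treewidth: the subgraph of the primal graph of $\inst$ induced by $\{p_v : v \in V(G)\}$ is isomorphic to $G$, because the only constraints relating two such points are the edge constraints; hence Lemma~\ref{lem:treewidth-add-variables} (adding the $2k-1$ palette points to every bag) shows that the primal treewidth of $\inst$ is at most $w + (2k-1) \le 3w+1 = O(w)$. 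Therefore a $2^{o(w\log w)}\cdot n^{O(1)}$ algorithm for CSP$(\CDCdisjlang)$ would decide \textsc{Vertex Colouring} in $2^{o(w'\log w')}\cdot n^{O(1)}$ time with $w' = O(w)$, i.e.\ in $2^{o(w\log w)}\cdot n^{O(1)}$ time, contradicting Theorem~\ref{thm:colouring-lowerbound} and thus the ETH.

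The hardest part will be the design and verification of the vertex gadget: producing concrete disjunctive binary CDC relations between $p_v$ and the $2k-1$ palette points whose conjunction has solution set exactly the $k$ intended colour states, ruling out both ``intermediate'' placements (between adjacent steps) and ``runaway'' placements (north-east of the entire staircase). Everything else --- the colour-to-solution translation and its converse, counting variables and constraints, and the treewidth bound via Lemma~\ref{lem:treewidth-add-variables} --- is routine.
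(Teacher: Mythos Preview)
Your proposal is correct and follows essentially the same approach as the paper: the same palette gadget on $c_1,\dots,c_k,h_1,\dots,h_{k-1}$, the same per-vertex gadget forcing $p_v$ onto one of $k$ discrete positions, the same edge constraint, and the same treewidth bound via Lemma~\ref{lem:treewidth-add-variables} combined with $k=O(w)$. The paper makes the vertex gadget explicit via the constraints $z_v\{\mathrm{=,NE}\}c_1$, $z_v\{\mathrm{=,SW}\}c_k$, $z_v\{\mathrm{SW,=,NE}\}c_i$ for intermediate $i$, and $z_v\{\overline{\mathrm{SE}}\}h_j$, which together force $z_v\in\{c_1,\dots,c_k\}$; this is exactly the concrete instantiation of what you flagged as the hardest part.
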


\begin{proof}

Let $(G,k)$ be an instance of the {\sc Vertex Colouring} problem.
We construct an instance $\inst$ of CSP$(\CDCdisjlang)$
that is satisfiable if and only if $G$ is $k$-colourable.

First, we introduce variables $z_v$ for all $v \in V(G)$,
variables $c_i$ for all colours $i \in \{1,\ldots,k\}$, 
and auxiliary variables $h_j$ for all $j \in \{1,\ldots,k-1\}$.
Then, we add the following constraints:
\begin{enumerate}[(C1)]
\renewcommand{\theenumi}{(C\arabic{enumi})}
\renewcommand\labelenumi{(C\arabic{enumi})}
  \item \label{ite:c1}
  $c_{i} \{ {\rm SW} \} c_{i+1}$, 
  $h_{i} \{ {\rm N} \} c_{i}$ and
  $h_{i} \{ {\rm W} \} c_{i+1}$ 
  for all $i \in \{1,\ldots,k-1\}$;
  \item \label{ite:c2}
  $z_v \{ {\rm =,NE} \} c_1$,
  $z_v \{ {\rm =,SW} \} c_k$
  for all $v \in V(G)$;
  \item \label{ite:c3}
  $z_v \{ {\rm SW,=,NE} \} c_i$ 
  for all $v \in V(G)$ and $i \in \{2,\ldots,k-1\}$;
  \item \label{ite:c4} 
  $z_v \{ \overline{\rm SE} \} h_j$
  for all $v \in V(G)$ and $j \in \{1,\ldots,k-1\}$, 
  where $\overline{{\rm SE}}$ is the negation of ${\rm SE}$, i.e. 
  $\overline{\rm SE} = \{ {\rm S, SW, W, NW, N, NE, E, =}\}$;
  \item \label{ite:c5}
  $z_u \{ {\rm SW, NE} \} z_v$ for all $(u,v) \in E(G)$.
\end{enumerate}

Towards proving correctness of the reduction,
let $\chi : V(G) \rightarrow \range{1}{k}$ 
be a proper $k$-colouring of $G$.
Define an assignment $f$ for $\inst$ by setting
$f(c_i) = (i,i)$ for all $i \in \{1,\ldots,k\}$,
$f(h_j) = (j,j+1)$ for all $j \in \{1,\ldots,k-1\}$ and
$f(z_v) = (\chi(v), \chi(v))$ for all $v \in V(G)$.
It is straightforward to verify that the 
constraints~\ref{ite:c1}--\ref{ite:c4} are satisfied by this assignment.
The constraints in~\ref{ite:c5} are also satisfied because
$\chi$ is a proper colouring of $G$.

In the opposite direction, let $f$ 
be a satisfying assignment for $\inst$.
Assume that $f(c_i) = (a_i, b_i)$ for all $i \in \{1,\ldots,k\}$.
Constraints~\ref{ite:c1} imply that 
$f(h_j) = (a_j, b_{j+1})$ for all $j \in \{1,\ldots,k-1\}$.
Figure~\ref{fig:colouring2cdc} shows an example of the construction.
Consider $z_v = (x_v, y_v)$ for an arbitrary $v \in V(G)$.
By~\ref{ite:c2} and~\ref{ite:c3}, $z_v$ can only take values inside the rectangles 
with corners $(a_i,b_i)$, $(a_i,b_{i+1})$, 
$(a_{i+1},b_i)$ and $(a_{i+1},b_{i+1})$
(shaded in the figure)
excluding the boundary except for 
the bottom left and top right corners.
Constraints~\ref{ite:c4} forbid $z_v$ from taking values inside the rectangles,
which leaves only the corners as possible values.
Thus, $f(z_v) \in \{ f(c_1), \ldots, f(c_k) \}$ for all $v \in V(G)$,
and we can define a colouring $\chi$ by setting
$\chi(v) = i$ whenever $f(z_v) = f(c_i)$. 
Note that Constraints~\ref{ite:c5} imply that $f(x_u) \neq f(x_v)$
whenever $(u,v) \in E(G)$. Therefore $\chi$ is a proper colouring.

Now consider the structure of the instance $\inst$. 
Denote the primal treewidth of $\inst$ by $w$ 
and the treewidth of $G$ by $w_G$.
Observe that the primal graph of $\inst$ consists of $G$ with
$2k - 1$ additional vertices for $c_1,\ldots,c_k$ 
and $h_1,\ldots,h_{k-1}$.
By Lemma~\ref{lem:treewidth-add-variables}, $w \leq w_G + 2k - 1$.
Furthermore, by Lemma~\ref{lem:colour-treewidth},
$G$ can be coloured with at most $w_G + 1$ colours.
Thus, we can safely assume that $k \leq w_G$ and, 
consequently, $w < 3 w_G$.
Therefore, if CSP$(\CDCdisjlang)$ admits
a $2^{o(w\log{w})} \cdot n^{O(1)}$ algorithm, 
then so does the Vertex Colouring and this contradicts the ETH
by Theorem~\ref{thm:colouring-lowerbound}.
\end{proof}

We continue by establishing lower bounds on 
CSP$(\IAdisjlang)$ and CSP$(\BAdisjlang)$.
To prove the result, we use the following lemma:

\begin{sloppypar}
\begin{lemma} \label{lem:cdc2allen}

  There is a polynomial-time reduction from 
  CSP$(\CDCdisjlang)$ to CSP$(\IAdisjlang)$
  that preserves the primal graph of the instance. 
\end{lemma}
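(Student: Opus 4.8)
The plan is to realise the reduction as a constraint-by-constraint relabelling that leaves the variable set and all constraint scopes untouched, so that the primal graph is literally preserved. Its basis is a simple correspondence: the nine basic relations of $\CDClang$ are exactly the pairs $(R_1,R_2)$ with $R_1,R_2 \in \{<,=,>\}$, and there is a bijection $\sigma$ from these pairs onto the nine basic Allen relations $\{{\sf o},{\sf oi},{\sf d},{\sf di},{\sf s},{\sf si},{\sf f},{\sf fi},{\sf e}\}$ --- precisely the Allen relations that can hold between two intervals all of whose left endpoints strictly precede all of whose right endpoints (the four excluded relations ${\sf p},{\sf pi},{\sf m},{\sf mi}$ are the ones that would force $I^+ \le J^-$ or $J^+ \le I^-$). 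Concretely I would take $\sigma(<,<)={\sf o}$, $\sigma(<,=)={\sf fi}$, $\sigma(<,>)={\sf di}$, $\sigma(=,<)={\sf s}$, $\sigma(=,=)={\sf e}$, $\sigma(=,>)={\sf si}$, $\sigma(>,<)={\sf d}$, $\sigma(>,=)={\sf f}$, $\sigma(>,>)={\sf oi}$, and record two facts obtained by inspecting the endpoint columns of Table~\ref{tb:allen}: (i) for intervals $[a_u,b_u]$ and $[a_v,b_v]$, if $\max(a_u,a_v) < \min(b_u,b_v)$ then the Allen relation between them is $\sigma(\mathrm{sign}(a_u-a_v),\mathrm{sign}(b_u-b_v))$; and (ii) conversely, whenever one of the nine relations above holds between $[a_u,b_u]$ and $[a_v,b_v]$, it forces $(\mathrm{sign}(a_u-a_v),\mathrm{sign}(b_u-b_v))$ to be its $\sigma$-preimage. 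This finite case check is the only genuinely computational part.

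The reduction then maps an instance $\inst = (V,C)$ of CSP$(\CDCdisjlang)$ to $\inst' = (V,C')$ over the same variables, replacing each constraint $S(u,v)$ (where $S \in \CDCdisjlang$ is a union of basic CDC relations) by $\sigma(S)(u,v)$, with $\sigma(S) := \bigcup_{r \in S} \{\sigma(r)\} \in \IAdisjlang$. This is computable in linear time --- each constraint is relabelled independently using a constant-size table --- and, since $V$ and every constraint scope are unchanged, $\inst$ and $\inst'$ have identical primal graphs, as required.

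For correctness I would treat the two directions separately. From a solution $v \mapsto (x_v,y_v) \in {\rationals}^2$ of $\inst$, choose a rational $M > \max_{v} x_v - \min_{v} y_v$ and put $I_v := [x_v,\, y_v+M]$; then $x_v < y_v + M$, so each $I_v$ is a genuine interval, and $\max_v x_v < \min_v (y_v + M)$, so every left endpoint precedes every right endpoint. By fact (i) the Allen relation realised between $I_u$ and $I_v$ is $\sigma(\mathrm{sign}(x_u-x_v),\mathrm{sign}(y_u-y_v))$; if the constraint $S(u,v)$ of $\inst$ was satisfied then $(\mathrm{sign}(x_u-x_v),\mathrm{sign}(y_u-y_v)) \in S$, so this relation lies in $\sigma(S)$ and $\inst'$ is satisfied. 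Conversely, from a solution $v \mapsto I_v = [a_v,b_v]$ of $\inst'$, define the CDC assignment $v \mapsto (a_v,b_v)$; for each constraint $S(u,v)$ the realised Allen relation between $I_u$ and $I_v$ lies in $\sigma(S)$ and hence is one of the nine relations above, so by fact (ii) it equals $\sigma(r)$ for $r = (\mathrm{sign}(a_u-a_v),\mathrm{sign}(b_u-b_v)) \in S$, meaning the CDC constraint $S(u,v)$ is satisfied. (Constraints with a repeated variable need no special treatment: $S(v,v)$ is satisfiable iff $(=,=) \in S$ iff ${\sf e} = \sigma(=,=) \in \sigma(S)$.)

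The one subtle point --- and the step I expect to be the main obstacle --- is the asymmetry of fact (i): an arbitrary Allen solution of $\inst'$ need not satisfy $\max(a_u,a_v) < \min(b_u,b_v)$ for the constrained pairs $(u,v)$, so fact (i) cannot be invoked in the backward direction. The resolution is that it is not needed there: by construction $\sigma(S)$ contains only the nine ``endpoint-nested'' relations, so every satisfying assignment of $\inst'$ automatically realises one of them on each constrained pair, and fact (ii) then pins down the two endpoint signs. Everything else reduces to the routine verification of the $\sigma$-table and the trivial observation that relabelling constraints keeps the variable set and scopes --- hence the primal graph --- fixed.
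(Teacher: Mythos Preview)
Your proof is correct and follows the same overall strategy as the paper: treat a CDC point $(x,y)$ as an interval $[x,y]$ and replace each CDC constraint by the corresponding Allen constraint via a fixed finite table, leaving variables and scopes untouched. The one notable difference is in the translation table. The paper simply shifts any CDC solution into the half-plane $x<y$ and reads off intervals directly; under that encoding the relations ${\rm NE}$ and ${\rm SW}$ do not pin down how $I^-$ relates to $J^+$, so they map to the disjunctions $\{{\sf oi},{\sf mi},{\sf pi}\}$ and $\{{\sf p},{\sf m},{\sf o}\}$ respectively, and all thirteen Allen relations are used. Your ``add a large $M$ to the second coordinate'' trick forces every left endpoint below every right endpoint, which eliminates ${\sf p},{\sf pi},{\sf m},{\sf mi}$ and yields a clean bijection onto nine Allen relations. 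Both encodings work; yours is a bit tighter and makes the backward direction (your fact~(ii)) particularly transparent, at the cost of the extra shift in the forward direction.
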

\end{sloppypar}
\begin{proof}
Let $(V,C)$ be an instance of CSP$(\CDCdisjlang)$
and suppose $f$ is a satisfying assignment.
Denote $f(v)$ by $(v_1,v_2)$ for all $v \in V$.
We may assume without loss of generality 
that $v_1 < v_2$ for all $v \in V$, 
since all points in the image of $f$ can be translated
into the second quadrant of the coordinate plane,
where $v_1 < 0$ and $v_2 > 0$.
Thus, every pair of points $(v_1,v_2)$ 
can be viewed as an interval $[v_1,v_2]$.
With this in mind, we produce an instance 
$(V',C')$ of CSP$(\CDCdisjlang)$ with $V' = V$ 
by converting every relation in $C$ into a $\IAdisjlang$ relation
according to the rules in Table~\ref{tb:cdc-rules}.
The disjunction of any subset of the CDC relations is obtained 
by taking the disjunction of their converted counterparts.

\begin{table}[tb]
\begin{center}
  \begin{tabular}{ | c | c | c | }
  \hline
  {\rm CDC} & Definition & {\rm Allen} \\ 
  \hline
  \hline
  {\rm =}  & $x_1 = y_1$ and $x_2 = y_2$ & {\rm e} \\
  \hline
  {\rm N}  & $x_1 = y_1$ and $x_2 > y_2$ & {\rm si} \\
  \hline
  {\rm E}  & $x_1 > y_1$ and $x_2 = y_2$ & {\rm f} \\
  \hline
  {\rm S}  & $x_1 = y_1$ and $x_2 < y_2$ & {\rm s} \\
  \hline
  {\rm W}  & $x_1 < y_1$ and $x_2 = y_2$ & {\rm fi} \\
  \hline
  {\rm NE} & $x_1 > y_1$ and $x_2 > y_2$ & {\rm \{oi,mi,pi\}}  \\
  \hline
  {\rm SE} & $x_1 > y_1$ and $x_2 < y_2$ & {\rm d} \\
  \hline
  {\rm SW} & $x_1 < y_1$ and $x_2 < y_2$ & {\rm \{p,m,o\}}  \\
  \hline
  {\rm NW} & $x_1 < y_1$ and $x_2 > y_2$ & {\rm di} \\
  \hline
  \end{tabular}  
\end{center}
\caption{Transformation of CDC relations into Allen relations}
\label{tb:cdc-rules}
\end{table}

Equivalence of $(V,C)$ and $(V',C')$ follows from the definitions
of the basic relations of Cardinal Direction Calculus and 
Allen's Interval Algebra.
Clearly, the reduction requires polynomial time.
Furthermore, there is a constraint in~$C'$ over a pair of
variables if and only if there is a constraint in $C$
over the same pair of variables.
Thus, $(V,C)$ and $(V',C')$ have the same primal graph.
\end{proof}

\begin{corollary} \label{cor:allen,ba-lowerbound}
  CSP$(\IAdisjlang)$ and CSP$(\BAdisjlang)$
  cannot be solved in $2^{o(w\log{w})} \cdot n^{O(1)}$ time
  on instances with primal treewidth $w$ unless the ETH fails.
\end{corollary}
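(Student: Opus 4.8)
The plan is to derive both lower bounds from the already-established bound for CSP$(\CDCdisjlang)$ in Theorem~\ref{thm:cdc-lowerbound}, using primal-graph-preserving reductions so that the treewidth parameter is carried across unchanged.

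For CSP$(\IAdisjlang)$ I would simply chain the reductions. Lemma~\ref{lem:cdc2allen} gives a polynomial-time reduction from CSP$(\CDCdisjlang)$ to CSP$(\IAdisjlang)$ that leaves the primal graph unchanged, and since treewidth is a function of the primal graph alone, an instance of primal treewidth $w$ is mapped to an instance of primal treewidth $w$. Hence, if CSP$(\IAdisjlang)$ could be solved in $2^{o(w\log w)} \cdot n^{O(1)}$ time on instances of primal treewidth $w$, then composing with the reduction of Lemma~\ref{lem:cdc2allen} would solve CSP$(\CDCdisjlang)$ within the same asymptotic bound (the polynomial size blow-up of the reduction is absorbed into the $n^{O(1)}$ factor and does not touch $w$). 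This contradicts the ETH by Theorem~\ref{thm:cdc-lowerbound}.

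For CSP$(\BAdisjlang)$ I would exhibit a trivial primal-graph-preserving reduction from CSP$(\IAdisjlang)$. Recall that a BA$_d$ relation is a $d$-tuple of IA relations, one applied per coordinate axis. Given an instance $(V,C)$ of CSP$(\IAdisjlang)$, replace each constraint $R(u,v)$, where $R$ is a disjunction of basic IA relations, by the BA$_d$ constraint whose first component is $R$ and whose remaining $d-1$ components are the disjunction of all thirteen basic IA relations (i.e.\ the universal binary relation, which is itself a relation in $\IAdisjlang$). A BA$_d$ assignment satisfies the new constraint if and only if its projection onto the first coordinate satisfies $R$, so the two instances are equisatisfiable; and since no variable pairs are added to or removed from any scope, the primal graph — and therefore its treewidth — is unchanged. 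Thus the lower bound just obtained for CSP$(\IAdisjlang)$ transfers verbatim to CSP$(\BAdisjlang)$, completing the proof.

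There is no substantive obstacle here: the corollary is a routine composition of Theorem~\ref{thm:cdc-lowerbound} with primal-graph-preserving reductions. The only points that require a moment's care are (i) confirming that ``preserves the primal graph'' in Lemma~\ref{lem:cdc2allen} indeed means the treewidth parameter $w$ is untouched, so the subexponential-in-$w\log w$ regime is faithfully transported; and (ii) spelling out the padding embedding of IA into BA$_d$ precisely enough that the universal relation used in the extra $d-1$ coordinates is seen to lie in $\IAdisjlang$, which it does as the disjunction of all basic relations.
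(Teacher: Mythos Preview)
Your proposal is correct and follows essentially the same approach as the paper: combine Theorem~\ref{thm:cdc-lowerbound} with Lemma~\ref{lem:cdc2allen} for IA, and then pass from IA to BA$_d$. The paper's proof is terser for the second step---it simply notes that $\IAlang$ is $\BAlang$ for $d=1$---whereas you spell out the padding embedding of IA into BA$_d$ for arbitrary $d$; this makes your argument slightly more explicit but the underlying idea is identical.
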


\begin{proof}
The result for CSP$(\IAdisjlang)$ follows directly by
combining Theorem~\ref{thm:cdc-lowerbound} with
Lemma~\ref{lem:cdc2allen}.
Note that $\BAlang$ generalises $\IAlang$
(namely, $\IAlang$ is $\BAlang$ for $d=1$).
Thus, the lower bound also holds for CSP$(\BAdisjlang)$.
\end{proof}

As for RCC5 and RCC8, the $2^{O(w^2)}$ term in the running time 
cannot be improved without introducing new ideas for the algorithm.
More precisely, we show that $\tau(w) = 2^{\Theta(w^2)}$ 
for CSP$(\B_{\rm RCC5})$ by the following observation:

\begin{sloppypar}
\begin{observation}
  There are $2^{\Theta(w^2)}$ complete satisfiable instances 
  of CSP$(\B_{\rm RCC5})$ with $w$ variables.
\end{observation}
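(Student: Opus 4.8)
The plan is to prove matching bounds: the number of complete satisfiable instances of CSP$(\B_{\rm RCC5})$ with $w$ variables is both $2^{O(w^2)}$ and $2^{\Omega(w^2)}$.

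The upper bound is immediate from counting. A complete instance contains exactly one constraint over each of the $w^2$ ordered pairs of (not necessarily distinct) variables, and each such constraint names one of the $\abs{\B_{\rm RCC5}}$ basic relations; hence there are at most $\abs{\B_{\rm RCC5}}^{w^2} = 2^{O(w^2)}$ complete instances in total, satisfiable or not, which already bounds the number of complete \emph{satisfiable} ones.

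For the lower bound I would construct, for every graph $H$ on vertex set $\range{1}{w}$, a complete satisfiable instance $\inst_H$ in such a way that $H \mapsto \inst_H$ is injective. The construction reads the intended model off a set gadget: fix pairwise disjoint closed disks $a_e$ for each edge $e \in E(H)$ together with $w$ further pairwise disjoint disks $b_1,\dots,b_w$, all disjoint from the $a_e$, and realise the $i$-th variable by the region $r_i = b_i \cup \bigcup\{\, a_e : e \ni i \,\}$. Two regions $r_i, r_j$ with $i \neq j$ share the disk $a_{\{i,j\}}$ when $\{i,j\} \in E(H)$ while keeping their private disks $b_i, b_j$ to themselves, so they stand in the relation {\sf PO}; when $\{i,j\}\notin E(H)$ the regions are disjoint and stand in {\sf DR}; and $r_i$ equals itself. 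Since $\B_{\rm RCC5}$ is JEPD, the induced relations over all $w^2$ ordered pairs determine a unique complete instance $\inst_H$, which is satisfiable because the disks realise it. Distinct graphs disagree on at least one edge and hence give instances with a different constraint on the corresponding pair, so the map is injective, and therefore there are at least $2^{\binom{w}{2}} = 2^{\Omega(w^2)}$ complete satisfiable instances. Combined with the upper bound this gives $2^{\Theta(w^2)}$.

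The step that needs genuine care is verifying that the disk gadget is a bona fide realisation of RCC5 under the representation fixed in Section~\ref{sec:qstr}, namely that of Bodirsky \& Wölfl~\cite{Bodirsky:Wolfl:ijcai2011}: one has to confirm that a finite union of pairwise disjoint closed disks is a region in the intended sense, that disjointness of the disk sets corresponds to {\sf DR} and that the sharing pattern above corresponds to {\sf PO} (with neither region being a part of the other), and that these correspondences survive the translation into the standard model. The remaining ingredients — injectivity, completeness of $\inst_H$, and the count $2^{\binom{w}{2}}$ of labelled graphs — are routine once the gadget is known to be sound.
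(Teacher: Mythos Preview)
Your proposal is correct and takes essentially the same approach as the paper: both establish the upper bound by crude counting and the lower bound by showing that every choice of {\sf DR}/{\sf PO} over unordered pairs yields a satisfiable complete instance, realised by giving each variable a private region together with one shared region per {\sf PO}-pair. The only cosmetic difference is that you frame the $2^{\binom{w}{2}}$ choices as labelled graphs $H$ (edges $\leftrightarrow$ {\sf PO}) and speak of closed disks, whereas the paper enumerates the {\sf DR}/{\sf PO} choices directly and speaks of abstract disjoint regions $X_i$, $Y_{i,j}$; your caveat about the Bodirsky--W\"olfl representation is a point the paper glosses over.
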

\end{sloppypar}
\begin{proof}
First, note that there are $\abs{\B_{\rm RCC5}}^{\binom{w}{2}} = 2^{O(w^2)}$
not necessarily satisfiable instances of CSP$(\B_{\rm RCC5})$
with $w$ variables. 
Now, consider complete instances $(V,C)$ of this problem 
with $V = \{ v_1, \ldots, v_w \}$, where the constraints over each 
pair of variables are either {\sf DR} or {\sf PO}, chosen arbitrarily.
We claim that every such instance is satisfiable, and since there are 
$2^{\binom{w}{2}} = 2^{O(w^2)}$ of them, this yields the result.

Recall that in RCC5, the domain consists of all subsets 
of a topological space.
Note that the subsets need not be internally connected.
We refer to internally connected subsets as {\em regions}.
To prove the claim, we construct an assignment $f$ that assigns
a subset of disjoint regions to every variable.
For convenience, we consider two sets of regions:
$X_i$ for all $i \in \{1,\ldots,w\}$ and 
$Y_{i,j}$ for all $i,j \in \{1,\ldots,w\}$ with $i<j$.
First, we set $f(v_i) = \{ X_i \}$ for all $i \in \{1,\ldots,w\}$.
Then, for every pair $i,j \in \{1,\ldots,w\}$ with $i<j$ such that
{\sf PO}$(v_i, v_j)$ is in $C$,
we add $Y_{i,j}$ to both $f(v_i)$ and $f(v_j)$.

If {\sf DR}$(v_i, v_j)$ is in $C$,
then $f(v_i) \cap f(v_j) = \varnothing$,
so $f(v_i)$ and $f(v_j)$ are disjoint.
Otherwise, if {\sf PO}$(v_i, v_j)$ is in $C$,
then $f(v_i) \cap f(v_j) \neq \varnothing$,
$f(v_i) \setminus f(v_j) \neq \varnothing$ and
$f(v_j) \setminus f(v_i) \neq \varnothing$,
so $f(v_i)$ and $f(v_j)$ partially overlap.
Thus, $f$ is a satisfying assignment for $(V,C)$
and this completes the proof.
\end{proof}

RCC8 is a generalisation of RCC5, and the same result holds for RCC8 by the same arguments.

\section{Applications Based on Patchwork}
\label{sec:applicability}

We analyse the applicability of our fpt result (Theorem~\ref{thm:fpt}) in
this section. 
The patchwork property has not been directly verified for many formalisms---the list
in Corollary~\ref{cor:rcc,aia,ba,cdc} is quite meager.
However, it has been verified implicitly for wide classes of relations,
and this is something that can be exploited.
We first connect the patchwork property with the {\em amalgamation property} and
{\em homogeneous} structures. 
This allows us to use model-theoretical concepts and results to identify
interesting classes of relations that have the patchwork property. In the final
step, we demonstrate how these ideas can be used on concrete examples --- we study 
constraint languages that are first-order definable in $({\rationals};<)$
and phylogeny languages.

\subsection{Patchwork, Amalgamation and Homogeneity}

When analysing PP from a model-theoretic angle, it is convenient to view
CSPs in terms of homomorphisms.
A {\em homomorphism} for $\tau$-structures $\A,\B$ is a mapping 
$h: \A \rightarrow \B$ that preserves each relation of $\A$, i.e. if $(a_1, \ldots , a_k) \in R^\A$ for some $k$-ary
relation symbol $R \in \tau$, then $(h(a_1), \ldots , h(a_k)) \in R^\B$.
Let $\B$ be a structure with a (not necessarily finite) signature $\tau$. 
CSP$(\B)$ is then the following decision problem:

\medskip

\noindent
{\sc Instance.} A finite $\tau$-structure $\A$.\\
{\sc Question.} Is there a homomorphism from $\A$ to $\B$?

\medskip

It is well known that this definition coincides with
the definition given earlier; this is, for instance, discussed in~\cite[Section~2]{Bodirsky:Jonsson:jair2017}.
We will use an analogue of subinstances for $\tau$-structures:
a $\tau$-structure $\A$ is a {\em substructure} of a $\tau$-structure $\B$ if and only if
(1) the domain of $\A$ is a subset of the domain of  $\B$ and 
(2) for each $R \in \tau$, the tuple $\overline{a}$  is in $R^\A$ if and only if $\overline{a}$ is in $R^\B$.
We need several kinds of homomorphisms in what follows.
A {\em strong} homomorphism additionally satisfies the only if direction in the definition
of a homomorphism, i.e. it also preserves the complements of relations.
An {\em embedding} is an injective strong homomorphism.
An {\em isomorphism} is a surjective (and thus bijective) embedding, and
an {\em automorphism} is an isomorphism from~$\A$ to itself.
Let $\aut{\A}$ denote the set of automorphisms of $\A$.
It is easy to verify that $\aut{({\rationals};<,=,>)}$ contains the
function $f(x)=a+x$ for arbitrary $a \in {\rationals}$ and the function
$g(x)=b \cdot x$ for every rational number $b > 0$. However, the
set of automorphisms contains {\em many} other types of functions.

\begin{sloppypar}
We connect the definition of patchwork with the {\em amalgamation property}~(AP).
A class ${\cal K}$ of $\tau$-structures has AP
if for every $\B_1, \B_2 \in {\cal K}$ such that
their maximal common substructure $\A$
contains all elements that are both in $\B_1$ and $\B_2$,
there exists $\C \in {\cal K}$ (called an {\em amalgam})
and embeddings
$f_1 : \B_1 \rightarrow \C$ and $f_2 : \B_2 \rightarrow \C$ 
such that $f_1(a) = f_2(a)$ for every $a \in \A$.
Let $\D$ be a countable $\tau$-structure.
$\age{\D}$ denotes the class of all finite $\tau$-structures that embed into $\D$.
Various connections between patchwork and amalgamation concepts have been hinted upon in the literature many times (see e.g. Bodirsky and Jonsson~\cite{Bodirsky:Jonsson:jair2017},
Huang~\cite{Huang:kr2012}, and 
Li et al.~\cite{Li:etal:ijcai2009,Li:etal:ecai2008}) 
but the details have not been clearly spelled out. 
Baader \& Rydval~\cite{Baader:Rydval:ijcar2020} proved the following result.
\end{sloppypar}

\begin{theorem} \label{thm:BR-patchwork}
Let $\D$ be a JE$^+$PDJD structure. If $\age{\D}$ has the amalgamation property, then $\D$ has the patchwork property.
\end{theorem}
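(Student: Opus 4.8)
The plan is to show that a failure of the patchwork property for $\D$ yields a failure of amalgamation for $\age{\D}$, using the JE$^+$PDJD hypothesis to translate CSP instances into finite substructures and back. Suppose $\inst_1 = (V_1, C_1)$ and $\inst_2 = (V_2, C_2)$ are complete satisfiable instances of CSP$(\D)$ with $\inst_1[V_1 \cap V_2] = \inst_2[V_1 \cap V_2]$. Satisfiability means there are assignments $f_i : V_i \to D$ realising all constraints; let $\B_i$ be the substructure of $\D$ induced on the image $f_i(V_i)$. The key subtlety is that $f_i$ need not be injective, so I would first argue that because $\D$ is JD (jointly diagonalisable), two variables $x,y$ receive the same value under $f_i$ if and only if the equality relation is forced between them by $C_i$ (the constraint $R(x,y)$ with $R \subseteq Eq_2$ is present, and JD ensures this is exactly captured); hence the identification pattern is determined by the instance, and by the agreement on $V_1 \cap V_2$ it is consistent between the two sides. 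After quotienting each $V_i$ by this forced-equality relation, $f_i$ becomes an embedding of a finite $\tau$-structure $\A_i$ (with domain the quotiented variable set, relations read off from the completeness of $C_i$) into $\D$, so $\A_i \in \age{\D}$.

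Next I would check that the common part is a genuine substructure: let $\A$ be the structure on the (quotiented) shared variables $V_1 \cap V_2$. Completeness of $\inst_1$ and $\inst_2$ together with $\inst_1[V_1\cap V_2] = \inst_2[V_1\cap V_2]$ guarantees that the relations on $\A$ induced from $\A_1$ and from $\A_2$ coincide, and that every tuple gets exactly one relation — here JE$^+$PD is what forces the constraint language to actually describe a full $\tau$-structure rather than a partial one: joint exhaustiveness fills in every tuple, pairwise disjointness makes the assignment of a relation unique. So $\A$ is a common substructure of $\A_1$ and $\A_2$ containing precisely their shared elements. Now apply the amalgamation property of $\age{\D}$ to obtain $\C \in \age{\D}$ with embeddings $g_i : \A_i \to \C$ agreeing on $\A$. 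Since $\C$ embeds into $\D$, compose to get embeddings $h_i : \A_i \to \D$ that agree on the shared variables.

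Finally I would build a single assignment $F : V_1 \cup V_2 \to D$ by setting $F(v) = h_1(\bar v)$ if $v \in V_1$ and $F(v) = h_2(\bar v)$ if $v \in V_2$, where $\bar v$ denotes the class of $v$; this is well-defined because $h_1$ and $h_2$ agree on $V_1 \cap V_2$, and because forced equalities are respected (two variables in the same $V_i$ identified by $C_i$ get the same value, by construction of the quotient). Each constraint in $C_1 \cup C_2$ lies in $C_1$ or $C_2$, and since $h_i$ is an embedding of $\A_i$ — whose relations were defined to be exactly the relations named by the (complete) constraint set — the tuple of $F$-values satisfies the corresponding relation in $\D$. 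Hence $(V_1 \cup V_2, C_1 \cup C_2)$ is satisfiable, which is exactly the patchwork property.

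The main obstacle is the handling of equality and non-injective assignments: the cleanest route is to reduce at the outset to instances where the forced-equality relation is trivial (all constraints $R(x,x)$ with $R \subseteq Eq_2$ forcing nothing, all pairs distinct after quotienting), and this is precisely where the JD hypothesis is indispensable — without it, one cannot be sure that the ``diagonal part'' of the language detects equality faithfully, and the quotient construction breaks down. A secondary point requiring care is verifying that completeness of the CSP instances really does yield a well-defined $\tau$-structure (every tuple in exactly one relation), which is where JE$^+$ and PD enter; once these translation lemmas are in place, the amalgamation step is a direct application and the rest is bookkeeping.
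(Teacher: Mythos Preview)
The paper does not actually prove this theorem: it is quoted as a result of Baader and Rydval~\cite{Baader:Rydval:ijcar2020} and stated without argument. So there is no paper proof to compare against for this particular statement.

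That said, the paper does prove the companion result, Theorem~\ref{thm:ap-patchwork} (the $k$-ary JEPD version with $Eq_k$ in the language), and your outline matches that argument almost exactly in spirit: view the two complete satisfiable instances as finite $\tau$-structures, show they lie in $\age{\D}$ via the satisfying assignments, check that the overlap is a genuine common substructure, apply AP, and read off a satisfying assignment for the union from an embedding of the amalgam into $\D$. The only notable difference is how injectivity is obtained. The paper's proof of Theorem~\ref{thm:ap-patchwork} does not quotient: it works directly with the variable set as the domain of $\I_i$ and shows that the satisfying assignment $h_i$ is already an embedding, using completeness plus PD to get ``strong'' and the presence of $Eq_k$ to get ``injective''. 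You instead use JD to detect forced equalities and pass to a quotient before embedding. Both are correct; the non-quotient route is slightly cleaner because one avoids the bookkeeping you flag at the end (namely, making sure that a class $[x]$ for $x\in V_1\cap V_2$ is represented consistently in $\A_1$ and $\A_2$ even when it picks up extra members from $V_1\setminus V_2$ or $V_2\setminus V_1$). Your argument is sound once that identification of classes across the two quotients is made explicit.
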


Their results do not apply directly to structures that are $k$-ary.
We complement Theorem~\ref{thm:BR-patchwork} by showing that the same implication holds for
$k$-ary JEPD structures that contain the $k$-ary equality relation.

\begin{theorem} \label{thm:ap-patchwork}
Let $\D$ be a $k$-ary JEPD $\tau$-structure with domain $D$
and assume that the $k$-ary equality relation $Eq_k = \{ (d,\ldots,d) \in D^k \mid d \in D \}$ is in $\D$. 
If $\age{\D}$ has the amalgamation property, then $\D$ has the patchwork property.
\end{theorem}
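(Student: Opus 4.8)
The plan is to mimic the structure of Theorem~\ref{thm:BR-patchwork} but adapt everything to the $k$-ary setting, using the presence of $Eq_k$ as the crucial extra ingredient that lets us recover the role played by equality in the JE$^+$PDJD case. Concretely, suppose $\inst_1 = (V_1, C_1)$ and $\inst_2 = (V_2, C_2)$ are complete satisfiable instances of CSP$(\D)$ with $\inst_1[V_1 \cap V_2] = \inst_2[V_1 \cap V_2]$; I want to show $(V_1 \cup V_2, C_1 \cup C_2)$ is satisfiable. First I would translate each instance into a relational structure: a satisfying assignment $f_i : V_i \to D$ induces a $\tau$-structure $\B_i$ on the domain $f_i(V_i) \subseteq D$ (a substructure of $\D$, hence in $\age{\D}$), but the right object to amalgamate is actually built on the variable sets themselves. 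The key point is that a \emph{complete} satisfiable instance of a $k$-ary JEPD CSP determines, for every $k$-tuple of variables, exactly which relation of $\D$ holds; because $Eq_k \in \D$ and the relations are JEPD, the constraint on a tuple $(v,\ldots,v)$ is forced, and more importantly $Eq_k(u_1,\ldots,u_k) \in C_i$ (for not-necessarily-equal variables) precisely captures when $f_i$ identifies those variables. So the completed instance $\inst_i$ is the same data as a finite $\tau$-structure $\S_i$ with domain $V_i$, and the satisfiability of $\inst_i$ says exactly that $\S_i$ embeds into $\D$ after quotienting by the equivalence $\approx_i$ given by $Eq_k$-constraints — equivalently, $\S_i/{\approx_i} \in \age{\D}$.

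Next I would set up the amalgamation. On $V_1 \cap V_2$ the two structures agree by hypothesis, so the quotients $\S_1/{\approx_1}$ and $\S_2/{\approx_2}$ share a common substructure $\A$, namely the image of $V_1 \cap V_2$; the agreement of the $Eq_k$-constraints on $V_1 \cap V_2$ guarantees $\approx_1$ and $\approx_2$ restrict to the same equivalence there, so $\A$ is well-defined and embeds into both. I would take care to choose representatives so that $\A$ is literally the set-theoretic intersection of the (relabelled) domains of $\B_1 := \S_1/{\approx_1}$ and $\B_2 := \S_2/{\approx_2}$, as the definition of AP demands. Applying the amalgamation property of $\age{\D}$ to $\B_1, \B_2$ over $\A$ yields an amalgam $\C \in \age{\D}$ with embeddings $g_1 : \B_1 \to \C$, $g_2 : \B_2 \to \C$ agreeing on $\A$. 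Since $\C$ embeds into $\D$, composing gives maps $V_i \to D$; the compositions $V_1 \cap V_2 \to D$ coincide because $g_1, g_2$ agree on $\A$ and because both $\approx_i$ induce the same quotient of $V_1 \cap V_2$. Hence the union map $f : V_1 \cup V_2 \to D$ is well-defined, and since embeddings preserve and reflect all relations of $\D$ (being strong homomorphisms), $f$ satisfies every constraint in $C_1$ and every constraint in $C_2$. Therefore $(V_1 \cup V_2, C_1 \cup C_2)$ is satisfiable, which is the patchwork property.

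The step I expect to be the main obstacle is the bookkeeping around the $Eq_k$-quotient: showing rigorously that a complete satisfiable $k$-ary JEPD instance corresponds to a structure in $\age{\D}$ \emph{after quotienting by $Eq_k$}, and that this quotient behaves well under intersection (i.e. $(\S_1/{\approx_1})[\![V_1\cap V_2]\!] = (\S_2/{\approx_2})[\![V_1\cap V_2]\!]$ really follows from $\inst_1[V_1\cap V_2]=\inst_2[V_1\cap V_2]$). One has to check that the relations induced on the quotient are well-defined — this uses that $Eq_k$ is a genuine equality relation on $D$ and that $\D$'s relations, living in $D^k$, are invariant under substituting $\approx_i$-equivalent variables, which in turn follows from JEPD completeness plus $Eq_k \in \D$. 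A secondary subtlety is the standard one of arranging the common substructure to be an honest set-theoretic intersection so that the AP hypothesis applies verbatim; this is handled by an innocuous relabelling. Everything else — embeddings being strong homomorphisms, the union map being well-defined, relation preservation — is routine. I would also note for completeness, as in the $Eq$-free remark after Definition~\ref{def:patchwork}, that the completeness hypothesis on $\inst_1,\inst_2$ is what makes the correspondence with $\age{\D}$-structures tight and is essential here.
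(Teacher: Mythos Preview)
Your proposal is correct and follows the same overall strategy as the paper: view the two complete satisfiable instances as finite $\tau$-structures, place them in $\age{\D}$, invoke AP, and extract a satisfying assignment for the union from the amalgam. The difference lies in how membership in $\age{\D}$ is obtained. The paper treats each instance $\inst_i$ literally as a $\tau$-structure $\I_i$ with domain $V_i$ and argues that the satisfying assignment $h_i$ is already an embedding $\I_i \hookrightarrow \D$: strength comes from completeness plus JEPD, and injectivity is argued from $Eq_k \in \D$ and PD by observing that whenever $Eq_k(x,y,\ldots,y)\notin C_i$ some other relation $R$ with $R\cap Eq_k=\varnothing$ constrains $(x,y,\ldots,y)$, forcing $h_i(x)\neq h_i(y)$. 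Your route instead first quotients $V_i$ by the equivalence $\approx_i$ determined by the $Eq_k$-constraints and embeds the quotient. This buys you robustness against the edge case where $Eq_k(x,y,\ldots,y)\in C_i$ for distinct variables $x,y$ (which makes $h_i$ non-injective and is not covered by the paper's injectivity argument as written); the cost is exactly the extra bookkeeping you flag around well-definedness of the quotient and compatibility of $\approx_1,\approx_2$ on $V_1\cap V_2$. In short: same skeleton, but your version is more careful on a point the paper's argument glosses over.
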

\begin{proof}
Consider the instances 
$I_1 = (V_1, C_1)$, $I_2 = (V_2, C_2)$
of CSP($\D$) in Definition~\ref{def:patchwork}
as $\tau$-structures $\I_1$, $\I_2$.
Note that the intersection $I_1[V_1 \cap V_2] = I_2[V_1 \cap V_2]$ 
viewed as a $\tau$-structure $\A$ is the maximal common substructure of $\I_1$, $\I_2$
and contains all elements that appear in both of them.
To apply AP, we need to show that $\I_1$ and $\I_2$ embed into $\D$.
Recall that an embedding is an injective strong homomorphism. 

The remainder of the proof applies for all $i \in \{1,2\}$.
Since $I_i$ is satisfiable, there is a homomorphism $h_i : \I_i \rightarrow \D$.
Additionally, $I_i$ is complete and $\D$ has JEPD relations, 
so for all $R \in \tau$,
$(h_i(x_1),\ldots,h_i(x_k)) \in R^\D$ implies that the constraint 
$R(x_1,\ldots,x_k)$ is in $C_i$ and it is satisfied.
Hence, $h_i$ is a strong homomorphism.
To show that it is injective, we observe that
for all $x,y \in \I_i$, 
if $Eq_k(x,y,\ldots,y) \in C_{i}$, then $x=y$.
Otherwise, by completeness,
there is another $R \in \tau$ such that $R(x,y,\ldots,y) \in C_i$.
By PD, $R \cap Eq_k = \varnothing$, so $x \neq y$.
Thus, $h_i$ is injective, and ergo, an embedding.

We know that $\I_1, \I_2 \in$ $\age{\D}$ so, by AP, 
the amalgam of $\I_1$ and $\I_2$ is also in $\age{\D}$.
Note that the structure $\C$ defined by 
$(V_1 \cup V_2, C_1 \cup C_2)$
embeds into the amalgam.
Hence, it is homomorphic to $\D$ and 
$(V_1 \cup V_2, C_1 \cup C_2)$ is satisfiable.
\end{proof}

\medskip

Theorems~\ref{thm:BR-patchwork} and~\ref{thm:ap-patchwork} allow us to relate PP to some 
properties and results that have been successfully used in the study of CSPs.
To this end, we will use {\em homogeneity}. 
A  homogeneous structure $\A$ is a countable structure such that 
for every isomorphism $f: \B \rightarrow \C$ between finite substructures $\B, \C$ 
of~$\A$, there is an automorphism $f'$ of $\A$ extending $f$. 
Intuitively speaking, a homogeneous structure enjoys the following property: 
the surroundings of two isomorphic substructures always look {\em very} similar.
Homogeneity thus implies that the structure has an extremely high degree
of symmetry.
The following result is part of the classical {\em Fraïssé's Theorem}~\cite{Fraisse:cras53}.

\begin{theorem} \label{thm:homogeneous-ap}
$\age{\A}$ has AP when $\A$ is a countable homogeneous structure with
a countable signature.
\end{theorem}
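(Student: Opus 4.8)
The plan is to exploit the high degree of symmetry of a homogeneous structure in order to glue two given copies of the amalgamation inputs together \emph{inside} $\A$ itself, and then to read off the amalgam as an induced finite substructure of $\A$.

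First I would unwind the definitions of AP and of $\age{\cdot}$. Let $\B_1,\B_2 \in \age{\A}$, with domains $B_1,B_2$, and let $\S$ be their common substructure, whose domain is $B_1 \cap B_2$, as in the statement of AP; note $\S$ is finite since $\B_1$ is. Because $\B_1$ and $\B_2$ lie in $\age{\A}$, fix embeddings $g_1 \colon \B_1 \hookrightarrow \A$ and $g_2 \colon \B_2 \hookrightarrow \A$. Restricting, $g_1|_{\S}$ and $g_2|_{\S}$ are embeddings of $\S$ into $\A$ with finite images $g_1(\S)$ and $g_2(\S)$, and $\psi := g_2|_{\S} \circ (g_1|_{\S})^{-1}$ is an isomorphism between these two finite substructures of $\A$.

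Second, I would apply homogeneity of $\A$ to $\psi$: it extends to an automorphism $\sigma \in \aut{\A}$. Set $g_2' := \sigma^{-1} \circ g_2$, which is again an embedding of $\B_2$ into $\A$. For every $a$ in the domain of $\S$ we have $\sigma(g_1(a)) = \psi(g_1(a)) = g_2(a)$, hence $g_2'(a) = \sigma^{-1}(g_2(a)) = g_1(a)$; so $g_1$ and $g_2'$ now agree on $\S$. Finally, let $\C$ be the substructure of $\A$ induced on the finite set $g_1(B_1) \cup g_2'(B_2)$; then $\C$ is a finite substructure of $\A$, hence $\C \in \age{\A}$. The maps $f_1 := g_1$ and $f_2 := g_2'$, regarded as maps with codomain $\C$, are still embeddings (a map whose image lies in a substructure $\C \subseteq \A$ and which is an embedding into $\A$ is also an embedding into $\C$), and they agree on $\S$ by the previous step. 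This is precisely the amalgam demanded by AP.

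The only genuine difficulty is the second step: the embeddings $g_1$ and $g_2$ are a priori unrelated, so $g_1(\S)$ and $g_2(\S)$ could sit in ``incompatible'' positions of $\A$, and homogeneity is exactly what lets us move one onto the other via an automorphism. Everything else — finiteness of $\C$, membership in $\age{\A}$, and the fact that restricting or corestricting an embedding along a substructure inclusion preserves being an embedding — is immediate from the definitions of substructure, embedding and $\age{\cdot}$ recalled earlier.
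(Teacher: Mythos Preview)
Your proof is correct and is the standard argument for this direction of Fra\"{\i}ss\'e's Theorem. The paper, however, does not give a proof at all: it simply records the statement as ``part of the classical Fra\"{\i}ss\'e's Theorem'' and refers the reader to a model-theory textbook (Hodges). So your write-up actually supplies what the paper only cites; there is nothing to compare beyond that.
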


Fraïssé's Theorem is explained
in most textbooks on model theory such as
Hodges~\cite{Hodges:1997:SMT:262326}.
Combining Theorems~\ref{thm:BR-patchwork}, \ref{thm:ap-patchwork}, and \ref{thm:homogeneous-ap}
gives us the following result.

\begin{corollary} \label{cor:homogeneous-pp}
Let $\D$ denote a countable homogeneous structure with
a countable signature.

\begin{enumerate}
\item\label{ite:cor:homogeneous-pp-1}
If $\D$ is JE$^+$PDJD, then $\D$ has PP, and

\item\label{ite:cor:homogeneous-pp-2}
if $\D$ is a $k$-ary JEPD structure that
contains the $k$-ary equality relation, then $\D$ has PP. 
\end{enumerate}
\end{corollary}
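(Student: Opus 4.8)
The plan is to obtain the corollary by simply chaining the three results already established, with no new argument required. First I would invoke Theorem~\ref{thm:homogeneous-ap}: since $\D$ is a countable homogeneous structure over a countable signature, Fra\"iss\'e's Theorem tells us that $\age{\D}$ has the amalgamation property. This is the only substantive ingredient, and it is already available to us as a quoted classical result.

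For part~\ref{ite:cor:homogeneous-pp-1}, I would note that the remaining hypothesis---that $\D$ is JE$^+$PDJD---is precisely what Theorem~\ref{thm:BR-patchwork} requires. Feeding in the amalgamation property of $\age{\D}$ obtained above, that theorem immediately yields that $\D$ has the patchwork property.

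For part~\ref{ite:cor:homogeneous-pp-2}, the remaining hypotheses---that $\D$ is $k$-ary, that its relations are JEPD, and that the $k$-ary equality relation $Eq_k$ belongs to $\D$---match the hypotheses of Theorem~\ref{thm:ap-patchwork} verbatim. Combining these with the amalgamation property of $\age{\D}$, that theorem gives that $\D$ has the patchwork property, completing the proof.

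I do not expect any real obstacle here: the corollary is a direct composition of Theorems~\ref{thm:homogeneous-ap}, \ref{thm:BR-patchwork}, and~\ref{thm:ap-patchwork}. The only point worth checking is a bookkeeping one---that the notion of homogeneity in the statement (a countable structure over a countable signature in which every isomorphism between finite substructures extends to an automorphism) coincides exactly with the hypotheses of Fra\"iss\'e's Theorem as we have quoted it, which it does.
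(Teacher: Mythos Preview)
Your proposal is correct and matches the paper's approach exactly: the paper states the corollary as an immediate consequence of combining Theorems~\ref{thm:BR-patchwork}, \ref{thm:ap-patchwork}, and~\ref{thm:homogeneous-ap}, with no additional argument given.
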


A large number of
homogeneous structures are known from the literature (see, for example, the surveys by Macpherson~\cite{Macpherson:dm2011} 
and Hirsch~\cite{Hirsch:jlc97}) and they play an important role in CSP research. In fact, after the Feder-Vardi conjecture on finite-domain CSPs
was settled (independently) by Bulatov~\cite{bulatov2017} and Zhuk~\cite{zhuk2020}, much of the 
complexity-oriented work has concentrated on
homogeneous infinite-domain CSPs. 
We note that all examples in Corollary~\ref{cor:rcc,aia,ba,cdc} can be formulated by homogeneous structures; for instance,
Hirsch~\cite{Hirsch:ai96} proved this for Allen's Interval Algebra and 
Bodirsky and Wölfl~\cite{Bodirsky:Wolfl:ijcai2011} for RCC8. 
A fact to keep in mind is that
one may have two structures $\A$ and $\B$ such that CSP$(\A)$ is the same computational problem as CSP$(\B)$,
$\A$ is homogeneous, but $\B$ is not homogeneous. 
This phenomenon is, for instance, discussed (in the context
of RCC8) by Bodirsky and Wölfl~\cite{Bodirsky:Wolfl:ijcai2011}
and Huang et al.~\cite{Huang:etal:ai2013} (in the context of temporal constraints).
A straightforward example is provided by the structures $({\rationals}; <)$ and
$({\naturals}; <_{\naturals})$ where $<_{\naturals}$ denotes the ordering on the natural numbers.
The structure $({\rationals}; <)$ is homogeneous 
while $({\naturals}; <_{\naturals})$ is not\footnote{
Consider $f(1) = 0$, which is a trivial isomorphism between the substructures $(\{1\}; <_{\naturals})$ and $(\{0\}; <_{\naturals})$,
but cannot be extended to an automorphism -- there is no way to choose $f(0)$ such that $f(0) < f(1)$.
}, and
CSP$(({\rationals}; <))$ and
CSP$(({\naturals}; <_{\naturals}))$ are the same computational problems.

\subsection{Examples}
\label{sec:examples}

The machinery presented above allows us to show fpt results for large families of CSPs.
Our first example is the set of CSPs ${\cal T}$ whose constraint languages consist of
finite subsets of $\boole{({\rationals}; <)}$.
Well-known CSPs in ${\cal T}$ are the Point Algebra~\cite{Vilain:Kautz:aaai86}, the ORD-Horn class~\cite{Nebel:Burckert:jacm95} and certain
scheduling problems~\cite{Mohring:etal:sicomp2004}, together with basic problems in complexity theory 
such as {\sc Betweenness} and {\sc Cyclic Ordering}~\cite{gj79}. 
Clearly, ${\cal T}$ contains many different CSPs based on non-binary relations and, in fact,
the CSPs with binary relations are
a subset of the Point Algebra and thus polynomial-time
solvable~\cite{Vilain:Kautz:aaai86}.
One ought to observe that the CSP for Allen's Interval Algebra is {\em not}
in ${\cal T}$ since its domain consists of the closed convex subsets of ${\rationals}$ and not of ${\rationals}$ itself, but
there is a straightforward reduction from Allen's Interval Algebra to a certain problem in ${\cal T}$.
The CSPs in ${\cal T}$ have been intensively studied in the literature: 
for instance, Bodirsky and Kára~\cite{Bodirsky:Kara:jacm2010}
proved that any CSP in ${\cal T}$ is either polynomial-time solvable or \NP-complete.

Arbitrarily choose CSP$(\Gamma)$ in ${\cal T}$. 
It is folklore that the structure ${\bf Q}=({\rationals};<\nobreak,>,=)$ is homogeneous
(see, for instance, Example 2.1.2 in Macpherson~\cite{Macpherson:dm2011} for a proof sketch).
The structure ${\bf Q}$ is obviously JEPD and it contains the binary equality relation,
so it has PP by Corollary~\ref{cor:homogeneous-pp}. 
Since CSP$({\bf Q})$ is decidable, it follows from Theorem~\ref{thm:fpt} that CSP$(\Gamma)$ is fpt.
This proves the following.

\begin{proposition}
Every problem in ${\cal T}$ is fpt parameterized by the treewidth of the primal graph.
\end{proposition}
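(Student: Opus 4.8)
The plan is to derive the proposition as a direct application of Theorem~\ref{thm:fpt}, taking $\A$ to be the homogeneous structure ${\bf Q}=({\rationals};<,>,=)$. Fix an arbitrary CSP$(\Gamma)$ in ${\cal T}$, so $\Gamma$ is a finite subset of $\boole{({\rationals};<)}$. Since every quantifier-free $\{<\}$-formula is in particular a quantifier-free $\{<,>,=\}$-formula, we have $\boole{({\rationals};<)} \subseteq \boole{{\bf Q}}$ (in fact equality holds, as $>$ is a reordering of $<$ and $x=y$ is $\neg(x<y)\wedge\neg(y<x)$), so $\Gamma$ is a finite subset of $\boole{{\bf Q}}$ --- precisely the form to which Theorem~\ref{thm:fpt} applies.

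Next I would check that ${\bf Q}$ satisfies the hypotheses of the theorem. The relations $<$, $>$, $=$ are binary, pairwise disjoint, and jointly exhaustive --- every pair of rationals satisfies exactly one of them --- so ${\bf Q}$ is a finite $2$-ary JEPD structure containing the binary equality relation $Eq_2$. It is folklore that $({\rationals};<,>,=)$ is homogeneous: the standard back-and-forth argument extends any isomorphism between finite substructures to an order-automorphism of ${\rationals}$ (see e.g. Macpherson~\cite{Macpherson:dm2011}). Hence Corollary~\ref{cor:homogeneous-pp}(\ref{ite:cor:homogeneous-pp-2}) yields that ${\bf Q}$ has the patchwork property.

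It remains to note that CSP$({\bf Q})$ is decidable --- it is in fact polynomial-time solvable, e.g. via path consistency --- so all hypotheses of Theorem~\ref{thm:fpt} are met and CSP$(\Gamma)$ is fpt parameterized by the treewidth of the primal graph. As CSP$(\Gamma)$ was arbitrary in ${\cal T}$, the proposition follows. There is essentially no obstacle here: the whole argument amounts to verifying the (mild) JEPD-plus-equality and homogeneity hypotheses of Corollary~\ref{cor:homogeneous-pp} for ${\bf Q}$ and then invoking the fpt machinery already established; the only point requiring a moment's thought is confirming that the constraint languages comprising ${\cal T}$ really do lie in $\boole{{\bf Q}}$.
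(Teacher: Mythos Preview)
Your proof is correct and follows essentially the same approach as the paper: verify that ${\bf Q}=({\rationals};<,>,=)$ is a finite JEPD structure containing $Eq_2$, invoke its homogeneity (via Macpherson) together with Corollary~\ref{cor:homogeneous-pp}(\ref{ite:cor:homogeneous-pp-2}) to obtain PP, note that CSP$({\bf Q})$ is decidable, and apply Theorem~\ref{thm:fpt}. The only difference is that you spell out the inclusion $\boole{({\rationals};<)}\subseteq\boole{{\bf Q}}$ explicitly, which the paper leaves implicit.
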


Hirsch~\cite{Hirsch:jlc97} points out and discusses interesting homogeneous structures whose CSP can be
solved with the same approach as for ${\cal T}$.
Moreover, Hirsch~\cite{Hirsch:ai96} proposed 
studying the computational complexity of CSPs for {\em relation algebras}, with obvious applications in AI.
Inspired by this research programme,
Bodirsky and Knäuer~\cite{Bodirsky:Knauer:nsat} 
recently identified sufficient conditions
for homogeneity of relation algebras. 
Their results provide further examples of CSPs that are
covered by Theorem~\ref{thm:fpt}.

We continue with a more elaborate example that demonstrates that Theorem~\ref{thm:fpt} is also
useful for non-CSP problems.
{\em Phylogeny problems} are used for phylogenetic reconstruction in 
bioinformatics, but also in 
areas such as database theory, computational genealogy, and
computational linguistics. A recent overview can be found in 
Warnow~\cite{Warnow:CP}. The problem is intuitively the following: given 
a partial description of a tree, is there a tree that is
compatible with the given information? 
Many problems of this kind
are \NP-hard: concrete examples include the {\em subtree avoidance problem}~\cite{Ng:etal:dam2000},
the {\em forbidden triple problem}~\cite{Bryant:PhD}, and the {\em quartet consistency problem}~\cite{Steel:jc92}.
Fpt algorithms are thus an interesting option for solving phylogeny problems.
Our basic idea is to rephrase phylogeny problems as CSPs and then
apply Theorem~\ref{thm:fpt}.
We formalise this below, mostly following
Bodirsky et al.~\cite{Bodirsky:etal:lmcs2017}. 

Let $T$ be a {\em tree}, i.e. an undirected,
acyclic, connected graph, and let $r$ be the {\em root} of $T$. 
We only consider binary trees, i.e. all vertices except for the root have
either degree $3$ or $1$, and the root has either degree $2$ or $0$. The vertex set of $T$ is
denoted by $V(T)$ and the set of leaves $L(T) \subseteq V(T)$ consists of the vertices of degree $1$.
For arbitrary $u, v \in V(T)$, we say that $u$ {\em lies below} $v$ if the path from $u$ to the root $r$ passes through $v$. 
We say
that $u$ {\em lies strictly below} $v$ if~$u$ lies below $v$ and $u \neq v$. The {\em youngest common ancestor} ($yca$) of 
$S \subseteq V(T)$ is the vertex $u$ that lies above all vertices in $S$ and has maximal distance
from~$r$; this vertex is uniquely determined by $S$.
The {\em leaf structure} of $T$ is the $\{C\}$-structure $(L(T); C)$
where $(x, y, z) \in C$ if and only if $yca(\{y, z\})$ lies strictly below $yca(\{x, y, z\})$. 
Following the literature on phylogeny problems, we write 
$x|yz$ instead of $C(x, y, z)$.

An {\em atomic phylogeny formula} $\phi$ is a conjunction of formulas of the form $x|yz$
and 
$x = y$.
We say that $\phi$ with variables $V$ is {\em satisfiable} if there exists a rooted binary
tree $T$ and a mapping $s: V \rightarrow L(T)$ such that $\phi$ is satisfied by $T$ under~$s$.
The atomic phylogeny problem {\sc Aphyl}
is the computational problem with atomic phylogeny formulas as instances and
the question is whether the formula is satisfiable or not.
{\sc Aphyl} is connected to CSPs as follows.

\begin{theorem} \label{thm:phylogeny-csp}
 There exists a homogeneous $\{|,=\}$-structure $\A$
with a countable domain and the following property: an instance $I$ of {\sc Aphyl} is satisfiable if and
only if  $I$ (viewed as an instance of CSP$(\{|,=\})$)  homomorphically maps to $\A$.
 \end{theorem}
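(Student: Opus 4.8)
The plan is to take $\A$ to be the Fra\"iss\'e limit of the class $\mathcal{K}$ of all finite $\{|,=\}$-structures that embed into the leaf structure $(L(T);C)$ of some finite rooted binary tree $T$ (with $=$ interpreted as genuine equality everywhere); this is essentially the homogeneous ``leaf structure'' used by Bodirsky et al.~\cite{Bodirsky:etal:lmcs2017}. First I would record that $\mathcal{K}$ is closed under isomorphism and under induced substructures (an induced substructure of something that embeds into $(L(T);C)$ still embeds into it), and that it has only countably many isomorphism types (there are countably many finite rooted binary trees, each with finitely many induced subleaf-structures). It is convenient, though not strictly necessary, to note in addition that membership in $\mathcal{K}$ is captured by a short universal axiomatisation of $|$ --- symmetry in the last two coordinates, the standard ``no conflicting splits'' conditions, and a binary-branching condition forbidding a simultaneous three-way split.

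The technical heart is the amalgamation property of $\mathcal{K}$. Given $\B_1,\B_2 \in \mathcal{K}$ with common substructure $\A_0$, embed $\B_i$ into the leaf structure of a finite rooted binary tree $T_i$, and let $S_i$ be the minimal subtree of $T_i$ spanning the root together with the image of $\A_0$. Since in $T_1$ and $T_2$ the image of $\A_0$ carries the same abstract $C$-structure, the trees $S_1$ and $S_2$, with degree-two vertices suppressed, are isomorphic via an isomorphism respecting the identification of the two copies of $\A_0$; call this common skeleton $S$. Each $T_i$ is a ``refinement'' of $S$, obtained by subdividing edges and grafting on the subtrees carrying the leaves of $T_i$ outside $\A_0$. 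One then forms a common refinement $T$: along each edge of $S$ one inserts in series first the branch points and grafted subtrees contributed by $T_1$ and then those contributed by $T_2$ (and similarly at the branch vertices of $S$), after which one suppresses degree-two vertices and subdivides as needed so that $T$ is again binary. Grafting extra material does not change the $yca$-relationships among the leaves already present, so $\B_1$ and $\B_2$ both embed into $(L(T);C)$ compatibly on $\A_0$, and $(L(T);C)\in\mathcal{K}$ is an amalgam. Together with the (trivial) hereditary and joint embedding properties, Fra\"iss\'e's theorem now yields a countable homogeneous $\{|,=\}$-structure $\A$ with $\age{\A}=\mathcal{K}$.

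It remains to prove the equivalence. If $I=(V,\phi)$ is satisfiable, witnessed by a finite rooted binary tree $T$ and a map $s:V\to L(T)$, then $(L(T);C)\in\mathcal{K}=\age{\A}$ embeds into $\A$ via some $e$, and $e\circ s$ is a homomorphism from $I$ to $\A$: every conjunct $x|yz$ of $\phi$ holds in $T$ under $s$, hence its image under $e$ lies in ${|}^{\A}$, and every conjunct $x=y$ forces $s(x)=s(y)$, hence $e(s(x))=e(s(y))$. Conversely, let $h:I\to\A$ be a homomorphism. The substructure of $\A$ induced by $h(V)$ is finite, hence lies in $\mathcal{K}$ and embeds via some $e$ into $(L(T);C)$ for a finite rooted binary tree $T$; set $s:=e\circ h:V\to L(T)$. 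For a conjunct $x|yz$ of $\phi$ we have $(h(x),h(y),h(z))\in{|}^{\A}$, and this triple already lies in the induced substructure on $h(V)$, so it is preserved by $e$ and $s(x)\,|\,s(y)s(z)$ holds in $T$; for a conjunct $x=y$ we have $h(x)=h(y)$, so $s(x)=s(y)$. Thus $T$ and $s$ witness satisfiability of $\phi$.

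The step I expect to be the main obstacle is the amalgamation property, and specifically the bookkeeping needed to guarantee that the glued object is again the leaf structure of a genuine \emph{binary} tree: one must subdivide and suppress vertices carefully so as to respect the degree constraints while not disturbing the $yca$-relationships that encode $|$. One can either carry out this edge-by-edge insertion explicitly or invoke the corresponding amalgamation argument from Bodirsky et al.~\cite{Bodirsky:etal:lmcs2017}; everything else is routine.
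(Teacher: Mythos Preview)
Your proposal is correct and follows the same route as the result the paper invokes: the paper's own proof is a one-line reference to Proposition~2 in Bodirsky et al.~\cite{Bodirsky:etal:lmcs2017}, and what you have written is precisely a self-contained sketch of that Fra\"iss\'e-limit construction (the class of finite leaf structures has HP, JEP and AP, its limit is the desired homogeneous $\A$, and the satisfiability equivalence follows from $\age{\A}=\mathcal{K}$). In that sense there is no genuine methodological difference---you have simply unpacked the citation rather than appealing to it, and you correctly identify the amalgamation step (gluing two binary trees along a common skeleton while preserving binarity and the $yca$-relation) as the only place where real work is needed.
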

\begin{proof}
Use Proposition~2 in Bodirsky et al.~\cite{Bodirsky:etal:lmcs2017}.
\end{proof}

The relation $x|yz$ will be a basic relation in the CSP we are aiming for.
Since we need a JEPD set of relations as the basis for Theorem~\ref{thm:fpt}, the following
observation (see, for instance Bodirsky et al.~\cite[Section~2.1]{Bodirsky:etal:lmcs2017}) is useful.

\begin{observation}\label{obs-phylo}
Let $x,y,z$ be arbitrary leaves in an arbitrarily chosen rooted binary tree.
If $x|yz$, then it may be the case that $y = z$.
However, $x|yz$ implies that $x \neq y$ and $x \neq z$. Hence, we either have $x|yz$, $y|xz$, $z|xy$ or $x = y = z$.
\end{observation}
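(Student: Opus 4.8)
The plan is to verify the three assertions in turn, using only the definition of the leaf structure $(L(T);C)$ — recall $x|yz$ means $yca(\{y,z\})$ lies strictly below $yca(\{x,y,z\})$ — together with elementary facts about $yca$ in a rooted binary tree.

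First, to see that $x|yz$ is compatible with $y=z$, take any two distinct leaves $x,y$ and set $z=y$. Since $y$ is a leaf, $yca(\{y,z\})=yca(\{y\})=y$, while $yca(\{x,y,z\})=yca(\{x,y\})$ is an internal vertex (because $x\neq y$) and hence lies strictly above $y$. Thus $y=yca(\{y,z\})$ lies strictly below $yca(\{x,y,z\})$, i.e.\ $x|yz$ holds; so such configurations do occur. For the implication $x|yz \Rightarrow x\neq y$ and $x\neq z$, suppose for contradiction that $x=y$ (the case $x=z$ is symmetric). Then $\{x,y,z\}=\{y,z\}$ as sets, so $yca(\{x,y,z\})=yca(\{y,z\})$; but $x|yz$ asserts that $yca(\{y,z\})$ lies strictly below $yca(\{x,y,z\})$, which would force a vertex to lie strictly below itself — impossible.

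For the exhaustiveness claim, write $a=yca(\{x,y,z\})$. If $x=y=z$ we are in the last case, so assume otherwise; then $\{x,y,z\}$ contains two distinct leaves, and the youngest common ancestor of a set containing two distinct leaves is necessarily an internal vertex. Since $T$ is binary, $a$ has exactly two children, with associated subtrees $T_1$ and $T_2$, and each of $x,y,z$ lies in $T_1$ or in $T_2$. All three cannot lie in the same $T_i$: if they did, the root of $T_i$ would be a common ancestor of $\{x,y,z\}$ at strictly greater distance from $r$ than $a$, contradicting the choice of $a$. Hence, counting with multiplicity, two of the three leaves lie in one subtree and the third in the other. If $y,z\in T_1$ and $x\in T_2$, then the root of $T_1$ is a common ancestor of $\{y,z\}$ lying strictly below $a$, so $yca(\{y,z\})$ lies strictly below $a=yca(\{x,y,z\})$; that is, $x|yz$. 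The two symmetric distributions yield $y|xz$ and $z|xy$.

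I expect the only delicate point to be the third part: justifying that $a$ is internal with exactly two children, and the bookkeeping when some of $x,y,z$ coincide. (For instance, if $x=y\neq z$, then $x$ and $y$ land in the same subtree, forcing $z$ into the other and giving $z|xy$, which is consistent with the second part, since neither $x|yz$ nor $y|xz$ can hold when $x=y$.) Everything else is immediate from the definitions.
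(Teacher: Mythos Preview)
Your argument is correct. The paper does not actually supply a proof of this observation; it merely attributes it to Bodirsky et al.~\cite[Section~2.1]{Bodirsky:etal:lmcs2017}, so there is nothing to compare against beyond noting that you have filled in the omitted details. Your treatment of the three parts---the explicit witness for $y=z$, the set-equality argument for $x\neq y,z$, and the two-subtree pigeonhole for exhaustiveness---is clean, and your parenthetical remark handling the case where two of $x,y,z$ coincide (counting ``with multiplicity'') correctly disposes of the only subtlety.
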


Assume that the structure $\A$ in Theorem~\ref{thm:phylogeny-csp} has domain $A$ and contains the relations $|'$ and $='$.
Let ${\bf P}$ denote the structure $(A;R_1,R_2,R_3,R_4)$ where
$R_1(x,y,z) \Leftrightarrow x|'yz$, $R_2(x,y,z) \Leftrightarrow y|'xz$, $R_3(x,y,z) \Leftrightarrow z|'xy$, and $R_4(x,y,z) \Leftrightarrow (x='y='z)$.

\begin{proposition}\label{prop:phylogeny-fpt}
Let $\Gamma$ be a finite subset of $\boole{{\bf P}}$.
Then CSP$(\Gamma)$ is fpt parameterized by the treewidth of the primal graph.
\end{proposition}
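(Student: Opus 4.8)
The plan is to reduce Proposition~\ref{prop:phylogeny-fpt} to Theorem~\ref{thm:fpt} by checking that the structure ${\bf P}=(A;R_1,R_2,R_3,R_4)$ satisfies all the hypotheses of that theorem. Concretely, I need to verify that (i) ${\bf P}$ is a finite constraint language whose relations are JEPD, (ii) ${\bf P}$ has the patchwork property, and (iii) CSP$({\bf P})$ is decidable. Once these are in place, Theorem~\ref{thm:fpt} applied with $\A = {\bf P}$ immediately yields that CSP$(\Gamma)$ is fpt parameterized by primal treewidth for every finite $\Gamma \subseteq \boole{{\bf P}}$.

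First I would check the JEPD property. The relations $R_1, R_2, R_3$ are ternary, and $R_4$ is the ternary ``all-equal'' relation $Eq_3$, so ${\bf P}$ is a $3$-ary structure. Pairwise disjointness follows from Observation~\ref{obs-phylo}: $x|'yz$ forces $x \neq y$ and $x \neq z$, so $R_1$ is disjoint from $R_4$, and similarly for $R_2, R_3$; and $R_1, R_2, R_3$ are pairwise disjoint since, e.g., $x|'yz$ and $y|'xz$ cannot hold simultaneously in a tree (again by Observation~\ref{obs-phylo}, for any triple exactly one of the four cases holds). Joint exhaustiveness is precisely the last sentence of Observation~\ref{obs-phylo}: for all $x,y,z \in A$ we have $x|'yz$, $y|'xz$, $z|'xy$, or $x='y='z$, so $R_1 \cup R_2 \cup R_3 \cup R_4 = A^3$. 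Hence ${\bf P}$ is $3$-ary JEPD, and it contains the $3$-ary equality relation $Eq_3 = R_4$.

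Next, for the patchwork property I would invoke Corollary~\ref{cor:homogeneous-pp}\eqref{ite:cor:homogeneous-pp-2}. The remaining thing to check is that ${\bf P}$ is a \emph{homogeneous} structure with a countable domain and a countable (here finite) signature. The structure $\A$ from Theorem~\ref{thm:phylogeny-csp} is homogeneous over the signature $\{|,=\}$ with a countable domain $A$, and ${\bf P}$ has the same domain $A$. The key observation is that ${\bf P}$ and $\A$ are \emph{interdefinable}: each $R_i$ is quantifier-free definable from $|'$ and $='$ (by its defining formula), and conversely $|'$ is $R_1$ and $='$ is first-order definable from $R_4$ (indeed $x='y \iff R_4(x,y,y)$, using Observation~\ref{obs-phylo} in the form that $R_4(x,y,y)$ forces $x = y$). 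Two structures on the same domain that are interdefinable by quantifier-free (or even first-order) formulas without parameters have the same automorphism group, and homogeneity of a structure is a property of its automorphism group together with its relations being preserved; since the $R_i$ are $\aut{\A}$-invariant, any partial isomorphism between finite substructures of ${\bf P}$ is also a partial isomorphism between the corresponding finite substructures of $\A$, hence extends to an automorphism of $\A$, which is also an automorphism of ${\bf P}$. Thus ${\bf P}$ is homogeneous, and Corollary~\ref{cor:homogeneous-pp}\eqref{ite:cor:homogeneous-pp-2} gives that ${\bf P}$ has PP.

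Finally, decidability of CSP$({\bf P})$ follows from Theorem~\ref{thm:phylogeny-csp}: deciding whether an instance maps homomorphically to $\A$ is the same as deciding satisfiability of the corresponding {\sc Aphyl} instance, which is decidable (this is classical for phylogeny; alternatively, homogeneous structures with finite relational signature have decidable CSPs under mild effectivity assumptions, which hold here). Since any CSP$({\bf P})$ instance can be rewritten in the $\{|,=\}$-signature in polynomial time using the definitions of $R_1,\dots,R_4$, decidability transfers. With (i)--(iii) established, Theorem~\ref{thm:fpt} applies verbatim. The main obstacle I expect is the homogeneity argument: one must be careful that passing from $\A$ to ${\bf P}$ genuinely preserves homogeneity, which is why the interdefinability / same-automorphism-group argument (rather than a direct back-and-forth) is the cleanest route; everything else is a routine unwinding of Observation~\ref{obs-phylo}.
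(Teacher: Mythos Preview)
Your proposal is correct and follows essentially the same route as the paper: verify that ${\bf P}$ is $3$-ary JEPD containing $Eq_3$ via Observation~\ref{obs-phylo}, transfer homogeneity from $\A$ to ${\bf P}$, apply Corollary~\ref{cor:homogeneous-pp}\eqref{ite:cor:homogeneous-pp-2} to get PP, check decidability of CSP$({\bf P})$, and invoke Theorem~\ref{thm:fpt}. The only cosmetic differences are that the paper justifies homogeneity of ${\bf P}$ more tersely (noting that the $R_i$ arise from $|'$ and $='$ by permuting arguments, so partial isomorphisms coincide) rather than via your explicit interdefinability argument, and it dispatches decidability by citing a concrete polynomial-time algorithm for the rooted-triple problem (Aho et al.) instead of appealing to decidability of {\sc Aphyl} in the abstract.
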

\begin{proof}
We know that ${\bf P} \subseteq \boole{(A;|',=')}$
and it is straightforward
to verify that~${\bf P}$ is homogeneous since $(A;|',=')$ is homogeneous---all relations
in ${\bf P}$ can be obtained by permuting the arguments of relations in $(A;|',=')$.
The structure~${\bf P}$ is JEPD by Observation~\ref{obs-phylo} and it contains the ternary equality relation.
Thus, ${\bf P}$ has PP by Corollary~\ref{cor:homogeneous-pp}.\ref{ite:cor:homogeneous-pp-2}. 
Finally, CSP$({\bf P})$ is solvable in polynomial time~\cite{Aho:etal:sicomp81} and
the proposition follows from Theorem~\ref{thm:fpt}.
\end{proof}

This proves that the three
examples of \NP-hard phylogeny problems that were discussed earlier are fpt. 
We exemplify this with the aid of the forbidden triple problem (the exact
formulations of the other two problems as CSPs based on relations in $\boole{{\bf P}}$ can be found in
Bodirsky et al.~\cite[Section~2.2]{Bodirsky:etal:lmcs2017}). This problem is the phylogeny problem concerning
formulas \[F(x,y,z) = \neg(x|yz)\]
and the parameterization is the primal treewidth of a conjunction of such formulas. 
The essence of Theorem~\ref{thm:phylogeny-csp} and Proposition~\ref{prop:phylogeny-fpt}
is that there exists a relation $R(x,y,z) \in \boole{{\bf P}}$ that exactly captures the formula $F$:
\[R(x,y,z) \equiv \neg R_1(x,y,z)\]
or, equivalently,
\[R(x,y,z) \equiv R_2(x,y,z) \vee R_3(x,y,z) \vee R_4(x,y,z).\]
Thus, the forbidden triple problem can be viewed as CSP$(\{R\})$. The transformation from the forbidden triple 
problem to CSP$(\{R\})$ is simply
to replace each formula $F(x,y,z)$ with the constraint $R(x,y,z)$. This operation
obviously preserves the treewidth of instances.

We remark that the disequality relation neq $=\{(a,b) \in A^2 \; | \; a \neq b\}$ is used to 
define relations in some phylogeny examples---note that ${\rm neq}(x,y) \Leftrightarrow \neg R_4(x,x,y) \Leftrightarrow 
R_1(x,x,y) \vee R_2(x,x,y) \vee R_3(x,x,y)$
so the relation ${\rm neq}$ is a member of $\boole{{\bf P}}$.

\section{Beyond Patchwork}
\label{sec:beyondPP}

There are interesting examples
of structures $\A$ that do not have PP.
An eminent example
is the {\em Branching Time Algebra} (BTA) \cite{Anger:etal:spie91}
which has been used, for example, in planning~\cite{Dean:Boddy:aij88}, as the
basis for temporal logics~\cite{Emerson:Halpern:jacm86}, and as the basis for
a generalisation of Allen's Interval Algebra~\cite{Ragni:wolfl:sc2004}. We note that, in particular, 
the complexity
of the branching variant of Allen's Interval Algebra has recently gained
attraction~\cite{Bertagnon:etal:time2020,Gavanelli:etal:time2018}.
In BTA, the past of a time point is linearly ordered, but the
future is only partially ordered (see Bodirsky~\cite[Section~5.2]{Bodirsky:InfDom} for
formal details). 
This implies that time becomes a directed tree-like structure
with four basic relations $=$, $<$, $>$ and $||$, 
meaning ``equal'', ``before'', ``after'' and ``unrelated'', respectively.

One may formulate BTA as a CSP$(\B_{\rm BTA})$ where 
$\B_{\rm BTA}$ is JEPD, 
but one cannot formulate the problem so that $\B_{\rm BTA}$ has PP; this follows from
adapting an argument by Hirsch~\cite[Section~4.1]{Hirsch:jlc97}.
Both CSP$(\B_{\rm BTA})$ and CSP$({\B}_{\rm BTA}^{\vee =})$ are solvable in polynomial time~\cite[Section~4.2]{Hirsch:jlc97}, but 
there are finite $\Gamma \subseteq \boole{\B_{\rm BTA}}$ such that
CSP$(\Gamma)$ is NP-hard~\cite{Broxvall:Jonsson:ai2003}.
It is natural to ask whether CSP$(\Gamma)$ is fpt when $\Gamma$ contains higher-arity relations defined over $\B_{\rm BTA}$.
We show this by exploiting {\em homogenisability}: a homogenisable structure is a structure that can be
extended with a finite number of new relations in order to make the expanded structure homogeneous~\cite{Covington:ijm90,Hartman:etal:ms2015}. Homogenisation has recently become an interesting tool
for analysing CSPs: examples include 
connections between homogenisation and local
consistency algorithms for CSPs~\cite{Atserias:Torunczyk:csl2016} and applications 
concerning logically defined CSPs~\cite[Section~4.3.3]{Bodirsky:InfDom}. 
More background information about homogenisable structures can be found in the survey by Macpherson~\cite{Macpherson:dm2011}
and the thesis by Ahlman~\cite{Ahlman:PhD}.
We will prove a general fpt result for homogenisable structures in Theorem~\ref{thm:homogenisable}, and thus
prove that Theorem~\ref{thm:fpt} can indeed be generalised to certain structures that do not have PP.
In particular, this result proves that CSP$(\Gamma)$ is fpt whenever $\Gamma$ is a finite subset of $\boole{\B_{\rm BTA}}$.
We divide the rest of this section into two parts, where the first part is concerned with $\omega$-categorical structures
and the second part describes how homogenisable structures can be used to obtain fpt results.

\subsection{$\omega$-categoricity}

We first remind the reader of the definition of $\omega$-categoricity:
the {\em (first-order) theory} of a $\tau$-structure $\A$ (denoted by ${\rm Th}(\A)$) is
the set of all first-order $\tau$-sentences (i.e. formulas without free variables)
that are satisfied by $\A$, and ${\bf A}$ is said to be $\omega$-categorical if ${\rm Th}({\bf A})$ 
has exactly one model up to isomorphism. 
The concept of $\omega$-categoricity plays a key role in the study of complexity
aspects of CSPs~\cite{Bodirsky:InfDom}, but it is also important from an AI perspective~\cite{Hirsch:ai96,Huang:kr2012,Jonsson:ai2018}.
Examples of such structures include all structures with a finite domain and all structures that
were presented in Section~\ref{sec:qstr}.
The relation between $\omega$-categorical structures and homogeneous structures can be summarised as follows.
We say that a theory $T$ admits {\em quantifier elimination} if for every formula $\phi(x_1, \ldots ,x_n)$,
there is a quantifier-free formula $\psi(x_1,\ldots,x_n)$ such that $T$
entails
\[\forall x_1 \cdots \forall x_n (\psi(x_1,\ldots,x_n) \Leftrightarrow \phi(x_1,\ldots,x_n)).\]

\begin{theorem} \label{thm:homovsomega}
Let ${\bf A}$ be a structure.

\begin{enumerate}
\item
If ${\bf A}$ is $\omega$-categorical, then ${\bf A}$ is
homogeneous if and only if ${\rm Th}({\bf A})$ admits quantifier elimination~\cite[Proposition~3.1.6]{Macpherson:dm2011}.

\begin{sloppypar}
\item
If ${\bf A}$ is a homogeneous 
structure with finite signature, then
${\bf A}$ is $\omega$-categorical~\cite[Corollary~3.1.3]{Macpherson:dm2011}.
\end{sloppypar}
\end{enumerate}

\end{theorem}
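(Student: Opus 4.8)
The plan is to reduce both statements to the Ryll-Nardzewski theorem (attributed also to Engeler and Svenonius), which I would use in the form: a countable structure $\A$ is $\omega$-categorical if and only if $\aut{\A}$ is \emph{oligomorphic}, i.e.\ has only finitely many orbits on $n$-tuples for every $n$; and in that case every complete $n$-type of ${\rm Th}(\A)$ is isolated and there are finitely many of them. This is classical and may be invoked just as the excerpt invokes Macpherson for the statement itself.

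For the second part, fix $n$. Since the signature is finite and relational, there are only finitely many atomic formulas in $x_1,\dots,x_n$, hence only finitely many quantifier-free $n$-types. If two $n$-tuples $\bar a,\bar b$ from $\A$ realise the same quantifier-free type, then $\bar a\mapsto\bar b$ is an isomorphism between the finite substructures they induce, so homogeneity extends it to an automorphism of $\A$; thus $\bar a$ and $\bar b$ lie in one $\aut{\A}$-orbit. Hence $\aut{\A}$ has at most as many $n$-orbits as there are quantifier-free $n$-types, which is finite, so $\aut{\A}$ is oligomorphic; as $\A$ is countable (part of the definition of homogeneity used here), Ryll-Nardzewski gives $\omega$-categoricity. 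The finite-domain case is immediate, since a finite relational structure is homogeneous and is determined up to isomorphism by its first-order theory.

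For the first part I would use the standard reformulation: ${\rm Th}(\A)$ admits quantifier elimination iff, for every $n$, any two tuples (in any two models of ${\rm Th}(\A)$) with the same quantifier-free type have the same complete type. From quantifier elimination to homogeneity: an isomorphism $f$ between finite substructures of $\A$ preserves quantifier-free formulas, hence by quantifier elimination all first-order formulas, so $f$ is a partial elementary map; since $\A$ is $\omega$-categorical it is $\omega$-homogeneous (a back-and-forth argument using countability and the fact that every type is isolated), so $f$ extends to an automorphism and $\A$ is homogeneous. Conversely, let $T={\rm Th}(\A)$, which is complete and $\omega$-categorical, so its unique countable model is $\A$; given a quantifier-free type $q$ consistent with $T$, realise it in a countable model, which is isomorphic to $\A$, and pick a realising tuple $\bar a\in\A$, putting $p={\rm tp}(\bar a)$. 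If $\bar b$ realises $q$ in any model of $T$, then passing to a countable elementary substructure containing $\bar b$ (again isomorphic to $\A$) we may assume $\bar b\in\A$; now $\bar a,\bar b$ have the same quantifier-free type in $\A$, so homogeneity produces an automorphism of $\A$ sending $\bar a$ to $\bar b$, whence ${\rm tp}(\bar b)=p$. Thus $q$ has a unique completion, which is exactly quantifier elimination.

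The step I expect to be the main obstacle is precisely this last passage: moving from a property of the single structure $\A$ (two tuples with equal quantifier-free type are related by an automorphism) to the property of the whole theory $T$ demanded by quantifier elimination. The tool that makes it go through is the downward L\"owenheim--Skolem theorem together with $\omega$-categoricity, which together let one replace any model of $T$ and any finite tuple in it by a copy sitting inside $\A$, so that the homogeneity of $\A$ can be applied.
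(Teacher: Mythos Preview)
The paper does not give its own proof of this theorem: it is stated with citations to Macpherson's survey (Propositions~3.1.6 and Corollary~3.1.3 there) and treated as background. There is therefore nothing to compare against.

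Your argument is the standard one and is correct. Part~2 is exactly the usual route: finite relational signature gives finitely many quantifier-free $n$-types, homogeneity collapses each to a single orbit, and Ryll--Nardzewski finishes. For Part~1, your two directions are also the textbook arguments; the only point worth flagging is that countability of $\A$ is needed in the direction ``quantifier elimination $\Rightarrow$ homogeneous'' (to get $\omega$-homogeneity from $\omega$-categoricity), and you correctly note that the paper's definition of homogeneity builds countability in. Your identification of the downward L\"owenheim--Skolem step as the place where care is needed is apt, and your handling of it is fine.
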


A useful property of $\omega$-categorical structures is that they can be refined into finite JE$^+$PDJD structures.
Let ${\bf A}=(A;R_1,\ldots,R_m)$ denote a relational structure (that is not necessarily $\omega$-categorical).
The {\em orbit} of a tuple $\bar{a}=(a_1, \ldots, a_k) \in A^k$ (denoted by $\Orb{\bar{a}}$) 
is the set
\[\{(g(a_1), \ldots , g(a_n)) \; | \; g \in \aut{{\bf A}}\}\]
The orbits of $k$-tuples in $A$ partition the set $A^k$: for arbitrary $\bar{a},\bar{b} \in A^k$, either 
$\Orb{\bar{a}}=\Orb{\bar{b}}$
or $\Orb{\bar{a}} \cap \Orb{\bar{b}} = \emptyset$, and
for every 
$\bar{a} \in A^k$ there exists a $\bar{c} \in A^k$ such that
$\bar{a} \in \Orb{\bar{c}}$.
If ${\bf A}$ is $\omega$-categorical, then the set $\{\Orb{\bar{a}} \; | \; \bar{a} \in A^k\}$ is finite
for every $k \in \naturals$; this is an important consequence of a result by 
Engeler, Ryll-Nardzewski and Svenonius (this theorem is covered
by most textbooks on model theory such as Hodges~\cite{Hodges:1997:SMT:262326}).
The following definition will simplify our presentation.

\begin{definition}
Let ${\bf A}=(D;R_1,\ldots,R_m)$ denote a relational structure and let $d \geq 1$. Define ${\bf A}^{\leq d}$ to be
the relational structure over $D$ whose relations are all orbits of at most $d$-ary tuples over $A$.
\end{definition}

We collect a few straightforward facts about ${\bf A}^{\leq d}$ and we note that
these are discussed in more detail by Baader \& Rydval~\cite[Section~4]{Baader:Rydval:ijcar2020}.
Facts~\ref{fact:1} and \ref{fact:2} are based on the observation that if a relation $R$ contains
the tuple $\bar{a} = (a_1,\dots,a_k)$, then $\Orb{\bar{a}} \subseteq R$, too (by the definition of automorphisms),
while Fact~\ref{fact:3} is a direct consequence of the observation on orbit size of $\omega$-categorical structures that was made above.

\begin{fact}\label{fact:1}
The structure ${\bf A}^{\leq d}$ is JE$^+$PDJD, but it is not a $k$-ary structure in general and it is not necessarily finite. 
\end{fact}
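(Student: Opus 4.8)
The plan is to read off JE$^+$, PD, and JD directly from the fact (already noted in the text) that, for each arity $k$, the orbits of $k$-tuples partition $D^k$, and then to supply two small examples for the two negative assertions. Throughout, the relations of ${\bf A}^{\leq d}$ of arity $k$ are, by construction, exactly the orbits $\Orb{\bar a}$ with $\bar a \in D^k$, and (under the usual convention that relation arities run from $2$) these exist precisely for $2 \le k \le d$. For JE$^+$ and PD, assuming $d \ge 2$: since the orbits of $k$-tuples partition $D^k$, their union $({\bf A}^{\leq d})^{=k}$ equals $D^k$ for every such $k$ and is empty for $k > d$, which is precisely the definition of JE$^+$ with witnessing bound $d$; and pairwise disjointness is immediate from the same partition, since two distinct relations of equal arity are distinct orbits (hence disjoint) and relations of different arities are trivially disjoint.

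For JD, I would argue that an orbit $\Orb{(a,b)}$ is contained in the binary equality relation $Eq_2$ if and only if $a = b$: if $a \neq b$ then $(a,b) \in \Orb{(a,b)} \setminus Eq_2$, whereas if $a = b$ then every element of $\Orb{(a,a)} = \{(g(a),g(a)) : g \in \aut{{\bf A}}\}$ is a diagonal pair. Applying the identity automorphism gives $(a,a) \in \Orb{(a,a)}$ for each $a \in D$, and therefore $\bigcup\{R \in {\bf A}^{\leq d} : R \subseteq Eq_2\} = \bigcup_{a \in D} \Orb{(a,a)} = Eq_2$, which is exactly JD.

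The two negative claims need only examples. Whenever $d \geq 3$ and $D \neq \varnothing$, the structure ${\bf A}^{\leq d}$ contains both binary relations (orbits of pairs) and ternary relations (orbits of triples), so it fails to be $k$-ary for any single $k$. For non-finiteness, one may take ${\bf A} = ({\naturals}; <_{\naturals})$, whose only automorphism is the identity; then every pair is its own orbit, so ${\bf A}^{\leq 2}$ already has infinitely many binary relations. I do not anticipate a genuine obstacle: everything reduces to the partition-into-orbits property together with the presence of the identity automorphism, and the only step calling for a little care is the JD argument --- pinning down exactly which binary orbits lie inside $Eq_2$ --- together with being consistent about the convention that relation arities run from $2$ up to $d$.
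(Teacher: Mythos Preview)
Your proposal is correct and follows essentially the same approach as the paper: the paper merely points to the partition-into-orbits property and defers details to Baader \& Rydval, while you spell out how JE$^+$, PD, and JD each fall out of that partition and supply concrete examples for the two negative assertions. Your treatment is in fact more detailed than the paper's own justification.
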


\begin{fact}\label{fact:2}
Every $m$-ary relation $R \in {\bf A}$ with $m \leq d$ can be viewed as the union
of relations in ${\bf A}^{\leq d}$ or, equivalently, 
\[R(x_1,\ldots,x_m) \equiv \bigvee_{S \in {\bf S}} S(x_1,\ldots,x_m).\]
for some ${\bf S} \subseteq {\bf A}^{\leq d}$.
\end{fact}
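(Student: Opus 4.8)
The plan is to show directly that $R$ equals the union of the orbits of its own tuples. First I would record the key observation, which is exactly the one flagged in the paragraph preceding the fact: if $\bar{a} = (a_1,\dots,a_m) \in R$, then $\Orb{\bar{a}} \subseteq R$. This is immediate because every $g \in \aut{{\bf A}}$ is in particular a self-embedding of ${\bf A}$ and hence preserves the relation $R$; thus $(g(a_1),\dots,g(a_m)) \in R$ for every $g$, which is precisely the statement $\Orb{\bar{a}} \subseteq R$.

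From this the identity $R = \bigcup_{\bar{a} \in R} \Orb{\bar{a}}$ follows at once: the inclusion ``$\supseteq$'' is the observation above, and ``$\subseteq$'' holds because the identity automorphism witnesses $\bar{a} \in \Orb{\bar{a}}$ for every $\bar{a}$. Since $m \leq d$ and $R \subseteq D^m$, each orbit $\Orb{\bar{a}}$ with $\bar{a} \in R$ is by definition one of the relations of ${\bf A}^{\leq d}$. Setting ${\bf S} = \{\Orb{\bar{a}} \mid \bar{a} \in R\}$ then gives ${\bf S} \subseteq {\bf A}^{\leq d}$ and $R = \bigcup_{S \in {\bf S}} S$, which is exactly the asserted equivalence $R(x_1,\dots,x_m) \equiv \bigvee_{S \in {\bf S}} S(x_1,\dots,x_m)$. (One may additionally note that this union is disjoint, since distinct orbits of $m$-tuples partition $D^m$, but disjointness is not needed for the statement.)

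There is no genuine obstacle here; the argument is essentially a one-line consequence of the definition of an automorphism. The only points worth a remark are: (i) nothing about $\omega$-categoricity is used, so a priori ${\bf S}$ could be infinite and the displayed disjunction should be read as the set-theoretic union $\bigcup_{S \in {\bf S}} S$; when ${\bf A}$ is $\omega$-categorical it is genuinely finite by the Engeler--Ryll-Nardzewski--Svenonius theorem cited above, so it is then a bona fide quantifier-free first-order formula; and (ii) one must check that the orbits appearing really are orbits of tuples of arity $m \leq d$, which is guaranteed by the hypothesis $m \leq d$ on the arity of $R$, so that every such orbit does belong to ${\bf A}^{\leq d}$.
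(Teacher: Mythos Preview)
Your proposal is correct and follows precisely the approach the paper indicates: the paper does not give a separate proof but states that Fact~2 is ``based on the observation that if a relation $R$ contains the tuple $\bar{a} = (a_1,\dots,a_k)$, then $\Orb{\bar{a}} \subseteq R$, too (by the definition of automorphisms),'' and you have simply spelled out the immediate details of that observation.
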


\begin{fact}\label{fact:3}
The structure ${\bf A}^{\leq d}$ is finite if ${\bf A}$ is $\omega$-categorical.
\end{fact}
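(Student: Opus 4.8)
The plan is to obtain the statement as an immediate corollary of the theorem of Engeler, Ryll-Nardzewski and Svenonius that was already recalled in the paragraph preceding the definition of ${\bf A}^{\leq d}$: if ${\bf A}$ is $\omega$-categorical with domain $D$, then for every $k \in \naturals$ the set $\{\Orb{\bar{a}} \mid \bar{a} \in D^k\}$ of orbits of $k$-tuples under $\aut{{\bf A}}$ is finite.

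First I would fix $d \geq 1$. By the definition of ${\bf A}^{\leq d}$, its relations are precisely the orbits $\Orb{\bar{a}}$ as $\bar{a}$ ranges over all tuples of arity $k$ for $k \in \{1,\ldots,d\}$. For each such $k$, the Engeler--Ryll-Nardzewski--Svenonius theorem provides a finite bound $N_k$ on the number of distinct $k$-orbits. Hence the number of relations of ${\bf A}^{\leq d}$ is at most $N_1 + \cdots + N_d$, which is finite, and therefore ${\bf A}^{\leq d}$ has a finite signature. As elsewhere in the paper, ``finite'' here refers to the signature: the domain $D$ may still be infinite, exactly as for the finite constraint languages considered earlier.

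There is no real obstacle here: the entire content is the appeal to the cited model-theoretic fact, and the only points requiring a modicum of care are the bookkeeping over the finitely many arities $1,\ldots,d$ and making explicit that finiteness is being asserted of the set of relations rather than of the domain.
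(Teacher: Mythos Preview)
Your proposal is correct and matches the paper's own justification: the paper states that Fact~\ref{fact:3} ``is a direct consequence of the observation on orbit size of $\omega$-categorical structures that was made above,'' which is precisely the Engeler--Ryll-Nardzewski--Svenonius result you invoke. Your additional clarification that ``finite'' refers to the signature rather than the domain is a helpful elaboration but not a departure from the paper's reasoning.
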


\medskip

The following result connects structures ${\bf A}^{\leq d}$ with $\omega$-categoricity, homogeneity, and the patchwork property.

\begin{theorem}[Immediate consequence of Theorem~5 in Baader \& Rydval~\cite{Baader:Rydval:ijcar2020}] \label{thm:BaaderRydvalpatchwork}
Let ${\bf A}$ denote an $\omega$-categorical homogeneous relational structure containing at most $d$-ary relations for some $d \geq 2$.
Then ${\bf A}^{\leq d}$ has the patchwork property.
\end{theorem}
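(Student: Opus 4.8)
The plan is to show that ${\bf A}^{\leq d}$ is itself a countable homogeneous JE$^+$PDJD structure with a countable signature and then invoke Corollary~\ref{cor:homogeneous-pp}.\ref{ite:cor:homogeneous-pp-1} (equivalently, Theorems~\ref{thm:homogeneous-ap} and~\ref{thm:BR-patchwork}). Two of the three ingredients come for free: ${\bf A}^{\leq d}$ is JE$^+$PDJD by Fact~\ref{fact:1}, and it has a finite (hence countable) signature by Fact~\ref{fact:3}, because ${\bf A}$ is $\omega$-categorical; its domain is that of ${\bf A}$ and hence countable. So the real work is to verify that ${\bf A}^{\leq d}$ is homogeneous, and $\omega$-categoricity beyond this point is only used to keep the signature finite.

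First I would record that $\aut{{\bf A}^{\leq d}} = \aut{{\bf A}}$. The inclusion $\aut{{\bf A}} \subseteq \aut{{\bf A}^{\leq d}}$ is immediate from the definition of an orbit: an automorphism of ${\bf A}$ maps every orbit onto itself. For the converse, let $g \in \aut{{\bf A}^{\leq d}}$. By hypothesis every relation $R \in {\bf A}$ has arity at most $d$, so by Fact~\ref{fact:2} it is a union of orbit relations of ${\bf A}^{\leq d}$; since $g$ (and $g^{-1}$) preserves each such orbit relation, $g$ preserves $R$ and its complement, so $g \in \aut{{\bf A}}$. This is exactly the step where the bound on the arity of the relations of ${\bf A}$ is used.

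Next, let $f \colon {\bf B} \to {\bf C}$ be an isomorphism between finite substructures of ${\bf A}^{\leq d}$, write $V({\bf B}) = \{b_1,\dots,b_n\}$ and $c_i := f(b_i)$; by the previous paragraph it suffices to extend $f$ to an element of $\aut{{\bf A}}$. For every index subsequence $(i_1,\dots,i_k)$ with $k \leq d$, applying the isomorphism condition to the orbit relation $\Orb{(b_{i_1},\dots,b_{i_k})}$ shows that $(c_{i_1},\dots,c_{i_k})$ lies in $\Orb{(b_{i_1},\dots,b_{i_k})}$, i.e. these two $k$-tuples are orbit-equivalent. Now, since ${\bf A}$ is homogeneous, $(b_1,\dots,b_n)$ and $(c_1,\dots,c_n)$ lie in a common $\aut{{\bf A}}$-orbit if and only if $b_i \mapsto c_i$ is an isomorphism between the induced substructures of ${\bf A}$, i.e. if and only if each $R \in {\bf A}$ is preserved on all of its subtuples. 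Each such $R$ has arity at most $d$ and, by Fact~\ref{fact:2}, is a union of orbit relations, so it is preserved on a $k$-subtuple precisely because the corresponding $k$-subtuples of $(b_1,\dots,b_n)$ and $(c_1,\dots,c_n)$ are orbit-equivalent, as just shown. Hence there is $g \in \aut{{\bf A}}$ with $g(b_i) = c_i$ for all $i$, and $g$ extends $f$. Thus ${\bf A}^{\leq d}$ is homogeneous.

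With homogeneity established, ${\bf A}^{\leq d}$ is a countable homogeneous structure with a countable signature that is JE$^+$PDJD, so Corollary~\ref{cor:homogeneous-pp}.\ref{ite:cor:homogeneous-pp-1} gives that ${\bf A}^{\leq d}$ has the patchwork property, as required. I expect the main obstacle to be the case of substructures with more than $d$ elements in the homogeneity argument: an ${\bf A}^{\leq d}$-isomorphism a priori matches only the orbits of the $\leq d$-ary subtuples, and one has to argue that this already pins down the full orbit of the $n$-tuple in ${\bf A}$ — which is exactly where homogeneity of ${\bf A}$ combines with the arity bound $d$ on the relations of ${\bf A}$.
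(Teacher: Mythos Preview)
Your argument is correct. The paper does not actually give a proof of this theorem; it simply records it as an immediate consequence of Theorem~5 in Baader \& Rydval~\cite{Baader:Rydval:ijcar2020}. What you have done is supply a self-contained derivation from the tools already assembled in the paper: you verify that ${\bf A}^{\leq d}$ is countable, has finite (hence countable) signature by Fact~\ref{fact:3}, is JE$^+$PDJD by Fact~\ref{fact:1}, and---this is the only nontrivial step---is homogeneous, after which Corollary~\ref{cor:homogeneous-pp}.\ref{ite:cor:homogeneous-pp-1} applies. Your homogeneity argument is sound: the equality $\aut{{\bf A}} = \aut{{\bf A}^{\leq d}}$ follows from Fact~\ref{fact:2} and the arity bound, and an ${\bf A}^{\leq d}$-isomorphism between finite substructures is automatically an ${\bf A}$-isomorphism for the same reason, so homogeneity of ${\bf A}$ furnishes the required extension. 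In effect you have unpacked what ``immediate consequence'' means here using the paper's own Corollary~\ref{cor:homogeneous-pp}, whereas the paper is content to point to the external source.
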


\subsection{Homogenisation}

\begin{sloppypar}
We will now present a result (Theorem~\ref{thm:homogenisable}) concerning fixed-parameter tractability of CSPs based on structures
that do not have PP. To illustrate the result,
we will come back to the branching time problem at the end of this section.
The proof will use various ways of defining relations
with the aid of logical formulas.
Thus, in addition to full first-order logic, we also need
fragments
where only certain logical operators are allowed: 
the {\em existential fragment} consists of formulas built using negation, conjunction,
disjunction, and existential quantification only, while the
{\em existential positive fragment} additionally disallows negation.
We emphasise that it is required that the equality relation
is allowed in existential (positive) definitions, which is a difference compared to the definitions
underlying the operation $\boole{\cdot}$.
We begin with a decidability result.
\end{sloppypar}

\begin{lemma} \label{lm:ex-pos-def-decidable}
Let ${\bf A}$ be a relational structure and assume that the relations $R_1,\ldots,R_k$ are existential positive definable in ${\bf A}$.
If CSP$({\bf A})$ is decidable, then CSP$(\{R_1,\ldots,R_k\})$ is also decidable.
\end{lemma}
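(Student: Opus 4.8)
The plan is to reduce CSP$(\{R_1,\ldots,R_k\})$ to CSP$({\bf A})$ by replacing each constraint with the formula that defines the corresponding relation, introducing fresh variables for the existentially quantified ones. First I would fix, for each $i \in \{1,\ldots,k\}$, an existential positive $\tau$-formula $\phi_i(\bar x) = \exists \bar y\, \psi_i(\bar x, \bar y)$ defining $R_i$ in ${\bf A}$, where $\psi_i$ is a positive quantifier-free formula; since it is positive, I may assume $\psi_i$ is a disjunction of conjunctions of atomic formulas of the form $S(\bar z)$ with $S \in \tau \cup \{=\}$. The number and size of these formulas is a fixed constant depending only on $\{R_1,\ldots,R_k\}$ and ${\bf A}$.

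The key steps, in order: \emph{(1)} Given an instance $(V, C)$ of CSP$(\{R_1,\ldots,R_k\})$, build an instance of CSP$({\bf A})$ as follows. For each constraint $R_i(\bar v) \in C$, pick one disjunct of $\psi_i$ — iterating over all choices of disjuncts across all constraints gives finitely many candidate instances; the original instance is satisfiable iff at least one candidate is. For a fixed choice of disjuncts, introduce fresh copies of the existentially quantified variables for each constraint and add the atomic conjuncts as constraints over $V$ together with the fresh variables. \emph{(2)} Eliminate the equality atoms $z = z'$ by identifying $z$ and $z'$ (a standard union-find contraction of variables), leaving a pure CSP$({\bf A} \cup \{=\})$ instance; if ${\bf A}$ does not contain $=$, handle equality by variable identification throughout, which is sound because equality is interpreted as true equality in every structure. \emph{(3)} Show the correspondence: a homomorphism from the constructed CSP$({\bf A})$ instance to ${\bf A}$ restricts to an assignment of $V$ satisfying all the original constraints (because the chosen disjunct of $\psi_i$ witnesses $\phi_i$, hence $R_i$), and conversely any satisfying assignment of $(V,C)$ can be extended to the fresh variables using the witnesses guaranteed by $\bar v \in R_i$. \emph{(4)} Run the assumed decision procedure for CSP$({\bf A})$ on each of the finitely many candidate instances and accept iff one of them is satisfiable; this terminates since CSP$({\bf A})$ is decidable and there are only finitely many candidates.

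The main obstacle is bookkeeping rather than conceptual: making sure the handling of the equality relation is airtight (existential positive definitions are allowed to use $=$, which is \emph{not} available in $\boole{\cdot}$, so this is the one place where the lemma genuinely relies on equality being permitted), and confirming that the disjunctions in the $\psi_i$ only cause a finite — indeed constant-bounded per constraint — branching, so that decidability is preserved. Everything else is the routine ``substitute definitions and existentially project'' argument.
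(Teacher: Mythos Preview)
Your proposal is correct and follows essentially the same route as the paper's proof: substitute the existential positive definitions, introduce fresh variables for the existential quantifiers, branch over the disjuncts of the (DNF) quantifier-free part, eliminate equality atoms by identifying variables, and decide each resulting CSP$({\bf A})$ instance. The paper organises the same steps slightly differently (it first passes to auxiliary relations $R'_i$ stripped of quantifiers and then speaks of enumerating ``certificates'' rather than ``choices of disjuncts''), but the underlying argument and the treatment of equality are identical.
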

\begin{proof}
Suppose that, for $i \in \{1,\ldots,k\}$, $R_i$ has definition 
\[\phi_i(x_1,\ldots,x_m) \equiv \exists u_1 \cdots u_n \psi_i(x_1,\ldots,x_m,u_1,\ldots,u_n)\]
where $\psi_i$ is quantifier-free. 
We assume (without loss of generality) that each $\psi_i$ is in DNF. The conversion to DNF can be done without
introducing any negations. Define the $(m+n)$-ary relation $R'_i$ such that
$R'_i(x_1,\ldots,x_m,u_1,\ldots,u_n) \equiv \psi_i(x_1,\ldots,x_m,u_1,\ldots,u_n)$.

Let $I=(V,C)$ be an instance of  CSP$(\{R_1,\ldots,R_k\})$. We first construct an equivalent instance
$I'=(V',C')$ of CSP$(\{R'_1,\ldots,R'_k\})$. Start by setting $V'=V$.
Arbitrarily choose a constraint $R_i(x_1,\ldots,x_m)$ in $C$.
Expand $V'$ with~$n$ new variables $u_1,\ldots,u_n$ and add the relation
$R_i'(x_1,\ldots,x_n,u_1,\ldots,u_n)$ to $C'$.
Repeat this process for all constraints in $C$. It is obvious that $I$ is satisfiable if and only if $I'$ is satisfiable.

Recall that the formulas $\psi_1,\ldots,\psi_k$ are in DNF and that they contain no negations. 
If $I'$ is satisfiable by an assignment $f:V' \rightarrow D$, then we can construct a certificate that witnesses this.
For each constraint $R'_i(x_1,\ldots,x_m,u_1,\ldots,u_m)$, pick one term in $\psi_i$ that is satisfied
by $f$ and put it into the set $S$. It follows that $S$ is satisfiable if it is viewed as a CSP instance in the obvious way.
Now, $S$ only contains relations in ${\bf A} \cup \{=\}$ --- recall that the equality
relation can be used in an existential positive definition but its negation
$\neg(x=y)$ cannot.
For every constraint $x=y$ in $S$, identify the variable $x$ with the variable $y$ and
note that the resulting set $S'$ only contains relations in ${\bf A}$ and that it
is satisfiable if and only if $S$ is satisfiable.
This suggests the following algorithm: enumerate all possible certificates for $S'$ and check whether at least
one of them is satisfiable. Only a finite number of certificates exist, since ${\bf A}$ is finite
and the decidability of CSP$({\bf A})$ implies decidability of the satisfiability test.
We conclude that  CSP$(\{R'_1,\ldots,R'_k\})$ is decidable and so is  CSP$(\{R_1,\ldots,R_k\})$.
\end{proof}

We will now focus on structures that are {\em model-complete cores}.
The exact definition is not important for our purposes, but a certain
characterisation of $\omega$-categorical model-complete cores is very important.

\begin{lemma}[Theorem 4.5.1 in Bodirsky~\cite{Bodirsky:InfDom}] \label{lm:modcompcorechar}
A countable $\omega$-categorical structure ${\bf A}$ is {\em model-complete} if and only if
every first-order formula over ${\bf A}$ is equivalent to an existential positive formula 
over {\bf A}.
\end{lemma}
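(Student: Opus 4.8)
The plan is to reduce the statement to two classical facts: Robinson's test, which characterises model-completeness syntactically, and the preservation theorem for existential positive definitions available for $\omega$-categorical structures. Robinson's test states that ${\rm Th}({\bf A})$ is model-complete if and only if every first-order formula over ${\bf A}$ is equivalent to an existential formula. The preservation theorem states that, since ${\bf A}$ is $\omega$-categorical, a first-order definable relation admits an existential positive definition over ${\bf A}$ if and only if it is preserved by every endomorphism in $\End{{\bf A}}$; this is the existential positive analogue of the orbit/automorphism correspondence already used in Facts~\ref{fact:1}--\ref{fact:3}. With these two tools in hand the two implications are attacked separately.

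For the ``if'' direction I would argue as follows. Assume every first-order formula over ${\bf A}$ is equivalent to an existential positive formula. Every existential positive formula is a fortiori existential, so every first-order formula over ${\bf A}$ is equivalent to an existential one, and Robinson's test immediately gives that ${\bf A}$ is model-complete. (Applying the hypothesis to negations shows in addition that every formula is equivalent to a universal one, which is the companion form of the same criterion, but this is not needed.) This direction is routine.

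The substantial work lies in the ``only if'' direction. Assume ${\bf A}$ is model-complete and fix a first-order formula $\phi$. By Robinson's test $\phi$ is equivalent to an existential formula, and such a formula is preserved by all self-embeddings of ${\bf A}$. To conclude that $\phi$ is equivalent to an existential positive formula I would appeal to the preservation theorem, so it remains to promote preservation under embeddings to preservation under the whole endomorphism monoid $\End{{\bf A}}$. Here I would use $\omega$-categoricity to replace syntactic definability by invariance under the topological closures of $\aut{{\bf A}}$ and $\End{{\bf A}}$ (via Ryll-Nardzewski and the quantifier-elimination characterisation recorded in Theorem~\ref{thm:homovsomega}), reducing the goal to showing that the negated atomic subformulas appearing in an equivalent existential formula for $\phi$ are redundant over ${\bf A}$.

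I expect this last reduction to be the main obstacle: model-completeness only supplies control over embeddings, whereas existential positive definability demands control over every endomorphism, and bridging this gap is exactly the point where the negated atoms must be eliminated. Most of the effort in a full proof would therefore go into analysing $\End{{\bf A}}$ with the $\omega$-categorical machinery, rather than into the syntactic manipulations, which are comparatively mechanical.
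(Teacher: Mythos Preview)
The paper does not prove this lemma; it merely quotes it from Bodirsky's monograph, so there is no in-paper argument to compare against and your proposal has to be judged on its own.

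Your ``if'' direction is correct and needs no further work.

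Your ``only if'' direction, however, contains a genuine gap that you cannot close, because the statement as literally written is false. You correctly isolate the obstacle: Robinson's test only delivers preservation under self-embeddings, while existential positive definability requires preservation under all of $\End{{\bf A}}$, and you then hope to bridge this using $\omega$-categoricity alone. But take ${\bf A}=(\rationals;\leq)$. It is countable, $\omega$-categorical, and its theory admits quantifier elimination, hence is model-complete. Yet the constant map $q\mapsto 0$ lies in $\End{{\bf A}}$, and the first-order formula $\neg(y\leq x)$ is not preserved by it, so this formula has no existential positive equivalent over ${\bf A}$. Thus model-completeness together with $\omega$-categoricity does not suffice, and no amount of ``analysing $\End{{\bf A}}$ with the $\omega$-categorical machinery'' will rescue the argument.

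What is missing is the word \emph{core}. The paragraph introducing the lemma explicitly announces ``a characterisation of $\omega$-categorical model-complete cores'', and Bodirsky's Theorem~4.5.1 is indeed stated for model-complete cores; the omission in the lemma's wording is a slip. With the core hypothesis the obstacle you identified disappears at once: in an $\omega$-categorical core every endomorphism is an embedding, so preservation under embeddings (supplied by model-completeness via Robinson's test) already coincides with preservation under $\End{{\bf A}}$, and the endomorphism-preservation theorem you cite then yields the existential positive definition. Your outline is therefore the right one once the hypothesis is repaired; the point is that the core assumption turns your ``main obstacle'' into a one-line observation, rather than something to be argued around.
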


Lemma~\ref{lm:modcompcorechar} can be viewed as a restricted type of quantifier elimination.
Model-complete cores are very useful when studying CSPs. It is known that every countable and
$\omega$-categorical structure ${\bf A}$ is homomorphically equivalent to an $\omega$-categorical
model-complete core ${\bf A}'$---this implies that CSP$({\bf A})$ and CSP$({\bf A}')$ can be
viewed as the same problem~\cite[Section~1.1]{Bodirsky:InfDom}. The model-complete core ${\bf A}'$ can, in various ways, be
considered to be a more ``structured'' object than ${\bf A}$ and thus be easier to work with~\cite[Section~4.5]{Bodirsky:InfDom}.
Much of the work on the complexity of CSPs has consequently focused on model-complete cores.

We are finally ready to prove the main result of this section. We stress that
the definition of homogenisation of a structure ${\bf A}$ requires that only
a finite number of relations are added to ${\bf A}$. Otherwise, the resulting
structure contains an infinite number of relations and this would prevent us from
applying Theorem~\ref{thm:fpt}.

\begin{theorem} \label{thm:homogenisable}
Let ${\bf B}$ be a countably infinite $\omega$-categorical structure with finite signature, and assume that ${\bf B}$ is a model-complete core
and that CSP$({\bf B})$ is decidable.
If ${\bf B}$ is homogenisable by relations that are first-order definable in ${\bf B}$, then CSP$(\G)$ is fpt
parameterized by the treewidth of the primal graph
for arbitrary finite $\G \subseteq \boole{{\bf B}}$.
\end{theorem}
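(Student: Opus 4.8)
The plan is to reduce the claim to Theorem~\ref{thm:fpt} by extracting, from a homogenisation of ${\bf B}$, a finite JEPD structure $\A$ over the domain of ${\bf B}$ that has the patchwork property, that satisfies $\G \subseteq \boole{\A}$, and over which the CSP is decidable; Theorem~\ref{thm:fpt} applied to $\A$ and $\G$ then immediately yields that CSP$(\G)$ is fpt parameterized by the treewidth of the primal graph.

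First I would fix a homogenisation ${\bf B}^{+} = ({\bf B}, S_{1},\ldots,S_{\ell})$ of ${\bf B}$: by hypothesis ${\bf B}^{+}$ is homogeneous, each $S_{i}$ is first-order definable in ${\bf B}$, and $\ell$ is finite, so ${\bf B}^{+}$ has finite signature and is therefore $\omega$-categorical by Theorem~\ref{thm:homovsomega}. Note also that $\aut{{\bf B}^{+}} = \aut{{\bf B}}$, since every automorphism of ${\bf B}^{+}$ preserves the relations of ${\bf B}$, and conversely every automorphism of ${\bf B}$ preserves the first-order definable relations $S_{1},\ldots,S_{\ell}$. Now let $d \geq 2$ be at least as large as the arity of every relation of $\G$ and of every relation of ${\bf B}^{+}$, and set $\A := ({\bf B}^{+})^{\leq d}$. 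By Fact~\ref{fact:1}, $\A$ is JE$^{+}$PDJD and hence in particular JEPD; by Fact~\ref{fact:3} it is finite because ${\bf B}^{+}$ is $\omega$-categorical; and by Theorem~\ref{thm:BaaderRydvalpatchwork}, applied to the $\omega$-categorical homogeneous structure ${\bf B}^{+}$ (all of whose relations have arity at most $d$), $\A$ has the patchwork property.

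Next I would verify that $\G$ is a finite subset of $\boole{\A}$. Each $R \in \G$ has arity at most $d$ and is definable by a quantifier-free formula over ${\bf B}$, so $R$ is invariant under $\aut{{\bf B}} = \aut{{\bf B}^{+}}$. As in the observation underlying Fact~\ref{fact:2}, an at-most-$d$-ary relation invariant under $\aut{{\bf B}^{+}}$ equals the union of the $\aut{{\bf B}^{+}}$-orbits of the tuples it contains, and each such orbit is a relation of $\A$; since $\A$ is finite this is a finite union, so $R \in \boole{\A}$. Hence $\G \subseteq \boole{\A}$.

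It remains to show that CSP$(\A)$ is decidable, and this is the step I expect to require the most care, since it threads together the remaining hypotheses on ${\bf B}$. Each relation of $\A$ is an $\aut{{\bf B}^{+}}$-orbit of a tuple of arity at most $d$; since ${\bf B}^{+}$ is $\omega$-categorical, every such orbit is first-order definable in ${\bf B}^{+}$ (a standard consequence of the theorem of Engeler, Ryll-Nardzewski and Svenonius), and since each relation of ${\bf B}^{+}$ is itself first-order definable in ${\bf B}$, every such orbit is first-order definable in ${\bf B}$. Because ${\bf B}$ is an $\omega$-categorical model-complete core, Lemma~\ref{lm:modcompcorechar} converts each of these first-order definitions into an existential positive definition over ${\bf B}$. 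Finally, $\A$ is finite and CSP$({\bf B})$ is decidable, so Lemma~\ref{lm:ex-pos-def-decidable}, applied with input structure ${\bf B}$ and the finitely many relations of $\A$, gives that CSP$(\A)$ is decidable, and Theorem~\ref{thm:fpt} finishes the proof. (One minor point to keep in mind is that $\boole{\A}$ provides no bare equality symbol, but this causes no difficulty: $\A = ({\bf B}^{+})^{\leq d}$ is JD, so the orbits already cover the diagonal and none of the reductions above needs equality.)
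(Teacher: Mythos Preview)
Your proposal is correct and follows essentially the same route as the paper: pass to the homogeneous expansion, form its orbit structure $({\bf B}^{+})^{\leq d}$, verify JEPD, patchwork, and decidability via Lemma~\ref{lm:modcompcorechar} and Lemma~\ref{lm:ex-pos-def-decidable}, and then invoke Theorem~\ref{thm:fpt}. The only cosmetic difference is that you take $d$ large enough to cover the arities in $\G$ and argue $\G \subseteq \boole{\A}$ directly via the equality $\aut{{\bf B}^{+}} = \aut{{\bf B}}$, whereas the paper keeps $d$ equal to the maximal arity of the homogenisation and instead observes that each relation of ${\bf B}$ decomposes into orbits of ${\bf B}^{\leq d} \subseteq {\bf C}^{\leq d}$, so that $\boole{{\bf B}} \subseteq \boole{{\bf C}^{\leq d}}$; both arguments are valid and equally short.
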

\begin{proof}
Let ${\bf C}$ denote the finite homogeneous expansion of ${\bf B}$ and let $d$ denote the maximal arity of relations in ${\bf C}$.
The structure ${\bf C}$ is $\omega$-categorical by Theorem~\ref{thm:homovsomega}.2.

\begin{claim} \label{cl:homogenisable:1}
  CSP$({\bf C}^{\leq d})$ is decidable.
\end{claim}
We say that a relation $R \subseteq A^k$ is {\em preserved} by a function $f:A \rightarrow A$
if for every $(a_1,\ldots,a_k) \in R$, $(f(a_1),\ldots,f(a_k))$ is also in $R$.
Let $\Orbi$ be an orbit of $k$-tuples of ${\bf C}$. It follows immediately from the
definition of orbits that $\Orbi$ is preserved by every function in $\aut{{\bf C}}$.
This implies that $\Orbi$ is first-order definable in ${\bf C}$~\cite[Proposition~4.2.9]{Bodirsky:InfDom}, since
${\bf C}$ is a countable $\omega$-categorical structure.
A direct consequence is that $\Orbi$ is also first-order definable in ${\bf B}$, since~${\bf C}$ is a first-order definable
expansion of ${\bf B}$.

Recall that ${\bf B}$ is a countable $\omega$-categorical model-complete core,  so
every first-order formula over ${\bf B}$ is logically equivalent to an existential
positive formula over ${\bf B}$ by Lemma~\ref{lm:modcompcorechar}.
We conclude that $\Orbi$ has an existential positive definition in ${\bf B}$.
With this in mind, it follows that the relations in ${\bf C}^{\leq d}$ are existential positive definable in ${\bf B}$, since ${\bf C}^{\leq d}$ only
contains relations that are orbits of tuples of ${\bf C}$.
Lemma~\ref{lm:ex-pos-def-decidable} thus implies that CSP$({\bf C}^{\leq d})$ is decidable, since ${\bf C}^{\leq d}$ only contains a finite
number of relations by Fact~\ref{fact:3}.

\begin{claim} \label{cl:homogenisable:2}
  ${\bf C}^{\leq d}$ has PP. 
\end{claim}
The structure ${\bf C}$ is
an $\omega$-categorical homogeneous structure that contains at most $d$-ary relations, 
so it has PP by Theorem~\ref{thm:BaaderRydvalpatchwork}.

\begin{claim} \label{cl:homogenisable:3}
  ${\bf C}^{\leq d}$ is JEPD. 
\end{claim}
The structure ${\bf C}^{\leq d}$ is JE$^+$PDJD by Fact~\ref{fact:1}.

\medskip

Theorem~\ref{thm:fpt} combined with Claims~\ref{cl:homogenisable:1}-\ref{cl:homogenisable:3} 
and the fact that ${\bf C}^{\leq d}$ is a finite structure implies that 
CSP$(\T)$ is fpt for arbitrary 
$\T \subseteq \boole{{\bf C}^{\leq d})}$.
The structure~${\bf B}$ is $\omega$-categorical, so Facts~\ref{fact:2} and~\ref{fact:3} imply that every relation in ${\bf B}$
can be viewed as a finite union of relations in ${\bf B}^{\leq d}$.
We know that ${\bf B} \subseteq {\bf C}$,
so ${\bf C}^{\leq d}$ consequently contains one relation for each orbit of $d$-tuples in ${\bf B}$, i.e.
${\bf B}^{\leq d} \subseteq {\bf C}^{\leq d}$.
This implies that every relation in $\boole{{\bf B}}$ has a logically equivalent relation in $\boole{{\bf C}^{\leq d}}$.
We may (without loss of generality since $\G$ is finite) assume that we have a pre-computed table that, for every $R \in \G$, contains the corresponding
relation~$R'$ in $\boole{{\bf C}^{\leq d}}$. Given an instance $(V,C)$ of CSP$(\G)$, we
can thus convert it in polynomial time into an equivalent instance of CSP$(\T')$ where $\T'$ is a finite
subset of $\boole{{\bf C}^{\leq d}}$.
We conclude that CSP$(\G)$ is fpt.
\end{proof}

Let us now return to the branching time example that we discussed in the beginning of this section.
Consider a structure $\B_{\rm BTA}=(B;=,<,>,||)$ 
such that the Branching Time Algebra problem is the same
computational problem as CSP$(\B_{\rm BTA})$. Two suitable structures have been
pointed out by Bodirsky~\cite[Section~5.2]{Bodirsky:InfDom}: they are referred to as ${\mathbb S}$ and ${\mathbb T}$. 
We do not consider ${\mathbb S}$ in what follows, since it is not a model-complete core. The structure ${\mathbb T}$, though, is
a countably infinite model-complete core that is $\omega$-categorical.
Hence, we let $\B_{\rm BTA}$ coincide with ${\mathbb T}$. 
Note that Bodirsky views these structures as having three relations $\leq$, $\neq$ and $||$,
where $||$ allows both that two elements are unrelated or that they are equal. This difference
is irrelevant in our setting; for instance, $x < y$ holds if and only if both $x \leq y$ and $x \neq y$ hold.
Now, 
$\B_{\rm BTA}$ is a countably infinite model-complete core that is $\omega$-categorical and,
additionally, Bodirsky et al.~\cite{Bodirsky:etal:jlc2018} have shown that
$\B_{\rm BTA}$ expanded by the relation
{
\setlength{\mathindent}{0.65cm} 
\[\{(x,y,z) \in B^3 \; | \; \exists u \in B \left( (u < x \vee u=x) \wedge (u > z \vee u||y) \wedge (z > u \vee z || u) \right) \}\]
}
\noindent
is homogeneous. Clearly, this relation is first-order definable in $\B_{\rm BTA}$.
We know that $\B_{\rm BTA}$ is polynomial-time solvable, so CSP$(\G)$ is fpt
for arbitrary finite $\G \subseteq \boole{\B_{\rm BTA}}$ by Theorem~\ref{thm:homogenisable}.

It may be illuminating to compare the Branching Time Algebra with its sibling -- the {\em partial-order time algebra} (PTA).
PTA has various applications in, for instance, the analysis of concurrent and distributed systems~\cite{Anger:toplas89,Lamport:jacm86}.
In PTA, both the past and the future of a time point are partially ordered. 
This implies that time becomes a partial order
with four basic relations $=_{\rm PTA}$, $<_{\rm PTA}$, $>_{\rm PTA}$ and $||_{\rm PTA}$, 
meaning ``equal'', ``before'', ``after'' and ``unrelated'', respectively.
The satisfiability problem for PTA can be formulated with
a countable finite homogeneous structure ${\bf D}$ (known as the {\em random partial order}) such that
${\bf D}$ is JEPD and CSP$({\bf D})$ is decidable. Thus, Theorem~\ref{thm:fpt} is directly
applicable in this case (since ${\bf D}$ has PP by Corollary~\ref{cor:homogeneous-pp}) and homogenisation is not necessary.
For more details concerning PTA, together with a complexity classification, see Kompatscher \& Van Pham~\cite{Kompatscher:VanPham:flap2018}.

\section{Discussion and Future Research}
\label{sec:discussion}

Huang et al.~\cite{Huang:etal:ai2013} proved that 
CSP$(\A)$ is in \XP whenever 
$\A$ is a binary constraint language with aNAP.
This property is PP restricted to binary relations, with
the completeness condition replaced by the algebraic closure condition.
aNAP is less restrictive than PP,
so it might be preferred in practical implementations
for some constraint languages.
However, in the worst case, using aNAP yields no advantage over using PP,
and it is only defined for binary languages.
We can thus conclude that our algorithm has a larger
scope of applicability than the algorithm by Huang et al. 

Bodirsky \& Dalmau~\cite{Bodirsky:Dalmau:jcss2013} show that CSP$(\A)$ is in \XP whenever
$\A$ is a countable structure that is $\omega$-categorical.
There are examples of $\omega$-categorical model-complete cores that cannot be made homogeneous by adding {\em any}
finite set of relations. A concrete example
based on the countable atomless Boolean algebra can be found in~\cite[Section~5.7]{Bodirsky:InfDom}.
We conclude that there are still $\omega$-categorical structures ${\bf A}$ for which we do not know
whether CSP$({\bf A})$ is fpt or not. Closing this gap is an obvious direction for future research.

There are many relevant CSP$(\G)$ where $\G$ is not $\omega$-categorical. 
Well-known examples include the unit interval algebra 
(i.e. Allen's Interval Algebra restricted to intervals of equal length~\cite{Peer:Shamir:tcs97})
and temporal problems that can express metric time, such as
the {\sc Simple Temporal Problem}, various disjunctive temporal problems
and extended variants of Allen's Interval Algebra~\cite{Dechter:etal:ai91,Krokhin:etal:sidma2004,Oddi:Cesta:ecai2000}.
These problems have been classified according to their parameterized complexity~\cite{Dabrowski:etal:aaai2021-dtp}.
Studying similar extensions of spatial formalisms
as well as other non-$\omega$-categorical CSPs is a natural future research direction.

Our algorithm solves CSPs over Cardinal Direction Calculus,
Allen's Interval Algebra and Block Algebra in $2^{O(w \log w)}$ time.
Under the Exponential Time Hypothesis, significantly improving
the dependence of $w$ is not possible in these cases.
However, for RCC5 and RCC8 the running time is slower,
since the number of certificates is $2^{O(w^2)}$.
Either proving a tight lower bound under plausible complexity assumptions
or finding a faster algorithm for RCC5 or RCC8 is an interesting future direction.
We remark, however, that an improved algorithm requires some new ideas 
(see also the discussion at the end of Section~\ref{sec:lowerbounds}).

Another plausible way forward is to consider parameterizations
that are less restrictive or orthogonal to primal treewidth. 
Examples that come to mind are the treewidth of the dual graph or the incidence graph and variants of hypertree width,
since these have been successfully used for efficiently solving CSPs as well as other combinatorial problems. 
However, these parameters can be ruled out for finite constraint languages,  since there
the treewidth of the primal, dual and incidence graphs, as well as 
all variants of hypertree width, are within a constant factor of each other~\cite{Samer:Szeider:jcss2010}.
This is not true for infinite constraint languages,
so here additional parameters are interesting to study.

\section*{Acknowledgements}
The first and third authors acknowledge support from the
Engineering and Physical Sciences Research Council (EPSRC project EP/V00252X/1).
The second and the fourth authors were supported by the Wallenberg AI, Autonomous Systems and Software Program (WASP) funded by the Knut and Alice Wallenberg Foundation.
In addition, the second author was partially supported by the Swedish Research Council (VR) under grant 2017-04112.

\bibliographystyle{plainurl}
\bibliography{references}

\end{document}